\documentclass[a4j]{article}
\usepackage{authblk}
\usepackage[T1]{fontenc}
\usepackage{lmodern}
\usepackage{textcomp}
\usepackage{latexsym}
\usepackage{comment}
\usepackage{amsmath}
\usepackage{amssymb}
\usepackage{amsfonts}
\usepackage{mathrsfs} 
\usepackage{amsthm}
\usepackage{listings}
\usepackage[dvipdfmx]{graphicx}
\usepackage[subrefformat=parens]{subcaption}
\usepackage[top=30truemm,bottom=30truemm,left=25truemm,right=25truemm]{geometry}
\usepackage{here}
\newtheorem{thm}{Theorem}[section]
\newtheorem{defi}{Definition}[section]
\newtheorem{lem}{Lemma }[section]
\newtheorem{cor}{Corollary }[section]

\newtheorem{prop}{Proposition}[section]

\title{Bayesian Generalization Error in Linear Neural Networks with Concept Bottleneck Structure and Multitask Formulation}
\author[1]{Naoki Hayashi\thanks{naoki.hayashi05@aisin.co.jp}}
\author[1]{Yoshihide Sawada\thanks{yoshihide.sawada@aisin.co.jp}}
\affil[1]{Tokyo Research Center, Aisin Corporation}
\date{2023/03/16}

\makeatletter

\makeatother

\begin{document}
 \columnseprule=0.3mm
\maketitle
\begin{abstract}
Concept bottleneck model (CBM) is a ubiquitous method that can interpret neural networks using concepts.
In CBM, concepts are inserted between the output layer and the last intermediate layer as observable values.
This helps in understanding the reason behind the outputs generated by the neural networks:
the weights corresponding to the concepts from the last hidden layer to the output layer.
However, it has not yet been possible to understand the behavior of the generalization error in CBM
since a neural network is a singular statistical model in general.
When the model is singular, a one to one map from the parameters to probability distributions cannot be created.
This non-identifiability makes it difficult to analyze the generalization performance.
In this study, we mathematically clarify the Bayesian generalization error and free energy of CBM when its architecture is three-layered linear neural networks.
We also consider a multitask problem where the neural network outputs not only the original output but also the concepts.
The results show that CBM drastically changes the behavior of the parameter region and the Bayesian generalization error in three-layered linear neural networks as compared with the standard version,
whereas the multitask formulation does not.
\end{abstract}

\section{Introduction}
\label{sec:Intro}
Artificial neural networks have been advancing and widely applied in many fields
since the time when multi-layer perceptrons first emerged \cite{GoodfellowDLBook2016, Dong2021DLSurvey}.
However, since most neural networks are black-boxes, interpreting the outputs of neural networks is necessary.
Hence, various procedures have been proposed to improve output interpretability \cite{Molnar2020IMLBook}.
One of the network architectures used to explain the behaviors of neural networks is the concept bottleneck model (CBM) \cite{Kumar2009CBM, Lampert2009CBM, Koh2020CBM20a}.
CBM has a novel structure, called a concept bottleneck structure,
where the concepts are inserted between the output layer,
and the last intermediate layer as observed values and the last connection from the concepts to the output is linear.
Thus, humans are expected to be able to interpret the weights of the last connection as the effect of the specified concept to the output,
similar to coefficients of linear regression.
For instance, following \cite{Koh2020CBM20a}, when we predict the knee arthritis grades of patients by using x-ray images and a CBM,
we set the concepts as clinical findings corrected by medical doctors,
and thereby understand how clinical findings affect the predicted grades based on the correlation,
by observing the learned weights in the last connection.
Concept-based interpretation is used in knowledge discovery for chess \cite{McGrath2022AcquisChessAlphaZero}, video representation \cite{Qian2022StatDynamCBMVideoRep}, medical imaging \cite{Hu2022XMIR}, clinical risk prediction \cite{Aniruddh2021MLClinicalRiskPred}, computer aided diagnosis \cite{Klimiene2022MultiviewCBM}, and other healthcare domain problems \cite{Chen2021EthicalMLSurveyHealthcare}.
CBM is a significant foundation for these applications, and advanced methods \cite{Sawada2022CBMAUC, Qian2022StatDynamCBMVideoRep, Klimiene2022MultiviewCBM} have been proposed based on CBM.
Hence, it is important to clarify the theoretical behavior of CBM.

Multitask formulation (Multitask) \cite{Xu2020MultiTaskConcept} is also needed to clarify the performance difference as compared to CBM
because the former can output a vector concatenating the original output and concepts instead of inserting concepts into the intermediate layer;
CBM and Multitask use similar types of data (inputs, concepts, and outputs).
Their interpretations are as well;
CBM obtains explanation based on regression between concepts and outputs,
and Multitask obtains these from their co-occurrence.

Although some limitations of CBM have been investigated \cite{Mahinpei2021BBCLM, Margeloiu2021CBMIntended, Lockhart2022CBMLeakage},
its generalization error has not yet been clarified except for a simple analysis
that was conducted using the least squares method of a three-layered linear and independent CBM in \cite{Koh2020CBM20a}.
That of Multitask also remains unknown.
This is because, in general, neural networks are non-identifiable, i.e. it is impossible to map from the parameter to the probabilistic distribution which represents the model.
One calls such a model a singular statistical model \cite{SWatanabeBookE, SWatanabeBookMath},
since any normal distribution cannot approximate its likelihood and posterior distribution and its Fisher information matrix has non-positive eigenvalues.
For singular statistical models, it has been proved that Bayesian inference is a better learning method than maximum likelihood or posterior estimation in terms of generalization performance \cite{SWatanabeBookE, SWatanabeBookMath}.
In the following, we therefore mainly consider Bayesian inference.

A regular statistical model is defined as having parameters that are injective to probability density functions.
This situation is stated as the one in which the model is regular.
Otherwise, the model is singular.
Let $d$ be the parameter dimension and $n$ be the sample size.
In a regular statistical model, its expected generalization error is asymptotically equal to $d/2n + o(1/n)$, where the generalization error is the Kullback-Leibler divergence from the data-generating distribution to the predictive distribution \cite{AkaikeAICconf, Akaike1974AIC, AkaikeAIC}.
Moreover, its negative log marginal likelihood (a.k.a. free energy) has asymptotic expansion represented by $nS_n + (d/2) \log n + O_p(1)$, where $S_n$ is the empirical entropy \cite{Schwarz1978BIC}.
In general case, i.e. the case models can be singular, Watanabe had proved that the asymptotic form of its generalization error $G_n$ and free energy $F_n$ are the followings \cite{Watanabe1,Watanabe2,SWatanabeBookE}:
\begin{align}
\label{thm-watanabeG}
\mathbb{E}_{n}[G_n] &= \frac{\lambda}{n} - \frac{m-1}{n \log n} + o\left(\frac{1}{n \log n} \right), \\
\label{thm-watanabeF}
F_n &= nS_n + \lambda \log n - (m-1) \log\log n + O_p(1),
\end{align}
where $\lambda$ is a positive rational number, $m$ is a positive integer, and $\mathbb{E}_{n}[\cdot]$ is an expectation operator on the overall dataset, respectively.
The constant $\lambda$ is called a learning coefficient since it is dominant in the leading term of (\ref{thm-watanabeG}) and (\ref{thm-watanabeF}),
which represent the $\mathbb{E}_{n}[G_n]$-$n$ and $F_n$-$n$ learning curves.
The above forms hold not only in the case where the model is regular but also in the case where the model is singular.
The generalization loss is defined by the cross entropy between the data-generating distribution and the statistical model and equal to $S+G_n$,
where $S$ is the entropy of the data-generating distribution.
Watanabe developed this theory and proposed two model-evaluation methods, WAIC \cite{WatanabeAIC} and WBIC \cite{WatanabeBIC},
which can estimate $S+G_n$ and $F_n$ of regular and singular models from the model and data, respectively.

Let $K:\mathcal{W}\to \mathbb{R}$, $w \mapsto K(w)$ be the Kullback-Leibler (KL) divergence between the data-generating distribution to the statistical model,
where $\mathcal{W} \subset \mathbb{R}^d$ is the parameter set and $w \in \mathcal{W}$.
Assume that $\mathcal{W}$ is a sufficiently large compact set, $K(w)$ is analytic, and its range is non-negative.
The constants $\lambda$ and $m$ are characterized by singularities in the set of the zero points of the KL divergence: $K^{-1}(0)$, which is an analytic set (a.k.a. algebraic variety).
They can be calculated by the resolution of singularities \cite{Hironaka, Atiyah1970resolution};
$\lambda$ is called a real log canonical threshold (RLCT) and $m$ is called a multiplicity in algebraic geometry.
Note that they are birational invariants; $\lambda$ and $m$ do not depend on how singularities can be resolved.
Here, suppose the prior is positive and bounded on $K^{-1}(0)$.
In the regular case, we can derive $\lambda=d/2$ and $m=1$.
Besides, the predictive distribution can not only be Bayesian posterior predictive but also the model whose parameters are maximum likelihood or posterior estimator.
However, in the singular case, the RLCT $\lambda$ and the multiplicity $m$ depend on the model,
and Bayesian inference is significantly different from the maximum likelihood or posterior estimation.
These situations can occur in both CBM and Multitask.

Determining RLCTs is important for estimating the sufficient sample size, constructing procedures of learning, and selecting models.
In fact, RLCTs of many singular models have been studied:
mixture models \cite{Yamazaki1, Yamazaki2004BinMixRLCT, SatoK2019PMM, Matsuda1-e, WatanabeT2022MultiMixRLCT},
Boltzmann machines \cite{Yamazaki4,Aoyagi2,Aoyagi3},
non-negative matrix factorization \cite{nhayashi2,nhayashi5,nhayashi8},
latent class analysis \cite{Drton2009LCARLCT},
latent Dirichlet allocation \cite{nhayashi7, nhayashi9},
naive Bayesian networks \cite{Rusakov2005asymptotic},
Bayesian networks \cite{Yamazaki3},
Markov models \cite{Zwiernik2011asymptotic},
hidden Markov models \cite{Yamazaki2},
linear dynamical systems \cite{Naito2014KFRLCT},
Gaussian latent tree and forest models \cite{Drton2017forest},
and three-layered neural networks whose activation function is linear \cite{Aoyagi1}, analytic-odd (like $\tanh$) \cite{Watanabe2}, and Swish \cite{Tanaka2020SwishNNRLCT}.
Additionally, a model selection method called sBIC, which uses RLCTs of statistical models, was proposed by Drton and Plummer \cite{Drton}.
Furthermore, Drton and Imai empirically demonstrated that sBIC is more precise than WBIC in terms of selecting the correct model when the RLCTs are precisely clarified or their tight bounds are available \cite{Drton, Drton2017forest, Imai2019estimating}.
In addition, Imai proposed an estimating method for RLCTs from data and model and extended sBIC \cite{Imai2019estimating}.
Other application of RLCTs is a design procedure for exchange probability in the exchange Monte Carlo method proposed by Nagata \cite{Nagata2008asymptotic}.

The RLCT of a neural network without the concept bottleneck structure, called Standard in \cite{Koh2020CBM20a},
is exactly clarified in \cite{Aoyagi1} in the case of a three-layered linear neural network.
However, the RLCTs of CBM and Multitask remain unknown.
In other words, even if the model structure is three-layered linear, Bayesian generalization errors and marginal likelihoods in CBM and Multitask have not yet been clarified.
Furthermore, since we treat models in the interpretable machine learning field, our result suggests
that singular learning theory is useful for establishing a foundation of responsible artificial/computational intelligence.
This is because interpretation methods can often be referred to restrictions of the parameter space for the model.
Parameter constrain might have to change whether the model is regular/singular \cite{DrtonBook2008LecAlgStat}.
There are perspectives focusing parameter restriction to analyze singular statistical models \cite{DrtonBook2008LecAlgStat, NHayashiPhDThesis}.
For example, in non-negative matrix factorization (NMF) \cite{Paatero, Lee, Cemgil}, parameters are elements of factorized matrices and they are restricted to non-negative regions for improvement interpretablity of factorization result,
like purchase factors of customers and degrees of interests for each product gotten from item-user tables \cite{Kohjima, Kohjima2016NMFreview}.
If they could be negative, owing to canceling positive/negative elements, estimating the popularity of products and potential demand of customers would become difficult.
The Bayesian generalization error and the free energy of NMF differently behave from non-restricted matrix factorization \cite{nhayashi2,nhayashi5,nhayashi8, Aoyagi1}.
Therefore, restriction to parameter space on account of interpretation methods essentially affects learning behavior of the model and they can be analyzed by studying singular learning theory.

In this study, we mathematically derive the exact asymptotic forms of the Bayesian generalization error and the free energy by finding the RLCT of the neural network with CBM
if the structure is three-layered linear.
We also clarify the RLCT of Multitask in that case and compare their theoretical behaviors of CBM with that of Multitask and Standard.
The rest of this paper has four parts.
In section \ref{sec:BayesFrame}, we describe the framework of Bayesian inference and how to validate the model if the data-generating distribution is not known.
In section \ref{sec:Main}, we state Main Theorems.
In section \ref{sec:Expansion}, we expand Main Theorems for categorical variables.
In section \ref{sec:Discuss}, we discuss about this theoretical result.
In section \ref{sec:Conclusion}, we conclude this paper.
Besides, there are two appendices.
In \ref{sec:Singular}, we explain a mathematical theory of Bayesian inference when the data-generating distribution is unknown.
This theory is the foundation of our study and is called singular learning theory.
In \ref{sec:Proof}, we prove Main Theorems, their expanded results, and a proposition for comparison of the RLCT of CBM and Multitask.

\section{Framework of Bayesian Inference}
\label{sec:BayesFrame}
Let $X^n = (X_1, \ldots, X_n)$ be a collection of random variables of $n$ independent and identically distributed from a data generating distribution.
The function value of $X_i$ is in $\mathcal{X}$ which is a subset of a finite-dimensional real Euclidean or discrete space.
In this article, the collection $X^n$ is called the dataset or the sample and its element $X_i$ is called the ($i$-th) data.
Besides, let $q: \mathcal{X} \to \mathbb{R}$, $x \mapsto q(x)$, $p(\cdot|w): \mathcal{X} \to \mathbb{R}$, $x \mapsto p(x|w)$, and $\varphi: \mathcal{W} \to \mathbb{R}$, $w \mapsto \varphi(w)$
be the probability densities of a data-generating distribution, a statistical model, and a prior distribution, respectively.
Note that the parameter $w$ and its set $\mathcal{W}$ are defined as in the above section.

We define a posterior distribution as the distribution whose density is the following function on $\mathcal{W}$:
\begin{equation}
\label{def-posterior}
\varphi^*(w|X^n) = \frac{1}{Z_n}\varphi(w) \prod_{i=1}^n p(X_i|w),
\end{equation}
where $Z_n$ is a normalizing constant used to satisfy the condition $\int \varphi^*(w|X^n)dw=1$:
\begin{equation}
\label{def-marginallikelihood}
Z_n =  \int dw \varphi(w) \prod_{i=1}^n p(X_i|w).
\end{equation}
This is called a marginal likelihood or a partition function. Its negative log value is called a free energy $F_n = -\log Z_n$.
Note that the marginal likelihood is a probability density function of a dataset. The free energy appears in a leading term of the difference between the data-generating distribution and the model in the sense of dataset generating process.
Furthermore, a predictive distribution is defined by the following density function on $\mathcal{X}$:
\begin{equation}
\label{def-predictive}
p^*(x | X^n ) = \int dw \varphi^*(w|X^n) p(x|w).
\end{equation}
This is a probability distribution of a new data.
It is also important for statistics and machine learning to evaluate the dissimilarity between the true and the model in the sense of a new data generating process.

Here, we explain the evaluation criteria for Bayesian inference.
The KL divergence between the data-generating distribution to the statistical model is denoted by
\begin{equation}
\label{def-aveerror}
K(w) = \int dx q(x) \log \frac{q(x)}{p(x|w)}.
\end{equation}
As technical assumptions,
we suppose the parameter set $\mathcal{W} \subset \mathbb{R}^d$ is sufficiently wide and compact and the prior is positive and bounded on
\begin{equation}
\label{algvariety}
K^{-1}(0) := \{ w \in \mathcal{W} \mid K(w)=0 \},
\end{equation}
i.e. $0<\varphi(w)<\infty$ for any $w \in K^{-1}(0)$. 
In addition, we assume that $\varphi(w)$ is a $C^{\infty}$-function on $\mathcal{W}$ and $K(w)$ is an analytic function on $\mathcal{W}$.
An entropy of $q(x)$ and an empirical one are denoted by
\begin{align}
\label{def-entropy}
S &= -\int dx q(x) \log q(x), \\
S_n &= -\frac{1}{n} \sum_{i=1}^n \log q(X_i).
\end{align}
By definition, $X_i \sim q(x)$ and $X^n \sim \prod_{i=1}^n q(x_i)$ hold; thus let $\mathbb{E}_{n}[\cdot]$ be an expectation operator on overall dataset defined by
\begin{equation}
\label{def-expectation}
\mathbb{E}_{n}[\cdot] = \int dx^n \prod_{i=1}^n q(x_i) [\cdot],
\end{equation}
where $dx^n = dx_1\ldots dx_n$.
Then, we have the following KL divergence
\begin{align}
\int dx^n \prod_{i=1}^n q(x_i) \log \frac{\prod_{i=1}^n q(x_i)}{Z_n}
&= -\mathbb{E}_{n}\left[nS_n \right] - \mathbb{E}_{n}[\log Z_n] \\
&= -nS + \mathbb{E}_{n}[F_n],
\end{align}
where $\mathbb{E}_{n}[S_n]$ is $S$.
The expected free energy is an only term that depends on the model and the prior.
For this reason, the free energy is used as a criterion to select the model.
On the other hand, a Bayesian generalization error $G_n$ is defined by the KL divergence between the data-generating distribution and the predictive one:
\begin{equation}
\label{def-gerror}
G_n = \int dx q(x) \log \frac{q(x)}{p^*(x|X^n)}.
\end{equation}

Here, the Bayesian inference is defined by inferring that the data-generating distribution may be the predictive one.
For an arbitrary finite $n$, by the definition of marginal likelihood (\ref{def-marginallikelihood})
and predictive distribution (\ref{def-predictive}), we have
\begin{align}
p^*(X_{n+1}|X^n) &= \frac{1}{Z_n}\int dw \varphi(w) \prod_{i=1}^n p(X_i|w) p(X_{n+1}|w) \\
&= \frac{1}{Z_n} \int dw \varphi(w) \prod_{i=1}^{n+1} p(X_i|w) \\
&= \frac{Z_{n+1}}{Z_n}.
\end{align}
Considering expected negative log values of both sides, according to \cite{SWatanabeBookE}, we get
\begin{align}
\mathbb{E}_{n+1}[-\log p^*(X_{n+1}|X^n)] &= \mathbb{E}_{n+1}[-\log Z_{n+1} - (-\log Z_n) ]\\
\mathbb{E}_{n}[G_n] + S &= \mathbb{E}_{n+1}[F_{n+1}] - \mathbb{E}_{n}[F_n].
\end{align}
Hence, $G_n$ and $F_n$ are important random variables in Bayesian inference
when the data-generating process is unknown.
The situation wherein $q(x)$ is unknown is considered generic \cite{StatRethinkMcElreath2nd, Watanabe2023AdjustCV}.
Moreover, in general, the model can be singular when it has a hierarchical structure or latent variables \cite{SWatanabeBookE, SWatanabeBookMath}.
We therefore investigate how they asymptotically behave in the case of CBM and Multitask. 
For theoretically treating the case when the model is singular, resolution of singularity in algebraic geometry is needed.
Brief introduction of this theory is in \ref{sec:Singular}.

\section{Main Theorems}
\label{sec:Main}
In this section, we state the Main Theorem: the exact value of the RLCTs of CBM and Multitask.
Let $n$ be the sample size, $N$ be the input dimension, $K$ be the number of concepts, and $M$ be the output dimension, respectively.
For simplicity, we consider CBM for the regression case and the concept is a real number:
the input data $x$ is an $N$-dimensional vector,
the concept $c$ is an $K$-dimensional vector,
and the output data $y$ is an $M$-dimensional vector, respectively.
We consider the case in which $c$ or $y$ includes categorical variables as described in section \ref{sec:Expansion}.
Let $A=(a_{ik})_{i=1,k=1}^{M,K}$ and $B=(b_{kj})_{k=1,j=1}^{K,N}$ be $M \times K$ and $K \times N$ matrix, respectively.
They are the connection weights of CBM: $y=ABx$ and $c=Bx$.
Similar to CBM, we consider Multitask.
Let $H$ be the number of units in the intermediate layer in this model.
Matrices $U=(u_{ik})_{i=1,k=1}^{M+K,H}$ and $V=(v_{kj})_{k=1,j=1}^{H,N}$ are denoted by connection weights of Multitask: $[y; c]=UVx$, where $[y; c]$ is an $(M+K)$-dimensional vector constructed by concatenating $y$ and $c$ as a column vector,
i.e. putting $z=[y;c]$, we have
\begin{align}
z=(z_h)_{h=1}^{M+K}, \
z_h=
\begin{cases}
y_h & 1 \leqq h \leqq M, \\
c_{h-M+1} & M+1 \leqq h \leqq M+K.
\end{cases}
\end{align}
For the notation, see also Table \ref{params}.
Below, we apply the $[\cdot; \cdot]$ operator to matrices and vectors to vertically concatenate them in the same way as above.


\begin{table}[htb]
  \centering
  \caption{Description of the Variables}
  \begin{tabular}{|c|c|c|} \hline
    Variable & Description & Index \\
    \hline\hline
    $b_j=(b_{kj}) \in \mathbb{R}^K$ & connection weights from $x$ to $c$ & for $k=1,\ldots,K$ \\
    $a_k=(a_{ik}) \in \mathbb{R}^M$ & connection weights from $c$ to $y$ & for $i=1,\ldots,M$ \\
    \hline
    $v_j=(v_{kj}) \in \mathbb{R}^H$ & connection weights from $x$ to the middle layer & for $k=1,\ldots,H$ \\
    $u_k=(u_{ik}) \in \mathbb{R}^{M+K}$ & connection weights from the middle layer to $z$ & for $i=1,\ldots,M+K$ \\
    \hline
    $x=(x_j) \in \mathbb{R}^N$ & $j$-th input is $x_j$ & for $j=1,\ldots,N$  \\
    $c=(c_k) \in \mathbb{R}^K$ & $k$-th concept is $c_k$ & for $k=1,\ldots,K$  \\
    $y=(y_i) \in \mathbb{R}^M$ & $i$-th output is $y_i$ & for $i=1, \ldots,M$  \\
    $z=(z_h) \in \mathbb{R}^{M+K}$ & $h$-th output of Multitask is $z_h$ & for $h=1,\ldots,M+K$ \\
    \hline
    $*_0$ and $*^0$ & optimal or true variable corresponding to $*$ & - \\
  \hline
  \end{tabular}
\label{params}
\end{table}


Define the RLCT of CBM and that of Multitask in the below.
We consider neural networks in the case they are three-layered linear. 
First, we state these model structures.

\begin{defi}[CBM]
\label{def-CBMNN}
Let $q_1(y,c|x)$ and $p_1(y,c|A,B,x)$ be conditional probability density functions of $(y,c) \in \mathbb{R}^M \times \mathbb{R}^K$ given $x \in \mathbb{R}^N$ as the followings:
\begin{align}
\label{cbm-true}
q_1(y,c|x) &= p_1(y,c|A_0, B_0, x), \\
\label{cbm-model}
p_1(y,c|A,B,x) & \propto \exp \left( -\frac{1}{2}\lVert y - A B x \rVert^2 \right)
\exp \left( -\frac{\gamma}{2}\lVert c - B x \rVert^2 \right),
\end{align}
where $A_0=(a^0_{ik})_{i=1,k=1}^{M,K}$ and $B_0=(b^0_{kj})_{k=1,j=1}^{K,N}$ are the true parameters
and $\gamma>0$ is a positive constant controlling task and explanation tradeoff \cite{Koh2020CBM20a}.
The prior density function is denoted by $\varphi_1(A,B)$.
The data-generating distribution of CBM and the statistical model of that are defined by $q_1(y,c|x)$ and $p_1(y,c|A,B,x)$, respectively.
\end{defi}

These distributions are based on the loss function of Joint CBM,
which provides the highest classification performance \cite{Koh2020CBM20a}.
Other types of CBM are discussed in section \ref{sec:Discuss}.
This loss is defined by the linear combination of the loss between $y$ and $ABx$ and that between $c$ and $Bx$.
For regression, these losses are squared Euclidian distances and the linear combination of them are equivalent to negative log likelihoods
$\left(
-\log \prod_{l=1}^n p_1(y^l,c^l|A,B,x^l)
\right)$,
where the data is $(x^l, c^l, y^l)_{l=1}^n \in (\mathbb{R}^N \times \mathbb{R}^K \times \mathbb{R}^M)^n$.
Since CBM assumes that concepts are observable, $c$ is subject to the data-generating distribution and that causes that the number of columns in $A_0$ and rows in $B_0$ are equal to $K$: the number of concepts.

Set a density as $p_{12}(y|A,B,x) \propto \exp \left( -\frac{1}{2}\lVert y - A B x \rVert^2 \right)$.
Then, the statistical model of Standard is $p_{12}(y|A,B,x)$
and the data-generating distribution of Standard can be represented as $q_{12}(y|x):=p_{12}(y|A_0,B_0,x)$;
however, when considering only Standard, the rank of $A_0B_0$ might be smaller than $K$,
i.e. there exists a pair of matrices $(A'_0, B'_0)$ such that $A_0B_0 = A'_0B'_0$ and $\mathrm{rank}A'_0B'_0 < K$.
In other words, if we cannot observe concepts, then a model selection problem can be said to exist: in other words, how do we design the number of middle units in order to find the data-generating distribution or a predictive distribution which realizes high generalization performance.
This problem also appears in Multitask when the number of the middle layer units exceeds the true rank (i.e. the true number of those units): $H>H_0$.
Distributions of Multitask are defined below.

\begin{defi}[Multitask]
\label{def-MultitaskNN}
Put $z=[y; c]$.
Let $q_2(z|x)$ and $p_2(z|U,V,x)$ be conditional probability density functions of $z \in \mathbb{R}^{M+K}$ given $x \in \mathbb{R}^N$ as below:
\begin{align}
\label{multitask-true}
q_2(z|x) &= p_2(z|U_0,V_0,x), \\
\label{multitask-model}
p_2(z|U,V,x) & \propto \exp \left( -\frac{1}{2}\lVert z - U V x \rVert^2 \right),
\end{align}
where $U_0=(v^0_{ik})_{i=1,k=1}^{M+K,H_0}$ and $V_0=(v^0_{kj})_{k=1,j=1}^{H_0,N}$ are the true parameters and $H_0$ is the rank of $U_0V_0$.
The prior density function is denoted by $\varphi_2(U,V)$.
The data-generating distribution of Multitask and the statistical model of that are defined by $q_2(z|x)$ and $p_2(z|U,V,x)$, respectively.
\end{defi}

The data-generating distribution and the statistical model of Standard are defined by $q_2(y|x)$ and $p_2(y|U,V,x)$ of Multitask when $K=0$.

We visualize CBM and Multitask as graphical models in Figures \ref{fig:CBM} and \ref{fig:Multitask}.
In CBM (especially, Joint CBM), the concepts are inserted as observations between the last intermediate layer and the output layer but the connection weights $A$ from $c$ to $y$ are learned based on the relationship $y=ABx$. Concepts insertion is represented as the other part of the model: $c=Bx$.
However, in Multitask, the concepts are concatenated to the output and the connection weights $(U,V)$ are trained as $[y; c] = UVx$. 

\begin{figure}[t]
\begin{minipage}{0.5\hsize}
\begin{flushleft}
\includegraphics[width=8cm,height=4.36cm]{./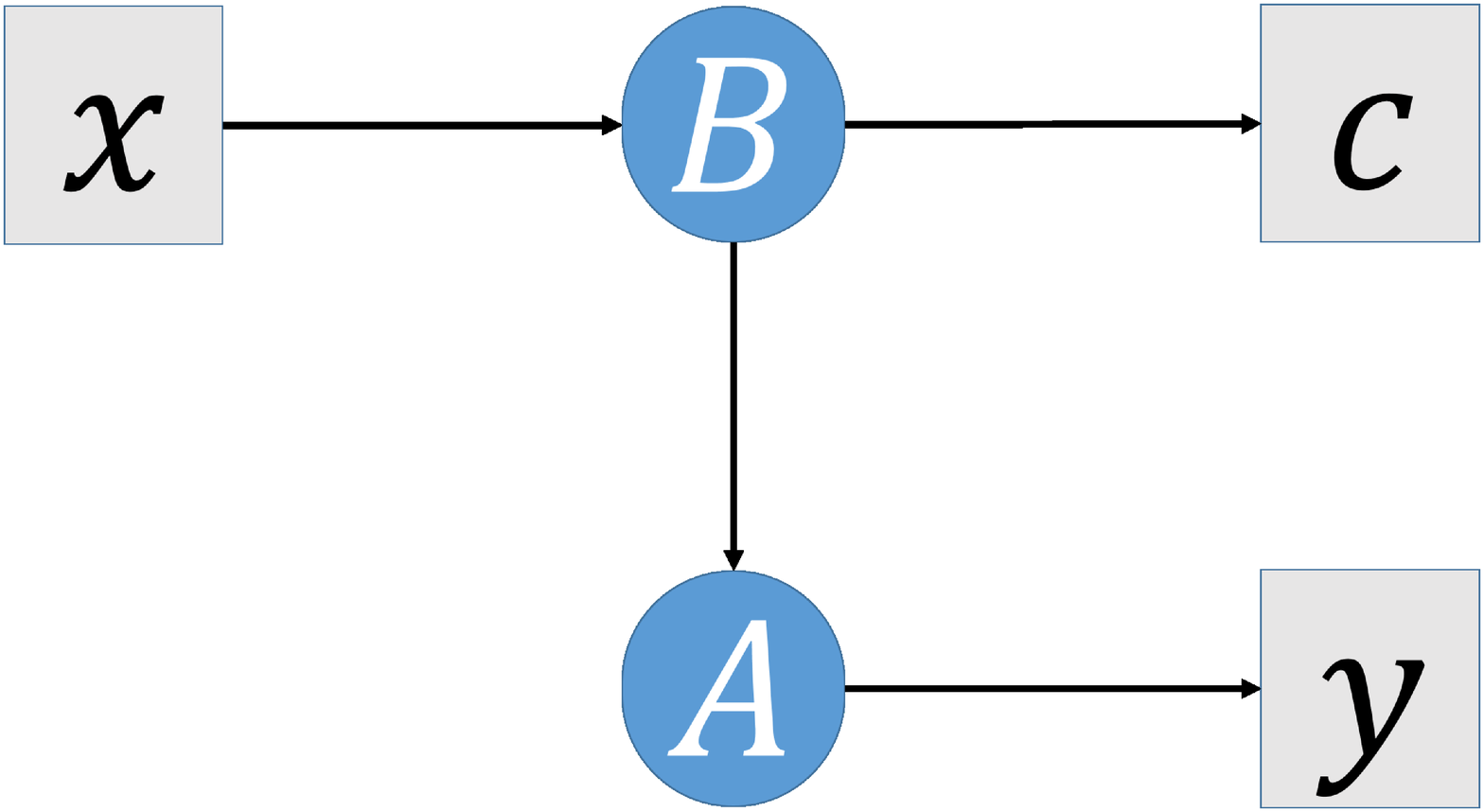}
\subcaption{Graphical Model of CBM}
\label{fig:CBM}
\end{flushleft}
\end{minipage}
\begin{minipage}{0.5\hsize}
\begin{flushright}
\includegraphics[width=8cm,height=4.35cm]{./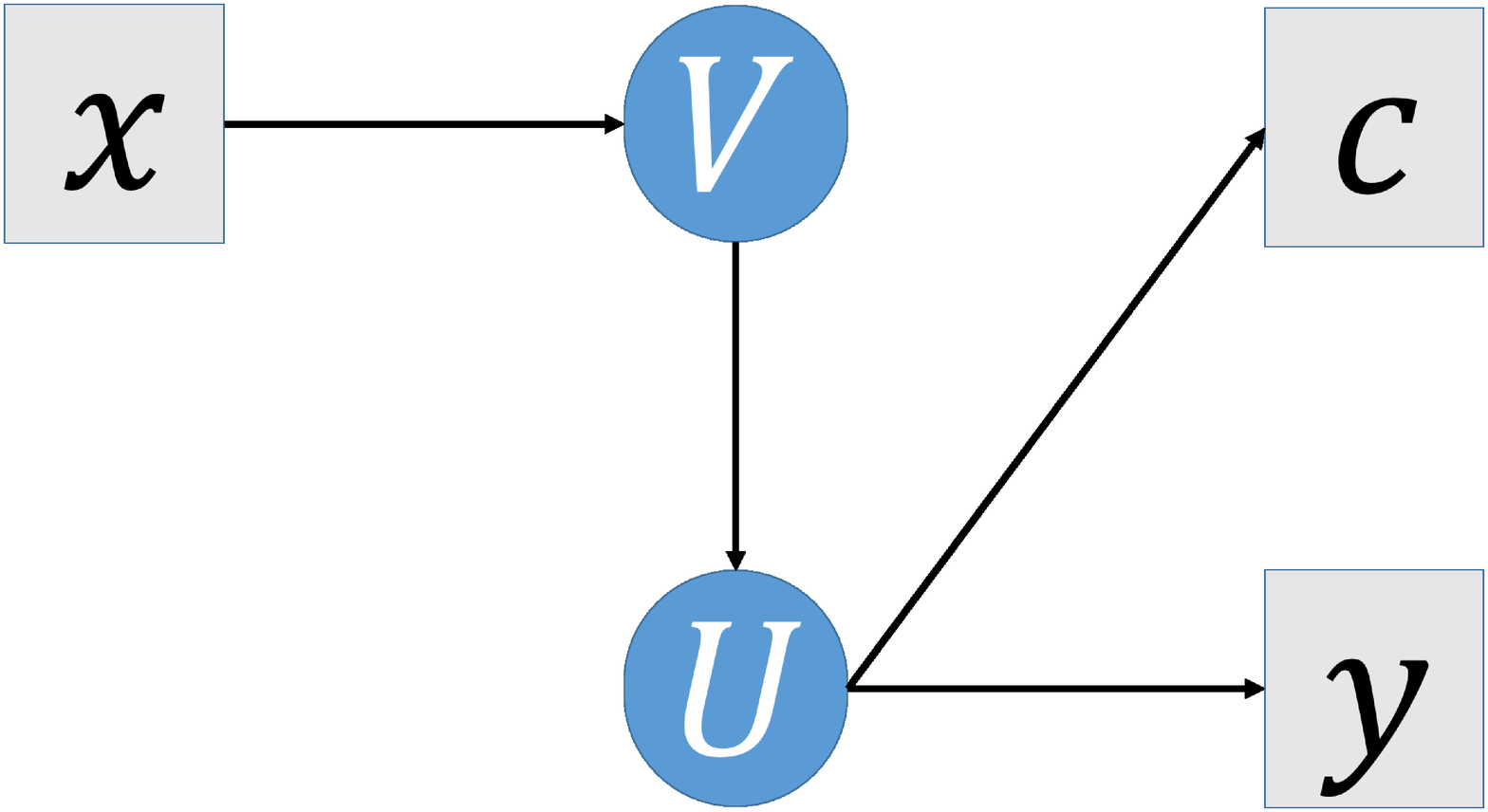}
\subcaption{Graphical Model of Multitask}
\label{fig:Multitask}
\end{flushright}
\end{minipage}
\caption{(a) This figure shows the graphical model of CBM (in particular, Joint CBM) when the neural network is three-layered linear.
Squares $(x,c,y)$ are observed data and circles $(A, B)$ are learnable parameters (connection weights of linear neural network).
Arrows mean that we set the conditional probability model from the left variable to the right one: $p_1(y,c|A,B,x)=p_{12}(y|A,B,x)p_{11}(c|x,B)$, where $p_{12}(y|A,B,x)$ is the statistical model of Standard and $p_{11}(c|x,B)$ is a density function which satisfies $p_{11}(c|B,x) \propto \exp \left(-\frac{\gamma}{2}\lVert c-Bx \rVert^2 \right)$.
\\ (b) This figure states the graphical model of Multitask  in the case the neural network is three-layered linear.
As same as the above, squares $(x,c,y)$ are observations, circles $(U, V)$ are learnable weights, and arrows correspond to conditional probability models, respectively.
For the sake of ease to compare with CBM, we draw $c$ and $y$ as other squares;
however, in fact, they are treated as an output since they are concatenated as a vector: $p_2(z|U,V,x)$, where $z=[y; c]$.
}
\end{figure}

We define the RLCTs of CBM and Multitask as follows.
\begin{defi}[RLCT of CBM]
\label{def-CBMRLCT}
Let $K_1(A,B)$ be the KL divergence between $q_1$ and $p_1$:
\begin{align}
K_1(A,B)=\displaystyle\iiint dydcdx q'(x)q_1(y,c|x) \log \frac{q_1(y,c|x)}{p_1(y,c|A,B,x)},
\end{align}
where $q'(x)$ is the data-generating distribution of the input.
$q'(x)$ is not observed and assumed that it is positive and bounded.
Assume that $\varphi_1(A,B) >0$ is positive and bounded on $K_1^{-1}(0) \ni (A_0,B_0)$.
Then, the zeta function of the learning theory in CBM is
the holomorphic function of univariate complex variable $z \ (\mathrm{Re} (z) >0)$
\begin{align}
\zeta_1(z)=\iint
 K_1(A,B)^z dAdB
\end{align}
and it can be analytically continued to a unique meromorphic function on the entire complex plane $\mathbb{C}$ and all of its poles are negative rational numbers. The RLCT of CBM is defined by $\lambda_1$, where the largest pole of $\zeta_1(z)$ is $(-\lambda_1)$.
Its multiplicity $m_1$ is defined as the order of the maximum pole.
\end{defi}

\begin{defi}[RLCT of Multitask]
\label{def-MultitaskRLCT}
Let $K_2(U,V)$ be the KL divergence between $q_2$ and $p_2$:
\begin{align}
K_2(U,V)=\displaystyle\iiint dydcdx q'(x)q_2(y,c|x) \log \frac{q_2(y,c|x)}{p_2(y,c|U,V,x)},
\end{align}
where $q'(x)$ is same as that of Definition \ref{def-CBMRLCT}.
Assume that $\varphi_2(U,V) >0$ is positive and bounded on $K_2^{-1}(0) \ni (U_0, V_0)$.
As in the case of CBM, the zeta function of learning theory in Multitask is the following holomorphic function of $z \ (\mathrm{Re} (z) >0)$
\begin{align}
\zeta_2(z)=\iint
 K_2(U,V)^z dUdV
\end{align}
and the RLCT of Multitask is defined by $\lambda_2$, where the largest pole of $\zeta_2(z)$ is $(-\lambda_2)$ and its multiplicity $m_2$ is defined as the order of the maximum pole.
\end{defi}

Put an $N \times N$ matrix
$\mathscr{X} = \left( \int x_i x_j q'(x) dx \right)_{i=1,j=1}^{N,N}.$
Then, our main results are the following theorems.

\begin{thm}[Main Theorem 1]
\label{thm-main-cbm}
Suppose $\mathscr{X}$ is positive definite and $(A, B)$ is in a compact set.
CBM is a regular statistical model in the case the network architecture is three-layered linear.
Therefore, by using the input dimension $N$, the number of concepts $K$, and the output dimension $M$,
the RLCT of CBM $\lambda_1$ and its multiplicity $m_1$ are as follows:
\begin{align}
\lambda_1 = \frac{1}{2}(M+N)K, \ m_1=1.
\end{align}
\end{thm}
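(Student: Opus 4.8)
The plan is to compute the Kullback--Leibler divergence $K_1(A,B)$ in closed form, show that (after an analytic change of variables) it is a positive-definite quadratic form whose only zero is $(A_0,B_0)$, conclude that CBM is a regular statistical model, and then read off $\lambda_1$ and $m_1$ from the regular case.

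First I would make $K_1$ explicit. For each fixed $x$, both $q_1(\cdot,\cdot\mid x)$ and $p_1(\cdot,\cdot\mid A,B,x)$ are Gaussian in $(y,c)$ with the same diagonal covariance, differing only in their means $(ABx, Bx)$ versus $(A_0B_0x, B_0x)$; hence the conditional KL divergence equals $\tfrac12\lVert(AB-A_0B_0)x\rVert^2+\tfrac{\gamma}{2}\lVert(B-B_0)x\rVert^2$. Integrating against $q'(x)$ and using $\mathscr{X}=\int xx^{\top}q'(x)\,dx$ yields
\[
K_1(A,B)=\tfrac12\operatorname{tr}\!\big((AB-A_0B_0)\mathscr{X}(AB-A_0B_0)^{\top}\big)+\tfrac{\gamma}{2}\operatorname{tr}\!\big((B-B_0)\mathscr{X}(B-B_0)^{\top}\big).
\]
Since $\mathscr{X}$ is positive definite, $\mathscr{X}^{1/2}$ is invertible, and setting $\widetilde{B}=B\mathscr{X}^{1/2}$, $\widetilde{B}_0=B_0\mathscr{X}^{1/2}$ turns $K_1$ into $\tfrac12\lVert A\widetilde{B}-A_0\widetilde{B}_0\rVert_F^2+\tfrac{\gamma}{2}\lVert\widetilde{B}-\widetilde{B}_0\rVert_F^2$.

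Next I would establish regularity. The second summand is a positive-definite quadratic form in the entries of $B$, so $K_1(A,B)=0$ forces $\widetilde{B}=\widetilde{B}_0$, i.e.\ $B=B_0$; substituting this into the first summand forces $(A-A_0)\widetilde{B}_0=0$, hence $A=A_0$ --- here one uses that $B_0$ (equivalently $\widetilde{B}_0$) has full row rank $K$, which I would take as a standing non-degeneracy assumption on the true parameter and which is exactly the feature distinguishing CBM from Standard, where only the product $AB$ is constrained and flat directions in $A$ and $B$ survive. Thus $K_1^{-1}(0)=\{(A_0,B_0)\}$. Writing $A=A_0+(dA)$ and $\widetilde{B}=\widetilde{B}_0+(dB)\mathscr{X}^{1/2}$ gives $A\widetilde{B}-A_0\widetilde{B}_0=(dA)\widetilde{B}_0+A_0(dB)\mathscr{X}^{1/2}+(dA)(dB)\mathscr{X}^{1/2}$, so the Hessian of $K_1$ at $(A_0,B_0)$ is the quadratic form $\tfrac12\lVert(dA)\widetilde{B}_0+A_0(dB)\mathscr{X}^{1/2}\rVert_F^2+\tfrac{\gamma}{2}\lVert(dB)\mathscr{X}^{1/2}\rVert_F^2$, which vanishes only at $(dA,dB)=(0,0)$; equivalently the differential of $(A,B)\mapsto(AB,B)$ is injective at $(A_0,B_0)$. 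Hence $(A_0,B_0)$ is a non-degenerate minimum of the analytic function $K_1$, and CBM meets the regularity assumptions of singular learning theory.

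Finally, for a regular statistical model the RLCT is half the parameter dimension and the multiplicity is one --- the standard fact recalled in the introduction, and also immediate here since, by the Morse lemma, $K_1$ is analytically equivalent to $\lVert w\rVert^2$ near $(A_0,B_0)$ and $\int_{\lVert w\rVert\le\varepsilon}\lVert w\rVert^{2z}\,dw$ has largest pole $z=-d/2$ of order one. The parameter dimension is $d=\dim A+\dim B=MK+KN=(M+N)K$, giving $\lambda_1=\tfrac12(M+N)K$ and $m_1=1$. I expect the only genuinely delicate step to be verifying that $K_1^{-1}(0)$ is a single point with positive-definite Hessian: one must make sure the bilinearity of $(A,B)\mapsto AB$ introduces no additional flat directions, which is precisely where the concept-loss term and the full row rank of $B_0$ enter --- the remaining manipulations are routine Gaussian bookkeeping.
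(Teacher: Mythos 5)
Your proposal is correct and follows essentially the same route as the paper's proof: reduce $K_1$ to (an RLCT-equivalent of) $\lVert AB-A_0B_0\rVert^2+\lVert B-B_0\rVert^2$, show the zero set is the single point $(A_0,B_0)$, conclude regularity, and read off $\lambda_1=d/2$ with $d=(M+N)K$. Two remarks on where you diverge. First, where the paper invokes its Lemma \ref{lem-KL-sqerr} (two-sided bounds $\alpha_1\Phi\leqq K\leqq\alpha_2\Phi$ derived from positive definiteness of $\mathscr{X}$), you substitute the linear change of variables $\widetilde{B}=B\mathscr{X}^{1/2}$; both are valid and interchangeable. Second, and more substantively, you are more careful than the paper at the crucial step: the implication that $AB=A_0B_0$ together with $B=B_0$ forces $A=A_0$, and likewise the non-degeneracy of the Hessian, both require $B_0$ to have full row rank $K$ (hence $K\leqq N$), an assumption the paper uses tacitly but never states. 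If $\mathrm{rank}\,B_0<K$ the zero set is a positive-dimensional affine set and the stated value of $\lambda_1$ fails (e.g.\ $B_0=0$ gives $K_1\sim\lVert B\rVert^2$ and hence $\lambda_1=NK/2$), so your ``standing non-degeneracy assumption'' is genuinely needed, not merely convenient. You also verify that the Hessian is positive definite rather than passing directly from the statement that $K_1^{-1}(0)$ is a single point to the conclusion that the model is regular --- an inference that is not automatic (consider $w\mapsto\lVert w\rVert^4$) --- so your write-up closes a small gap in the paper's argument rather than opening one.
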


\begin{thm}[Main Theorem 2]
\label{thm-main-multitask}
Suppose $\mathscr{X}$ is positive definite and $(U, V)$ is in a compact set.
Let $\lambda_2$ be the RLCT of Multitask formulation of a three-layered linear neural network and $m_2$ be its multiplicity.
By using the input dimension $N$, the number of concepts $K$ and intermediate units $H$, the output dimension $M$, and the true rank $H_0$,
$\lambda_2$ and $m_2$ can be represented as below:
\begin{enumerate}
\item In the case of $M+K+H_0 \leqq N+H$ and $N+H_0 \leqq M+K+H$ and $H+H_0 \leqq N+M+K$,
    \begin{enumerate}
        \item and if $N+M+K+H+H_0$ is even, then
        $$\lambda_2 = \frac{1}{8}\{ 2(H+H_0)(N+M+K)-(N-M-K)^2-(H+H_0)^2 \}, \ m_2 = 1.$$
        \item and if $N+M+K+H+H_0$ is odd, then
        $$\lambda_2 = \frac{1}{8}\{ 2(H+H_0)(N+M+K)-(N-M-K)^2-(H+H_0)^2 +1 \}, \ m_2 = 2.$$
    \end{enumerate}
\item In the case of $N+H<M+K+H_0$, then
$$\lambda_2 = \frac{1}{2}\{HN+H_0(M+K-H)\}, \ m_2 = 1.$$
\item In the case of $M+K+H<N+H_0$, then
$$\lambda_2 = \frac{1}{2}\{H(M+K)+H_0(N-H)\}, \ m_2 = 1.$$
\item Otherwise (i.e. $N+M+K<H+H_0$), then
$$\lambda_2 = \frac{1}{2}N(M+K), \ m_2 = 1.$$
\end{enumerate}
\end{thm}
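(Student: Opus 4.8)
The plan is to reduce Theorem~\ref{thm-main-cbm} to the standard characterization of regular statistical models and Theorem~\ref{thm-main-multitask} to the known real log canonical threshold of a three-layered linear neural network without bottleneck, i.e. the reduced-rank regression result of Aoyagi and Watanabe \cite{Aoyagi1}. In both cases the first step is to evaluate the Kullback--Leibler divergence explicitly. Since each model is a Gaussian location family with fixed covariance (after absorbing $\gamma$), for fixed $x$ the KL divergence between $q(\cdot|x)$ and $p(\cdot|w,x)$ is half the squared Euclidean distance of the two mean vectors; integrating against $q'(x)$ and using $\mathscr{X}=(\int x_i x_j q'(x)dx)_{ij}$ gives
\begin{align}
K_1(A,B) &= \tfrac12\,\mathrm{tr}[(AB-A_0B_0)\mathscr{X}(AB-A_0B_0)^{\top}] + \tfrac{\gamma}{2}\,\mathrm{tr}[(B-B_0)\mathscr{X}(B-B_0)^{\top}],\\
K_2(U,V) &= \tfrac12\,\mathrm{tr}[(UV-U_0V_0)\mathscr{X}(UV-U_0V_0)^{\top}].
\end{align}

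For Theorem~\ref{thm-main-cbm}, positive definiteness of $\mathscr{X}$ makes $K_1(A,B)=0$ force first $B=B_0$ from the second summand and then $(A-A_0)B_0=0$ from the first; since the $K$ concepts are non-redundant, $B_0$ has full row rank $K$ (so $K\le N$) and hence $A=A_0$. Thus $K_1^{-1}(0)=\{(A_0,B_0)\}$ is a single point. It then remains to check nondegeneracy of the Hessian there: writing $A=A_0+\alpha$, $B=B_0+\beta$, the quadratic part of $K_1$ is $\tfrac12\,\mathrm{tr}[(\alpha B_0+A_0\beta)\mathscr{X}(\alpha B_0+A_0\beta)^{\top}]+\tfrac{\gamma}{2}\,\mathrm{tr}[\beta\mathscr{X}\beta^{\top}]$, which is a positive definite form in $(\alpha,\beta)$ because $\mathscr{X}\succ 0$ and $B_0$ has full row rank. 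Hence CBM is a regular model, and a regular model has RLCT equal to half its parameter dimension with multiplicity one \cite{SWatanabeBookE}; here the dimension is $\dim A+\dim B=MK+KN=(M+N)K$, giving $\lambda_1=\tfrac12(M+N)K$ and $m_1=1$.

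For Theorem~\ref{thm-main-multitask}, I would substitute $V' := V\mathscr{X}^{1/2}$, a linear isomorphism with constant nonzero Jacobian that maps the compact parameter domain onto a compact set and preserves positivity and boundedness of the prior on the zero set; under it $K_2$ becomes $\tfrac12\lVert UV'-W_0\rVert_F^2$ with $W_0:=U_0V_0\mathscr{X}^{1/2}$, and $\mathrm{rank}\,W_0=\mathrm{rank}\,U_0V_0=H_0$ since $\mathscr{X}^{1/2}$ is invertible. This is exactly the learning-theoretic zeta function of a three-layered linear neural network with input dimension $N$, output dimension $M+K$, $H$ hidden units, and true rank $H_0$. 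Applying the theorem of \cite{Aoyagi1} with these values --- in particular replacing ``output dimension'' by $M+K$ throughout their case analysis --- and observing that the change of variables leaves the RLCT and multiplicity invariant yields the four cases and the even/odd dichotomy in the statement, with $m_2=2$ exactly in the odd subcase of Case~1.

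The obstacles are more bookkeeping than conceptual. For Theorem~\ref{thm-main-cbm} the crux is establishing that CBM is genuinely regular --- that $K_1^{-1}(0)$ is a single point and the Hessian there is invertible --- which relies on $\mathscr{X}\succ 0$ together with $B_0$ being full rank; this is precisely the structural effect of observing the concepts and is what separates CBM from Standard. For Theorem~\ref{thm-main-multitask} the work is in making the reduction to \cite{Aoyagi1} rigorous: verifying that the hypotheses of that theorem transfer after the change of variables (compact domain, prior positive and bounded on the zero set, the role of $\mathscr{X}$), and carefully transcribing its four-case formula with $M+K$ in place of the output dimension while tracking the parity condition responsible for $m_2=2$.
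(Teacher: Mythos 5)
Your proposal is correct and follows essentially the same route as the paper: the paper's proof of Theorem \ref{thm-main-multitask} likewise observes that $p_2(z|U,V,x)$ is exactly a three-layered linear network with output dimension $M+K$, hidden width $H$, and true rank $H_0$, and obtains the result by substituting $M+K$ for the output dimension in the Aoyagi--Watanabe reduced-rank regression theorem. Your explicit change of variables $V'=V\mathscr{X}^{1/2}$ is a slightly more self-contained way of handling the input covariance, but the paper's cited theorem already assumes $\mathscr{X}\succ 0$, so the two arguments coincide in substance.
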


These theorems yield the exact asymptotic forms of the expected Bayesian generalization error and the free energy following Eqs. (\ref{thm-watanabeG}) and (\ref{thm-watanabeF}).
Their proofs are in \ref{sec:Proof}.

Theorem \ref{thm-main-cbm} shows that the concept bottleneck structure makes the neural network regular if the network architecture is three-layered linear.
We present a sketch of proof below.
Theorem \ref{thm-main-multitask} can be immediately proved using the RLCT of three-layered neural network \cite{Aoyagi1},
since in the model $p_2(z|A,B,x)$, the input dimension, number of middle layer units, output dimension, and rank of the product of true parameter matrices are $N$, $H$, $M+K$, and $H_0$, respectively.

\begin{proof}[Sketch of Proof of Theorem \ref{thm-main-cbm}]
Let a binary relation $\sim$ be that the RLCTs and multiplicities of both sides are equal.
The KL divergence from $q_1$ to $p_1$ can be developed as
\begin{align}
K_1(A,B)
&\propto \displaystyle\iiint dydcdx q'(x)q(y,c|x) \left(
-\lVert y-A_0B_0x \rVert^2 - \gamma\lVert c-B_0x \rVert^2 \right. \\
&\quad \left. + \lVert y-ABx \rVert^2 + \gamma\lVert c-Bx \rVert^2
\right) \\
&\sim \lVert AB-A_0B_0 \rVert^2 + \lVert B-B_0 \rVert^2. 
\end{align}
To calculate $\lambda_1$ and $m_1$, we find $K_1^{-1}(0)$.
Put $\lVert AB-A_0B_0 \rVert^2 + \lVert B-B_0 \rVert^2=0$ and we have $(A,B)=(A_0,B_0)$.
That means that $K_1^{-1}(0)$ can be referred to a one point set; thus, CBM in the three-layered linear case is regular.
\end{proof}

\section{Expansion of Main Theorems}
\label{sec:Expansion}
We define the RLCT of CBM for the observed noise as subject to a Gaussian distribution (cf. Definition \ref{def-CBMNN} and \ref{def-CBMRLCT}).
This corresponds to a regression from X-ray images to arthritis grades in the original CBM study \cite{Koh2020CBM20a}.
However,we can generally treat CBM as classifier and concepts as categorical variables.
For example, in \cite{Koh2020CBM20a}, Koh et al. demonstrated bird species classification task with bird attributes concepts.
Summarizing them, we have the following four cases:
\begin{enumerate}
\item Both $p_{12}(y|A,B,x)$ and $p_{11}(c|B,x)$ are Gaussian (regression task with real number concepts). \\
\item $p_{12}(y|A,B,x)$ is Gaussian and $p_{11}(c|B,x)$ is Bernoulli (regression task with categorical concepts). \\
\item $p_{12}(y|A,B,x)$ is categorical and $p_{11}(c|B,x)$ is Gaussian (classification task with real number concepts). \\
\item $p_{12}(y|A,B,x)$ is categorical and $p_{11}(c|B,x)$ is Bernoulli (classification task with categorical concepts).
\end{enumerate}
Note that concepts are not exclusive; thus, the distribution of $p_{11}(c|B,x)$ must be Bernoulli (not categorical).
We prove that a result similar to that of Theorem \ref{thm-main-cbm} holds in the above cases.
Before expanding our Main Theorems, we first define the sigmoid and softmax functions.
Let $\sigma_{K'}: \mathbb{R}^{K'} \to [0,1]^{K'}$ be a $K'$-dimensional multivariate sigmoid function and $s_{M'}: \mathbb{R}^{M'} \to \Delta_{M'}$ be a $M'$-dimensional softmax function, respectively:
\begin{align}
\sigma_{K'}(u) &= \left( \frac{1}{1+\exp(-u_j)} \right)_{j=1}^{K'}, \ u \in \mathbb{R}^{K'}, \\
s_{M'}(w) &= \left( \frac{\exp(w_j)}{\sum_{j=1}^{M'} \exp(w_j)} \right)_{j=1}^{M'} \ w \in \mathbb{R}^{M'},
\end{align}
where $\Delta_{M'}$ is an $M'$-dimensional simplex.
Then, we can define each distribution as follows:
\begin{align}
p_{12}^1(y|A,B,x) &\propto \exp\left( -\frac{1}{2}\lVert y - A B x \rVert^2 \right), \\
p_{12}^2(y|A,B,x) &= \prod_{j=1}^{M} (s_{M}(ABx))_j^{y_j}, \\
p_{11}^1(c|B,x) &\propto \exp\left( -\frac{\gamma}{2}\lVert c - B x \rVert^2 \right), \\
p_{11}^2(c|B,x) &\propto \left( \prod_{k=1}^K (\sigma_{K}(Bx))_k^{c_k} (1-(\sigma_{K}(Bx))_k)^{1-c_k} \right)^{\gamma},
\end{align}
where $(s_M(ABx))_j$ and $(\sigma_K(Bx))_k$ are the $j$-th and $k$-th element of $s_M(ABx)$ and $\sigma_K(Bx)$, respectively.
The data-generating distributions are denoted by
$q_{12}^1(y|x) = p_{12}^1(y|A_0,B_0,x)$, $q_{12}^2(y|x) = p_{12}^2(y|A_0,B_0,x)$,
$q_{11}^1(c|x) = p_{11}^1(c|B_0,x)$, and $q_{11}^2(c|x) = p_{11}^2(c|B_0,x)$, respectively.
Based on the aforementioned points, we explain the semantics of the indexes in the density functions $\cdot_{kl}^h$ as follows.
Superscripts $h \in \{1, 2\}$ denote the types of the response variable: real and categorical.
For a double subscript $kl$, $k \in \{1, 2\}$ denotes the models (CBM and Multitask) and $l \in \{1, 2\}$ does response variables ($c$ and $y$).
For a double superscript $ij$, as used in Theorems \ref{thm-expand-cbm} and \ref{thm-expand-multitask},
$i$ and $j$ mean the response variable $y$ and $c$, respectively.
Then, Theorem \ref{thm-main-cbm} can be expanded as follows:

\begin{thm}[Expansion of Theorem \ref{thm-main-cbm}]
\label{thm-expand-cbm}
Let
\begin{align}
p_1^{ij}(y,c|A,B,x) &= p_{12}^i(y|A,B,x)p_{11}^j(c|B,x), \ i=1,2, \ j=1,2, \\
q_1^{ij}(y,c|x) &= p_1^{ij}(y,c|A_0,B_0,x), \ i=1,2, \ j=1,2.
\end{align}
If we write the KL divergences as
\begin{align}
K_1^{ij}(A,B)=\displaystyle\iiint dydcdx q'(x)q_1^{ij}(y,c|x) \log \frac{q_1^{ij}(y,c|x)}{p_1^{ij}(y,c|A,B,x)}, \ i=1,2, \ j=1,2,
\end{align}
where $q'(x)$ is the input-generating distribution (as same as Definition \ref{def-CBMRLCT}).
Assume that $\mathscr{X}$ is positive definite and $(A, B)$ is in a compact set.
Then, the maximum pole $(-\lambda_1^{ij})$ and its order $m_1^{ij}$ of the zeta function
\begin{align}
\zeta_1^{ij}(x) = \iint K_1^{ij}(A,B)^z dAdB
\end{align}
are as follows: for $i=1,2$ and $j=1,2$,
\begin{align}
\lambda_1^{1j} &= \frac{1}{2}(M+N)K, \\
\lambda_1^{2j} &= \frac{1}{2}(M+N-1)K, \\
m_1^{ij} &= 1.
\end{align}
\end{thm}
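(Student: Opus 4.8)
The plan is to use the fact that both the true density and the model factorize across $y$ and $c$:
\[
q_1^{ij}(y,c|x)=q_{12}^i(y|x)\,q_{11}^j(c|x),\qquad p_1^{ij}(y,c|A,B,x)=p_{12}^i(y|A,B,x)\,p_{11}^j(c|B,x).
\]
Taking KL divergences of products, $K_1^{ij}$ splits as a sum of two non-negative analytic functions,
\[
K_1^{ij}(A,B)=\overline K_{12}^i(A,B)+\overline K_{11}^j(B),
\]
where $\overline K_{12}^i$ and $\overline K_{11}^j$ are the $q'(x)$-averaged KL divergences of the output part and the concept part, the latter depending on $B$ only. Hence $(K_1^{ij})^{-1}(0)=\{\overline K_{12}^i=0\}\cap\{\overline K_{11}^j=0\}$, and I would proceed in three steps: (i) show the concept part forces $B=B_0$ with positive-definite Hessian there; (ii) on the slice $B=B_0$ determine the zero set and the transverse Hessian of the output part; (iii) read off $(\lambda_1^{ij},m_1^{ij})$ from the Morse--Bott normal form of the real log canonical threshold (zero set a smooth $k$-manifold with positive-definite normal Hessian $\Rightarrow$ RLCT $=(d-k)/2$, multiplicity $1$).

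For step (i): when $j=1$, $\overline K_{11}^1(B)=\tfrac{\gamma}{2}\,\mathrm{tr}\big((B-B_0)\mathscr{X}(B-B_0)^\top\big)$ is a positive-definite quadratic form in $B-B_0$ since $\mathscr{X}\succ0$, vanishing only at $B_0$. When $j=2$, after renormalizing the $\gamma$-th power, $p_{11}^2(c|B,x)$ is a product of $K$ Bernoulli distributions with parameters $\sigma(\gamma\,b_k^\top x)$, i.e.\ a smooth strictly increasing function of $b_k^\top x$; because $\mathscr{X}\succ0$ the map $B\mapsto(\sigma(\gamma\,b_k^\top x))_{k,x}$ is injective and the Bernoulli Fisher information is everywhere positive, so $\overline K_{11}^2$ is again a non-negative analytic function vanishing only at $B_0$ with positive-definite Hessian. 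In either case the $B$-coordinates are ``regular'' and contribute $KN$ to the codimension.

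For step (ii), restrict to $B=B_0$. If $i=1$, $\overline K_{12}^1(A,B_0)=\tfrac12\,\mathrm{tr}\big((A-A_0)B_0\mathscr{X}B_0^\top(A-A_0)^\top\big)$, which is positive definite in $A-A_0$ because $B_0\mathscr{X}B_0^\top$ is nonsingular (the standing full-rank assumption on $B_0$, $K\le N$, already implicit in Theorem~\ref{thm-main-cbm}); so the whole zero set is the single point $(A_0,B_0)$, the model is regular, and $\lambda_1^{1j}=\tfrac12(MK+KN)=\tfrac12(M+N)K$, $m_1^{1j}=1$. If $i=2$, the categorical logits are $ABx$ and softmax is invariant under adding a common constant to all coordinates, so $\overline K_{12}^2(A,B_0)=0$ iff $(A-A_0)B_0=\mathbf{1}_M r^\top$ for some $r\in\mathbb{R}^N$; since $B_0$ has full row rank $K$ this holds iff $A-A_0\in\{\mathbf{1}_M t^\top:t\in\mathbb{R}^K\}$, a $K$-dimensional affine subspace. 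Writing $A_*=A_0+\mathbf{1}_M t_*^\top$ for a base point and $g(x)=\delta A\,B_0 x+A_*\,\delta B\,x$ for the first variation of the logit, the second-order term of $K_1^{2j}$ at $(A_*,B_0)$ equals $\mathbb{E}_{q'(x)}\big[g(x)^\top\Lambda(x)g(x)\big]+\mathrm{Hess}(\overline K_{11}^j)[\delta B]$, where $\Lambda(x)=\mathrm{diag}(p(x))-p(x)p(x)^\top\succeq0$ is the categorical Fisher information with $\ker\Lambda(x)=\mathrm{span}\,\mathbf{1}_M$. One checks this form is positive semidefinite with kernel exactly the tangent space $\{(\mathbf{1}_M t^\top,0):t\in\mathbb{R}^K\}$ of the zero manifold, hence positive definite on the normal space, of dimension $(MK-K)+KN=(M+N-1)K$. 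By the Morse--Bott normal form this gives $\lambda_1^{2j}=\tfrac12(M+N-1)K$ and $m_1^{2j}=1$.

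The main obstacle is the categorical-output case $i=2$: one must pin down the exact dimension of the zero manifold, i.e.\ verify that the only directions along which $\overline K_{12}^2(\cdot,B_0)$ stays zero are the softmax gauge directions $\mathbf{1}_M t^\top$ (this is precisely where the full-rank hypothesis on $B_0$ is used), and then confirm that the Hessian transverse to this manifold is non-degenerate despite the coupling between the $\delta A$ and $\delta B$ blocks through $\Lambda(x)$. The Gaussian-output, Gaussian-concept subcase is essentially Theorem~\ref{thm-main-cbm}, and the Bernoulli-concept case is a routine extension via a smooth injective reparametrization with positive Fisher information. Once the gauge-orbit dimension and transverse non-degeneracy are established, everything reduces to the quadratic (Morse/Morse--Bott) situation; a fully explicit appendix argument would replace the abstract normal form by an analytic change of coordinates separating the gauge directions from the $(M+N-1)K$ transverse directions.
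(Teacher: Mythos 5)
Your proposal is correct and reaches the paper's conclusions, and its skeleton (factorize $K_1^{ij}$ into an output part plus a concept part, note that the concept part pins down $B=B_0$, and observe that the Gaussian--Gaussian case is just Theorem~\ref{thm-main-cbm}) coincides with the paper's. Where you genuinely diverge is in the categorical cases. The paper never computes a Hessian: it invokes Lemmas~\ref{lem-KL-ce} and~\ref{lem-KL-bce} to replace the categorical/Bernoulli KL terms by squared distances in the probability parameters, then uses an ``analytic isomorphism onto the image'' reparametrization to replace $\lVert (s_M(ABx))_{<M}-(s_M(A_0B_0x))_{<M}\rVert^2$ by $\lVert ((A)_{<M}B-(A_0)_{<M}B_0)x\rVert^2$, so that $K_1^{2j}\sim\lVert (A)_{<M}B-(A_0)_{<M}B_0\rVert^2+\lVert B-B_0\rVert^2$ and the reduced model is regular in $(M+N-1)K$ parameters with the last row of $A$ acting as $K$ dummy directions. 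You instead keep the exact KL, identify its true zero set as the softmax gauge orbit $\{A_0+\mathbf{1}_M t^{\top}\}\times\{B_0\}$, and get the RLCT from a Morse--Bott transverse-Hessian argument; your kernel computation (total Hessian vanishes iff $\delta B=0$ and $\delta A\in\mathrm{span}\{\mathbf{1}_M t^{\top}\}$, using $\ker\Lambda(x)=\mathrm{span}\,\mathbf{1}_M$ and full row rank of $B_0$) is sound, and both descriptions give codimension $(M+N-1)K$ with nondegenerate normal form, hence the same $(\lambda,m)$. Your route buys two things: it makes explicit the full-row-rank hypothesis $\mathrm{rank}\,B_0=K$ (which the paper uses silently already in Theorem~\ref{thm-main-cbm} to pass from $AB_0=A_0B_0$ to $A=A_0$), and it exhibits the actual geometry of $K^{-1}(0)$ rather than a gauge-fixed surrogate -- the paper's step dropping the $M$-th softmax coordinate performs that gauge-fixing implicitly and is the least transparent link in its chain. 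The cost is that your ``Morse--Bott $\Rightarrow$ RLCT $=(d-k)/2$, $m=1$'' step still has to be reduced to the paper's Lemma~\ref{lem-equivRLCT} by an explicit analytic change of coordinates separating tangential from normal directions, as you acknowledge; with that supplied, the argument is complete.
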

Moreover, we expand our Main Theorem \ref{thm-main-multitask} for Multitask.
In general, Multitask also has two task and concept types, same as above.
The dimension of $UVx$ is $M+K$; hence, we can decompose the former $M$-dimensional part and the other ($K$-dimensional) as the same way of $z=[y; c]$.
We define this decomposition as $UVx=[(UVx)^{\mathrm{y}}; (UVx)^{\mathrm{c}}]$,
where $(UVx)^{\mathrm{y}} = ((UVx)_h)_{h=1}^{M}$ and $(UVx)^{\mathrm{c}} = ((UVx)_h)_{h=M+1}^{M+K}$.
Since one can easily show $\lVert z-UVx \rVert^2 = \lVert y-(UVx)^{\mathrm{y}} \rVert^2 + \lVert c-(UVx)^{\mathrm{c}} \rVert^2$,
the Multitask model $p_2(z|U,V,x)$ can be decomposed as
\begin{align}
p_2(z|U,V,x)=p_{22}(y|U,V,x)p_{21}(c|U,V,x),
\end{align}
where
\begin{align}
p_{22}(y|U,V,x) &\propto \exp\left( -\frac{1}{2}\lVert y - (UVx)^{\mathrm{y}} \rVert^2 \right), \\
p_{21}(c|U,V,x) &\propto \exp\left( -\frac{1}{2}\lVert c - (UVx)^{\mathrm{c}} \rVert^2 \right).
\end{align}
Similar to the case of CBM, we define each distribution as follows:
\begin{align}
p_{22}^1(y|U,V,x) &\propto \exp\left( -\frac{1}{2}\lVert y -  (UVx)^{\mathrm{y}} \rVert^2 \right), \\
p_{22}^2(y|U,V,x) &= \prod_{j=1}^M (s_{M}((UVx)^{\mathrm{y}}))_j^{y_j}, \\
p_{21}^1(c|U,V,x) &\propto \exp\left( -\frac{1}{2}\lVert c - (UVx)^{\mathrm{c}} \rVert^2 \right), \\
p_{21}^2(c|U,V,x) &= \prod_{k=1}^K (\sigma_{K}((UVx)^{\mathrm{c}}))_k^{c_k} (1-((\sigma_{K}(UVx)^{\mathrm{c}}))_k)^{1-c_k}.
\end{align}
The data-generating distributions are denoted by
$q_{22}^1(y|x) = p_{22}^1(y|U_0,V_0,x)$, $q_{22}^2(y|x) = p_{22}^2(y|U_0,V_0,x)$,
$q_{21}^1(c|x) = p_{21}^1(c|U_0,V_0,x)$, and $q_{21}^2(c|x) = p_{11}^2(c|U_0,V_0,x)$, respectively.
Then Theorem \ref{thm-main-multitask} can be expanded as follows:

\begin{thm}[Expansion of Theorem \ref{thm-main-multitask}]
\label{thm-expand-multitask}
Same as Theorem \ref{thm-expand-cbm},
the models and data-generating distributions can be expressed as
\begin{align}
p_2^{ij}(z|U,V,x) &= p_{22}^i(y|U,V,x)p_{21}^j(c|U,V,x), \ i=1,2, \ j=1,2, \\
q_2^{ij}(z|x) &= p_2^{ij}(y,c|U_0,V_0,x), \ i=1,2, \ j=1,2.
\end{align}
Further, the KL divergences can be expressed as
\begin{align}
K_2^{ij}(U,V)=\displaystyle\iiint dydcdx q'(x)q_2^{ij}(y,c|x) \log \frac{q_2^{ij}(y,c|x)}{p_2^{ij}(y,c|U,V,x)}, \ i=1,2, \ j=1,2,
\end{align}
where $q'(x)$ is the input-generating distribution (as same as Definition \ref{def-MultitaskRLCT}).
Assume that $\mathscr{X}$ is positive definite and $(U,V)$ is in a compact set.
$\lambda_2$ and $m_2$ denote the functions of $(N,H,M,K,H_0)$ in Theorem \ref{thm-main-multitask}.
Then, the maximum pole $(-\lambda_2^{ij})$ and its order $m_2^{ij}$ of the zeta function can be written as
\begin{align}
\zeta_2^{ij}(x) = \iint K_2^{ij}(U,V)^z dUdV,
\end{align}
for $i=1,2$ and $j=1,2$, we have
\begin{align}
\lambda_2^{1j} &= \lambda_2(N,H,M,K,H_0), \\
\lambda_2^{2j} &= \lambda_2(N,H,M-1,K,H_0), \\
m_2^{1j} &= m_2(N,H,M,K,H_0), \\
m_2^{2j} &= m_2(N,H,M-1,K,H_0).
\end{align}
\end{thm}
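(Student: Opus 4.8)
The plan is to reduce all four cases to the pure‑Gaussian Multitask model, for which Theorem~\ref{thm-main-multitask} (via Aoyagi's formula) already gives the RLCT and multiplicity, by showing that at the level of real log canonical thresholds a softmax output block of width $M$ is equivalent to a Gaussian output block of width $M-1$, whereas a Bernoulli output block of width $K$ is equivalent to a Gaussian one of the same width $K$. Throughout I would use the standard comparison principle of singular learning theory: if nonnegative analytic functions $K,\tilde K$ on a compact parameter space satisfy $c_1\tilde K\le K\le c_2\tilde K$ on a neighborhood of their common zero set for constants $c_1,c_2>0$, then the associated zeta functions have the same largest pole and the same order; I write $K\sim\tilde K$, as in the sketch of proof of Theorem~\ref{thm-main-cbm}. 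I would also use the standard localization fact that the largest pole of $\zeta_2^{ij}(z)$ and its order depend only on an arbitrarily small neighborhood of the zero set of $K_2^{ij}$.

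First I would split the two output blocks. Using $\lVert z-UVx\rVert^2=\lVert y-(UVx)^{\mathrm y}\rVert^2+\lVert c-(UVx)^{\mathrm c}\rVert^2$ and the product form $p_2^{ij}=p_{22}^ip_{21}^j$, the divergence $K_2^{ij}(U,V)$ is a sum of a $y$‑term depending only on $U^{\mathrm y}Vx$ and a $c$‑term depending only on $U^{\mathrm c}Vx$, where $U^{\mathrm y}$ and $U^{\mathrm c}$ denote the top $M$ and bottom $K$ rows of $U$, so that $(UVx)^{\mathrm y}=U^{\mathrm y}Vx$ and $(UVx)^{\mathrm c}=U^{\mathrm c}Vx$. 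A Gaussian block contributes exactly a positive multiple of $\mathbb E_{q'}\lVert(\text{logit difference})\rVert^2$. For a Bernoulli block, since $\sigma$ is an analytic diffeomorphism of $\mathbb R$ and the true logits stay bounded on the compact parameter set, each coordinate divergence is analytic in the logit, vanishes only when the logits agree, and has strictly positive second derivative there; hence the $c$‑term is $\sim\mathbb E_{q'}\lVert U^{\mathrm c}Vx-U_0^{\mathrm c}V_0x\rVert^2$, with no loss of dimension. For a softmax block, the categorical divergence is analytic in the logit vector $\xi$ and invariant under $\xi\mapsto\xi+t\mathbf 1_M$; in the quotient coordinate $\eta=D\xi$, where $D$ is the $(M-1)\times M$ full‑rank difference matrix with $\ker D=\mathbb R\mathbf 1_M$, it becomes a nondegenerate analytic function of $\eta$ with positive definite Hessian at $\eta_0=D\xi_0$, hence $\sim\lVert\eta-\eta_0\rVert^2$; so the $y$‑term is $\sim\mathbb E_{q'}\lVert DU^{\mathrm y}Vx-DU_0^{\mathrm y}V_0x\rVert^2$.

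Next I would assemble. Because $\mathscr X$ is positive definite, each $\mathbb E_{q'}\lVert(\cdot)x\rVert^2$ is comparable to the Frobenius norm of the corresponding matrix, so $K_2^{ij}(U,V)\sim\lVert R_iU^{\mathrm y}V-R_iU_0^{\mathrm y}V_0\rVert^2+\lVert U^{\mathrm c}V-U_0^{\mathrm c}V_0\rVert^2$ with $R_1=I_M$, $R_2=D$. For $i=1$ this is exactly the three‑layered linear divergence with output dimension $M+K$, hidden width $H$, input dimension $N$, true rank $H_0$, so Theorem~\ref{thm-main-multitask} gives $\lambda_2^{1j}=\lambda_2(N,H,M,K,H_0)$ and $m_2^{1j}=m_2(N,H,M,K,H_0)$. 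For $i=2$, $K_2^{2j}$ depends on $U^{\mathrm y}$ only through $DU^{\mathrm y}$; since $(DU^{\mathrm y},\mathbf 1_M^\top U^{\mathrm y})$ is a linear change of coordinates on $U^{\mathrm y}$, I would integrate the $\mathbf 1_M$‑directions out of $\zeta_2^{2j}(z)$ — on a product neighborhood of the zero set they contribute only a finite nonzero factor and affect neither the largest pole nor its order — leaving a zeta function in $(\tilde U,V)$ with $\tilde U:=[DU^{\mathrm y};U^{\mathrm c}]\in\mathbb R^{(M-1+K)\times H}$ and integrand $\sim\lVert\tilde UV-\tilde U_0V_0\rVert^2$. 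Taking the softmax‑invariant representative of the true $y$‑logits whose columns lie in $\mathbf 1_M^\perp$ keeps $\mathrm{rank}(\tilde U_0V_0)=H_0$, so this is again the three‑layered linear divergence with $M$ replaced by $M-1$, and Theorem~\ref{thm-main-multitask} yields $\lambda_2^{2j}=\lambda_2(N,H,M-1,K,H_0)$, $m_2^{2j}=m_2(N,H,M-1,K,H_0)$, for $j=1,2$.

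The step I expect to be the main obstacle is the softmax reduction: showing that the categorical divergence is two‑sidedly comparable to $\lVert D(\cdot)\rVert^2$ on a full neighborhood of the zero set of $K_2^{2j}$ — not merely to second order at a single point — and then justifying that the $\mathbf 1_M$‑directions decouple cleanly and that the positive‑definiteness and true‑rank hypotheses needed to apply Aoyagi's theorem are inherited by the reparametrized divergence (including the mild bookkeeping about which representative of the true logits carries the rank $H_0$). Once the comparison principle and these reductions are in place, the Gaussian and Bernoulli blocks and the final assembly are routine.
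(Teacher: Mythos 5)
Your proposal is correct and follows essentially the same route as the paper's proof: split $K_2^{ij}$ into the $y$-block and $c$-block, use the comparison lemmas to replace the Gaussian, Bernoulli, and categorical divergences by squared errors in the logits (the softmax block losing exactly one output dimension, the Bernoulli block losing none), and then invoke Theorem \ref{thm-main-multitask} with output dimension $M+K$ for $i=1$ and $M+K-1$ for $i=2$. The only real difference is in handling the softmax degeneracy --- you quotient the logits by the $\mathbf{1}_M$ direction via the difference matrix $D$, whereas the paper drops the $M$-th row of $U$; both yield the same reduced-rank form and the same RLCT, and the rank bookkeeping you flag (whether the reduced true matrix retains rank $H_0$) is a subtlety the paper's own proof glosses over in exactly the same way.
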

We prove Theorems \ref{thm-expand-cbm} and \ref{thm-expand-multitask} in \ref{sec:Proof}.
In addition, the above expanded theorems lead the following corollaries
that consider the case (the composed case) the outputs or concepts are composed of both real numbers and categorical variables.
Let $y^{\mathrm{r}}$ and $y^{\mathrm{c}}$ be the $M^{\mathrm{r}}$-dimensional real vector and $M^{\mathrm{c}}$-dimensional categorical variable, respectively.
These are observed variables of outputs.
Let $c^{\mathrm{r}}$ and $c^{\mathrm{c}}$ be the $K^{\mathrm{r}}$-dimensional real vector and $K^{\mathrm{c}}$-dimensional categorical variable, respectively.
They serve as concepts that describe the outputs from $N$-dimensional inputs.
In the same way of the definition of $z=[y;c]$, put $y=[y^{\mathrm{r}}; y^{\mathrm{c}}]$ and $c=[c^{\mathrm{r}}; c^{\mathrm{c}}]$.
Also, set $M=M^{\mathrm{r}}+M^{\mathrm{c}}$ and $K=K^{\mathrm{r}}+K^{\mathrm{c}}$,
where $M^{\mathrm{r}}, M^{\mathrm{c}}, K^{\mathrm{r}}, K^{\mathrm{c}} \geqq 1$.
Similarly, we have $ABx=[(ABx)^{\mathrm{r}}; (ABx)^{\mathrm{c}}]$ and $Bx=[(Bx)^{\mathrm{r}}; (Bx)^{\mathrm{c}}]$,
where
\begin{align}
(ABx)^{\mathrm{r}} &= ((ABx)_h)_{h=1}^{M^{\mathrm{r} }}, \
(ABx)^{\mathrm{c}} = ((ABx)_h)_{h=M^{\mathrm{r}}+1}^{M}, \\
(Bx)^{\mathrm{r}} &= ((Bx)_h)_{h=1}^{K^{\mathrm{r} }}, \
(Bx)^{\mathrm{c}} = ((Bx)_h)_{h=K^{\mathrm{r}}+1}^{K},
\end{align}
and $(ABx)_h$ and $(Bx)_h$ is the $h$-th entry of them, respectively.
Even if the outputs and concepts are composed of both real numbers and categorical variables,
using Theorems \ref{thm-expand-cbm} and \ref{thm-expand-multitask},
we can immediately derive the RLCT $\lambda_1^{\mathrm{com}}$ and its multiplicity $m_1^{\mathrm{com}}$ as follows:

\begin{cor}[RLCT of CBM in Composed Case]
\label{cor-expand-main}
Let $p_1^{\mathrm{com}}(y,c|A,B,x)$ be the statistical model of CBM in the composed case and $p_{12}^{\mathrm{com}}(y|A,B,x)$ and $p_{11}^{\mathrm{com}}(c|B,x)$ be the following probability distributions:
\begin{align}
p_{12}^{\mathrm{com}}(y|A,B,x) &\propto \exp\left( -\frac{1}{2}\lVert y^{\mathrm{r}} - (ABx)^{\mathrm{r}} \rVert^2 \right)
\times \prod_{j=1}^{M^{\mathrm{c}}} (s_{M^{\mathrm{c}}}((ABx)^{\mathrm{c}}))_j^{y_j}, \\
p_{11}^{\mathrm{com}}(c|B,x) &\propto \exp\left( -\frac{\gamma}{2}\lVert c^{\mathrm{r}} - (Bx)^{\mathrm{r}} \rVert^2 \right)
\times \left( \prod_{k=1}^{K^{\mathrm{c}}} (\sigma_{K^\mathrm{c}}((Bx)^{\mathrm{c}}))_k^{c_k} (1-(\sigma_{K^\mathrm{c}}((Bx)^{\mathrm{c}}))_k)^{1-c_k} \right)^{\gamma}.
\end{align}
The data-generating distribution is denoted by
\begin{align}
q_1^{\mathrm{com}}(y,c|x)=p_{12}^{\mathrm{com}}(y|A_0,B_0,x)p_{11}^{\mathrm{com}}(c|B_0,x).
\end{align}
The KL divergence can be expressed as
\begin{align}
K_1^{\mathrm{com}}(A,B) = \displaystyle\iiint dydcdx q'(x)q_1^{\mathrm{com}}(y,c|x) \log \frac{q_1^{\mathrm{com}}(y,c|x)}{p_1^{\mathrm{com}}(y,c|A,B,x)},
\end{align}
where $q'(x)$ is the input-generating distribution (as same as Definition \ref{def-CBMRLCT}).
Assume $\mathscr{X}$ is positive definite and $(A,B)$ is in a compact set
Then, the RLCT $\lambda_1^{\mathrm{com}}$ and its multiplicity $m_1^{\mathrm{com}}$ of $K_1^{\mathrm{com}}$ can be expressed as follows:
\begin{align}
\lambda_1^{\mathrm{com}} &= \frac{1}{2}(M^{\mathrm{r}}+M^{\mathrm{c}}+N-1)(K^{\mathrm{r}}+K^{\mathrm{c}}), \\
m_1^{\mathrm{com}} &= 1.
\end{align}
\end{cor}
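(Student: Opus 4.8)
The plan is to reduce Corollary \ref{cor-expand-main} to a direct application of Theorem \ref{thm-expand-cbm} by showing that the KL divergence $K_1^{\mathrm{com}}(A,B)$ is, up to the equivalence relation $\sim$ that preserves RLCT and multiplicity, the same function of $(A,B)$ as the purely categorical case $K_1^{22}(A,B)$. First I would write out $K_1^{\mathrm{com}}(A,B)$ using the product structure $p_1^{\mathrm{com}} = p_{12}^{\mathrm{com}} p_{11}^{\mathrm{com}}$ and the fact that $\log(q_1^{\mathrm{com}}/p_1^{\mathrm{com}})$ splits as a sum over the four blocks $y^{\mathrm r}, y^{\mathrm c}, c^{\mathrm r}, c^{\mathrm c}$. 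Each block contributes a nonnegative term to $K_1^{\mathrm{com}}$: the real blocks give quadratic forms in $(ABx - A_0B_0x)^{\mathrm r}$ and $(Bx - B_0x)^{\mathrm r}$ (after integrating out the Gaussian noise, as in the sketch of proof of Theorem \ref{thm-main-cbm}), while the categorical blocks give the KL divergences between softmax/sigmoid outputs at parameter $(A,B)$ and at $(A_0,B_0)$, exactly as analyzed in the proof of Theorem \ref{thm-expand-cbm}.

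Next I would invoke the local behavior of each categorical term near the true parameter. The key fact, already used implicitly in Theorem \ref{thm-expand-cbm}, is that the KL divergence associated with a softmax layer of output dimension $M^{\mathrm c}$ behaves, near the true value, like a positive semidefinite quadratic form of rank $M^{\mathrm c}-1$ in the logit differences $(ABx - A_0B_0x)^{\mathrm c}$ (the rank drop by one coming from the shift-invariance of softmax), and similarly the sigmoid layer gives a full-rank quadratic form in $(Bx - B_0x)^{\mathrm c}$. Combining the real blocks (full rank) with the categorical blocks (rank-deficient by one for the $y$-side softmax, full rank for every sigmoid), I get that $K_1^{\mathrm{com}}(A,B) \sim$ a sum of the form $\lVert \widetilde{AB} - \widetilde{A_0B_0}\rVert^2 + \lVert B - B_0\rVert^2$ where $\widetilde{\cdot}$ denotes $ABx$ with one coordinate of the softmax block suppressed — that is, the same picture as in Theorem \ref{thm-main-cbm} but with the effective output dimension reduced from $M$ to $M-1$ (one from the global softmax over the $M^{\mathrm c}$ categorical output coordinates; note the real output coordinates are unaffected, which is why it is $M-1$ and not $M - M^{\mathrm c}$). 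The concept side contributes its full $K = K^{\mathrm r} + K^{\mathrm c}$ dimensions with no rank loss, since there is no single global softmax over the concepts — the sigmoids are independent.

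From there the argument is identical to the regular-model computation: the zero set $K_1^{\mathrm{com},-1}(0)$ is $\{(A_0,B_0)\}$ after accounting for the one-dimensional softmax redundancy being absorbed into the reduced output dimension, so $K_1^{\mathrm{com}}$ is a regular model and its RLCT equals half the effective parameter dimension. With effective output dimension $M-1 = M^{\mathrm r} + M^{\mathrm c} - 1$ and concept count $K = K^{\mathrm r} + K^{\mathrm c}$, this gives $\lambda_1^{\mathrm{com}} = \frac{1}{2}(M^{\mathrm r} + M^{\mathrm c} + N - 1)(K^{\mathrm r} + K^{\mathrm c})$ and $m_1^{\mathrm{com}} = 1$. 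I expect the main obstacle to be bookkeeping the rank of the combined quadratic form correctly — specifically, verifying that mixing a Gaussian block and a softmax block in the output produces exactly one rank deficiency (not zero, not $M^{\mathrm c}-1$ separately), and that the positivity of $\mathscr{X}$ plus the compactness assumption are enough to rule out extra zeros of $K_1^{\mathrm{com}}$ coming from the interaction of the $AB$ product structure with the suppressed softmax coordinate. Handling that cleanly is really just citing the corresponding step in the proof of Theorem \ref{thm-expand-cbm} with $M$ replaced by $M-1$, so the corollary should follow with minimal extra work once the decomposition of $K_1^{\mathrm{com}}$ is laid out.
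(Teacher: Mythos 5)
Your proposal is correct and follows essentially the same route as the paper: the paper justifies this corollary in one line by the same block decomposition of $y$ and $c$ into real and categorical parts, applying the Gaussian, softmax, and sigmoid lemmas (Lemmas \ref{lem-KL-sqerr}, \ref{lem-KL-ce}, \ref{lem-KL-bce}) exactly as in the proof of Theorem \ref{thm-expand-cbm}, with the single softmax over the $M^{\mathrm{c}}$ categorical output coordinates reducing the effective output dimension by one and the coordinate-wise sigmoids on the concepts costing nothing. Your accounting of the rank deficiency (exactly one, coming only from the output-side softmax) matches the paper's computation, so the proposal is fine.
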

This is because those concepts are decomposed as the real number part and the categorical one in the same way of correspondence between $z=[y; c]$ and $UVx=[(UVx)^{\mathrm{y}}; (UVx)^{\mathrm{c}}]$.
The composed case for Multitask is also easily determined as the following.
Note that $(UVx)^{\mathrm{r}}$ and $(UVx)^{\mathrm{c}}$ are defined in the same way of $ABx$.
\begin{cor}[RLCT of Multitask in Composed Case]
\label{cor-expand-main}
Let $p_2^{\mathrm{com}}(y,c|U,V,x)$ be the statistical model of Multitask in the composed case and $p_{22}^{\mathrm{com}}(y|U,V,x)$ and $p_{21}^{\mathrm{com}}(c|U,V,x)$ be
the following probability distributions:
\begin{align}
p_{22}^{\mathrm{com}}(y|U,V,x) &\propto \exp\left( -\frac{1}{2}\lVert y^{\mathrm{r}} - (UVx)^{\mathrm{r}} \rVert^2 \right) \times \prod_{j=1}^{M^{\mathrm{c}}} (s_{M^{\mathrm{c}}}((UVx)^{\mathrm{c}}))_j^{y_j}, \\
p_{21}^{\mathrm{com}}(c|U,V,x) &\propto \exp\left( -\frac{\gamma}{2}\lVert c^{\mathrm{r}} - (UVx)^{\mathrm{r}} \rVert^2 \right) \times \prod_{k=1}^{K^{\mathrm{c}}} (\sigma_{K^\mathrm{c}}((UVx)^{\mathrm{c}}))_k^{c_k} (1-(\sigma_{K^\mathrm{c}}((UVx)^{\mathrm{c}}))_k)^{1-c_k}.
\end{align}
The data-generating distribution is denoted by
\begin{align}
q_2^{\mathrm{com}}(y,c|x)=p_{22}^{\mathrm{com}}(y|U_0,V_0,x)p_{21}^{\mathrm{com}}(c|U_0,V_0,x).
\end{align}
Put the KL divergence as
\begin{align}
K_2^{\mathrm{com}}(U,V) = \displaystyle\iiint dydcdx q'(x)q_2^{\mathrm{com}}(y,c|x) \log \frac{q_2^{\mathrm{com}}(y,c|x)}{p_2^{\mathrm{com}}(y,c|U,V,x)},
\end{align}
where $q'(x)$ is the input-generating distribution (as same as Definition \ref{def-MultitaskRLCT}).
Assume $\mathscr{X}$ is positive definite and $(U,V)$ is in a compact set.
Then, the RLCT $\lambda_2^{\mathrm{com}}$ and its multiplicity $m_2^{\mathrm{com}}$ of $K_2^{\mathrm{com}}$ are as the followings:
\begin{align}
\lambda_2^{\mathrm{com}} &= \lambda_2(N,H,M^{\mathrm{r}}+M^{\mathrm{c}}-1,K^{\mathrm{r}}+K^{\mathrm{c}},H_0), \\
m_2^{\mathrm{com}} &= m_2(N,H,M^{\mathrm{r}}+M^{\mathrm{c}}-1,K^{\mathrm{r}}+K^{\mathrm{c}},H_0).
\end{align}
\end{cor}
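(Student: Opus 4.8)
\emph{Proof proposal.} The plan is to reduce the composed case to the already-established Theorem \ref{thm-expand-multitask}. Since $p_2^{\mathrm{com}} = p_{22}^{\mathrm{com}}p_{21}^{\mathrm{com}}$ and $q_2^{\mathrm{com}}$ factorizes in the same way, the KL divergence splits additively, $K_2^{\mathrm{com}}(U,V) = K_{22}^{\mathrm{com}}(U,V) + K_{21}^{\mathrm{com}}(U,V)$, where the first summand depends on $UV - U_0V_0$ only through the coordinate blocks of $UVx$ that predict $y = [y^{\mathrm{r}};y^{\mathrm{c}}]$ and the second only through the blocks that predict $c = [c^{\mathrm{r}};c^{\mathrm{c}}]$. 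First I would treat each summand separately and express it, up to the relation $\sim$ (equal RLCT and multiplicity) introduced in the sketch of proof of Theorem \ref{thm-main-cbm}, as a nondegenerate quadratic form in $UV - U_0V_0$ of a definite rank, exactly as in the proof of Theorem \ref{thm-expand-multitask} for the pure cases.

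For the output summand, the $y^{\mathrm{r}}$ block contributes the $q'(x)$-average of the squared Euclidean norm on its $M^{\mathrm{r}}$ coordinates, a quadratic form of rank $M^{\mathrm{r}}$; and the $y^{\mathrm{c}}$ block, after Taylor-expanding the softmax KL divergence about the true logits (the argument already used to obtain $\lambda_2^{2j}$), contributes a positive semidefinite form whose kernel is exactly the one-dimensional space of constant shifts, hence of rank $M^{\mathrm{c}}-1$. Because the two forms live on disjoint coordinate blocks they add to a form of rank $M^{\mathrm{r}}+M^{\mathrm{c}}-1$, so $K_{22}^{\mathrm{com}}$ is $\sim$-equivalent to the Gaussian output KL of dimension $M^{\mathrm{r}}+M^{\mathrm{c}}-1$. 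For the concept summand, the $c^{\mathrm{r}}$ block gives rank $K^{\mathrm{r}}$ and the $c^{\mathrm{c}}$ block, via the sigmoid KL Taylor expansion, gives rank $K^{\mathrm{c}}$ --- no dimension is lost here because each Bernoulli coordinate is an independent logistic with a nondegenerate one-dimensional Fisher form --- so $K_{21}^{\mathrm{com}}$ is $\sim$-equivalent to the Gaussian KL of dimension $K^{\mathrm{r}}+K^{\mathrm{c}}$.

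Combining, $K_2^{\mathrm{com}}(U,V)$ is $\sim$-equivalent to the KL divergence of the Gaussian Multitask model of Definition \ref{def-MultitaskNN} with input dimension $N$, $H$ middle units, true rank $H_0$, and output dimension $(M^{\mathrm{r}}+M^{\mathrm{c}}-1)+(K^{\mathrm{r}}+K^{\mathrm{c}})$. Applying Theorem \ref{thm-expand-multitask} with $M$ replaced by $M^{\mathrm{r}}+M^{\mathrm{c}}-1$ and $K$ by $K^{\mathrm{r}}+K^{\mathrm{c}}$ then yields $\lambda_2^{\mathrm{com}} = \lambda_2(N,H,M^{\mathrm{r}}+M^{\mathrm{c}}-1,K^{\mathrm{r}}+K^{\mathrm{c}},H_0)$ and $m_2^{\mathrm{com}} = m_2(N,H,M^{\mathrm{r}}+M^{\mathrm{c}}-1,K^{\mathrm{r}}+K^{\mathrm{c}},H_0)$.

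The step I expect to be the main obstacle is the rank bookkeeping in the output summand: one must check that the softmax expansion loses exactly one dimension, that this lost dimension is neither restored nor compounded when the categorical form is added to the full-rank real form on the complementary coordinates, and that passing from a quadratic form in $UVx$ averaged against $q'(x)$ to a quadratic form in the matrix entries of $UV - U_0V_0$ preserves rank --- this last point being precisely where positive-definiteness of $\mathscr{X}$ and compactness of the parameter set are used. Once this accounting is settled, the rest is an immediate appeal to Theorem \ref{thm-expand-multitask}.
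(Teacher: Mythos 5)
Your proposal is correct and follows essentially the same route as the paper: the paper justifies this corollary by exactly the block decomposition you describe, reducing each real block to a full-rank squared-error term via Lemma \ref{lem-KL-sqerr}, the softmax block to a rank-$(M^{\mathrm{c}}-1)$ term via Lemma \ref{lem-KL-ce} and the analytic isomorphism $(w)_{<M^{\mathrm{c}}}\mapsto(s_{M^{\mathrm{c}}}(w))_{<M^{\mathrm{c}}}$, and the Bernoulli block to a full-rank term via Lemma \ref{lem-KL-bce}, then deleting the corresponding single row of $U$ and invoking Theorem \ref{thm-main-multitask} with output dimension $M^{\mathrm{r}}+M^{\mathrm{c}}-1+K^{\mathrm{r}}+K^{\mathrm{c}}$. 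The rank bookkeeping you flag as the main obstacle is handled in the paper exactly as you anticipate, in the proofs of Theorems \ref{thm-expand-cbm} and \ref{thm-expand-multitask}.
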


\section{Discussion}
\label{sec:Discuss}
In this paper, we described how the RLCTs of CBM and Multitask can be determined in the case of a three-layered linear neural network.
Using these RLCTs and Eqs. (\ref{thm-watanabeG}) and (\ref{thm-watanabeF}),
we also clarified the exact asymptotic forms of the Bayesian generalization error and the marginal likelihood in these models.


There are two limitations to this study.
The first is that this article treats three-layered neural networks.
If the input is an intermediate layer of a high accuracy neural network,
this model freezes when learning the last full-connected linear layer.
Thus, our result is valuable for the foundation of not only learning three-layered neural networks but also for transfer learning.
In fact, from the perspective of feature extracting, instead of the original input,
an intermediate layer of a state-of-the-art neural network can be used as an input to another model \cite{Donahue2014decaf,Sharif2014cnn,Yosinski2014transferable}.

The second limitation is that our formulation of CBM for Bayesian inference is based on Joint CBM.
There are two other types of CBM: Independent CBM and Sequential CBM \cite{Koh2020CBM20a}.
In Independent CBM, functions $x \mapsto c$ and $c \mapsto y$ are independently learned.
When the neural network is three-layered and linear, learning Independent CBM is equivalent to just estimating two independent linear transformation $c=Bx$ and $y=Ac$.
The graphical model of Independent CBM is $x \to B \to c \to A \to y$.
Clearly, $(A,B)$ is identifiable and the model is regular.
In contrast, Sequential CBM performs a two-step estimation.
First, $B$ is estimated as $c=Bx$.
Then, $A$ is learned as $y=A\hat{c}$,
where $\hat{B}$ is the estimator of $B$ and $\hat{c}=\hat{B}x$.
Since $\hat{c}$ is subject to a predictive distribution of $c$ conditioned $x$, its graphical model is the same as Joint CBM (Figure \ref{fig:CBM}).
Aiming the point of two-step estimation, for Bayesian inference of $A$, we set a prior of $B$ as the posterior of $B$ inferred by $c=Bx$, i.e. the prior distribution depends on the data.
If we ignored the point of two-step estimation, Bayesian inference of Sequential CBM would be that of Joint CBM.
Singular learning theory with data-dependent prior distribution is challenging because that theory use the prior as a measure of an integral to characterize the RLCT and its multiplicity (see Proposition \ref{prop-volume-dim}).
To resolve this issue, a new analysis method for Bayesian generalization error and free energy must be established.

Despite some of the above-mentioned limitations, through the contribution of this study,
we can obtain a new perspective that CBM is a parameter-restricted model.
According to the proof of Theorem \ref{thm-main-cbm}, the concept bottleneck structure $p_{11}(c|B,x)$ makes the neural network regular
whereas Standard is singular.
In other words, the concept bottleneck structure gives the constrain condition $B=B_0$ for the analytic set $K_1^{-1}(0)$ which we should consider for finding the RLCT.
This structure is added to Standard for interpretability.
Hence, in singular learning theory of interpretable models, Theorem \ref{thm-main-cbm} presents nontrivial results
which the constrain of the parameter for explanation affects the behavior of generalization;
this is the case in which the restriction for interpretability changes the model from singular to regular.

Finally, we discuss the model selection process for CBM and Multitask.
Bothe these models use a similar dataset composed of inputs, concepts, and outputs.
Additionally, they interpret the reason behind the predicted result using the observed concepts.
In both these approaches, supervised learning is carried out from the inputs to the concepts and outputs.
However, their model structures are different since CBM uses concepts for the middle layer units and Multitask does them to the outputs.
How does this difference affect generalization performance and accuracy of knowledge discovery?
We figure out that issue in the sense of Bayesian generalization error and free energy (negative log marginal likelihood).
Figures \ref{visrlct1K}-\ref{visrlct6K} show the behaviors of the RLCTs in CBM and Multitask when the number of concepts, i.e. $K$, increases.
In addition, Figures \ref{visrlct1H}-\ref{visrlct6H} illustrate the instances when the number of intermediate layer units $H$ increases.
In both these figures, the RLCT of CBM is a straight line and that of Multitask is a contour (piecewise linear).
For CBM, as mentioned in Definition \ref{def-CBMNN}, it is characterized by $(M,N,K)$ and the intermediate layer units is same as that of concepts in the network architecture.
As an inevitable conclusion, the RLCT of CBM does not depend on $H$ even if it uses the same pair $(y,c,x)$ as Multitask.
For Multitask, according to \cite{Aoyagi1}, the RLCT of Standard is also similar contour as a graph of a function from $H$ to the RLCT.
This similarity is immediately derived from Theorem \ref{thm-main-multitask} and \cite{Aoyagi1} (see also the proof of Theorem \ref{thm-main-multitask}).
Furthermore, we can determine the cross point between the RLCT curves of CBM and Multitask.
The RLCT dominates the asymptotic forms of the Bayesian generalization error and the free energy.
If the RLCT is greater, the Bayesian generalization error and the free energy also increases.
Thus, if their theoretical behaviors are clarified, 
then issues pertaining to the selection of the data analysis method can be clarified.
Hence, it is important for researchers and practitioners, for
whom accuracy is paramount, to compare CBM and Multitask.
Proposition \ref{prop-compare} gives us
\begin{figure}[H]
 \begin{minipage}[b]{0.32\linewidth}
  \centering
  \includegraphics[width=5.3cm,height=3cm]
  {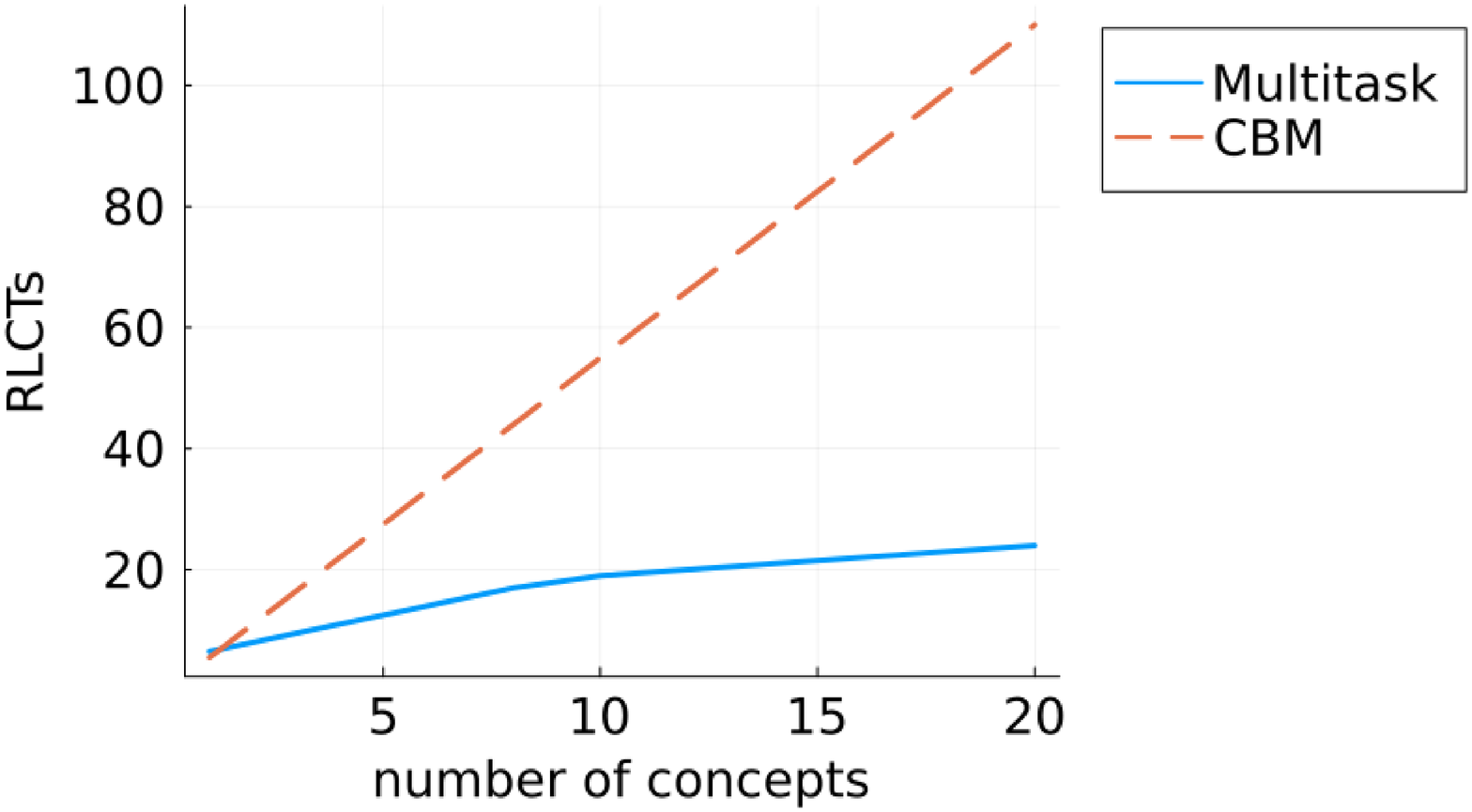}
  \subcaption{$H=3$, $H_0=1$}\label{visrlct1K}
 \end{minipage}
 \begin{minipage}[b]{0.32\linewidth}
  \centering
  \includegraphics[width=5.3cm,height=3cm]
  {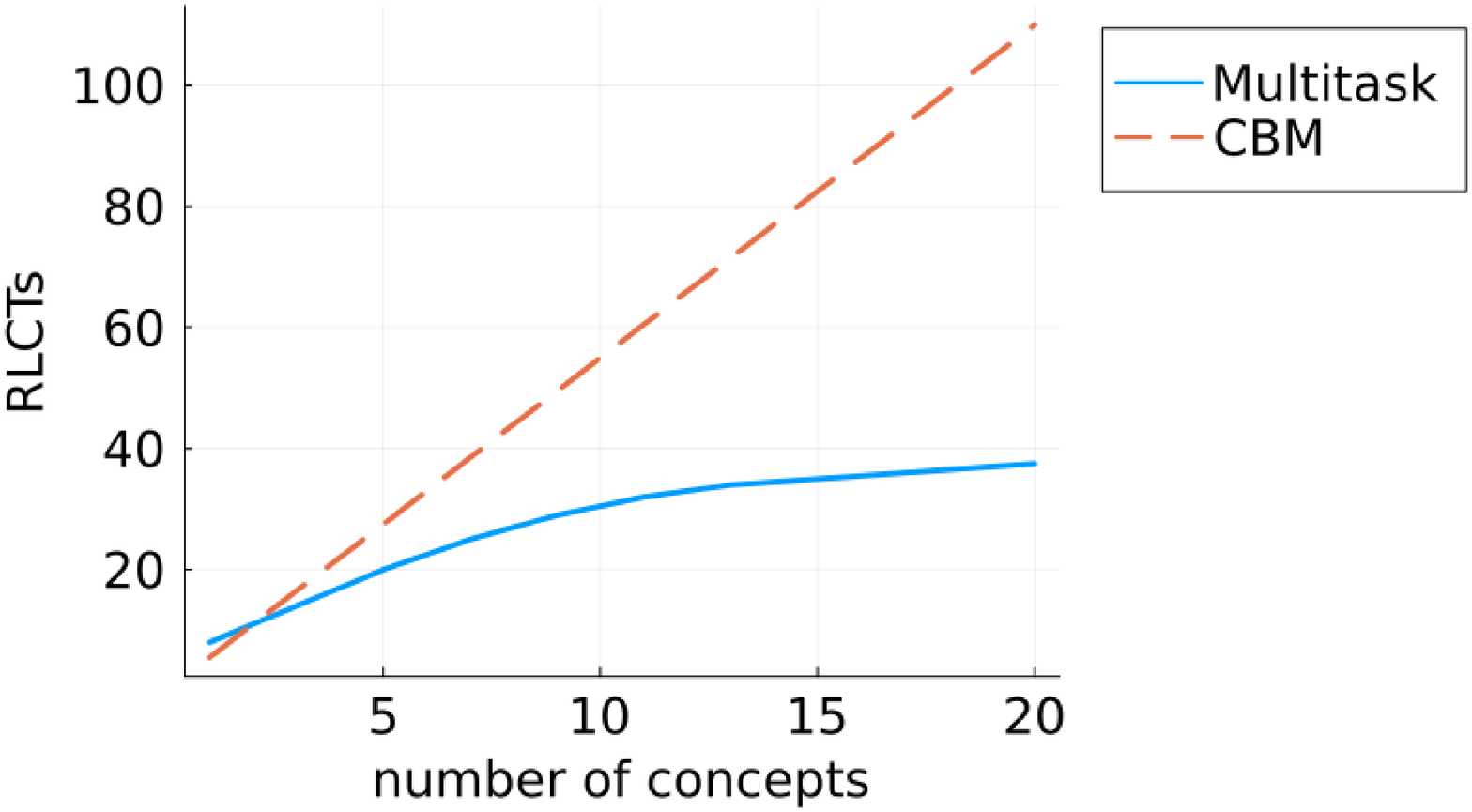}
  \subcaption{$H=6$, $H_0=1$}\label{visrlct2K}
 \end{minipage}
 \begin{minipage}[b]{0.32\linewidth}
  \centering
  \includegraphics[width=5.3cm,height=3cm]
  {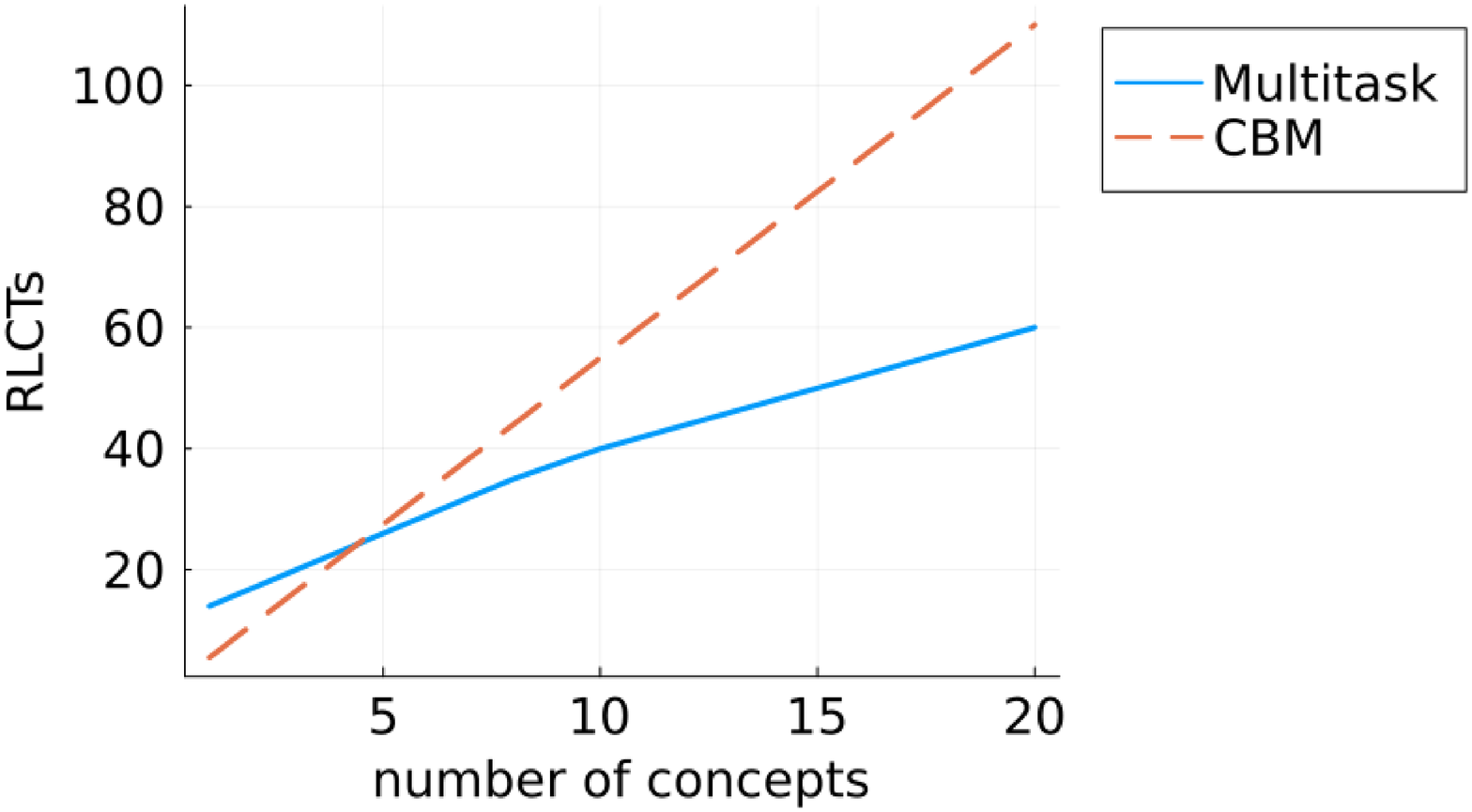}
  \subcaption{$H=6$, $H_0=4$}\label{visrlct3K}
 \end{minipage}\\
 \begin{minipage}[b]{0.32\linewidth}
  \centering
  \includegraphics[width=5.3cm,height=3cm]
  {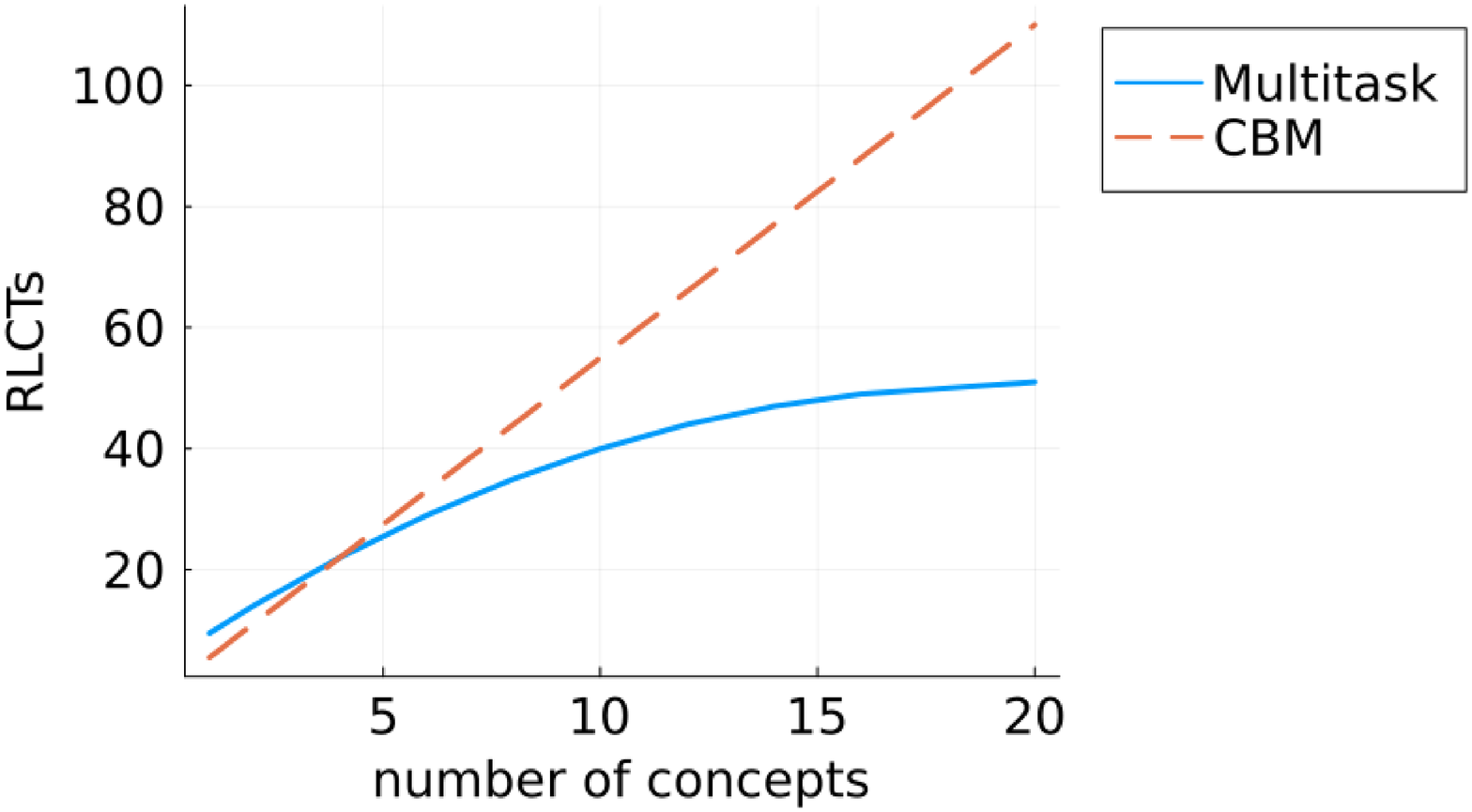}
  \subcaption{$H=9$, $H_0=1$}\label{visrlct4K}
 \end{minipage}
 \begin{minipage}[b]{0.32\linewidth}
  \centering
  \includegraphics[width=5.3cm,height=3cm]
  {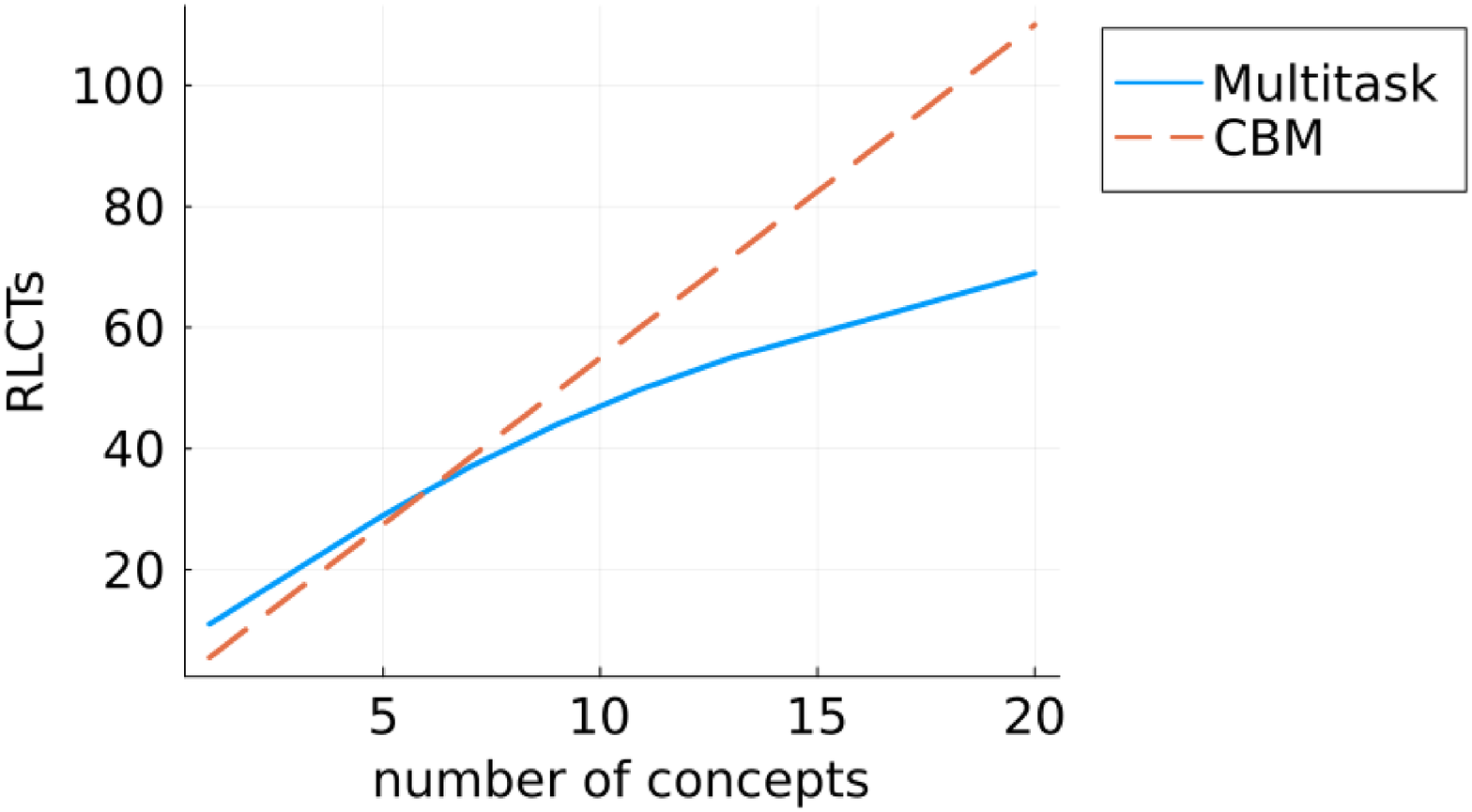}
  \subcaption{$H=9$, $H_0=4$}\label{visrlct5K}
 \end{minipage}
 \begin{minipage}[b]{0.32\linewidth}
  \centering
  \includegraphics[width=5.3cm,height=3cm]
  {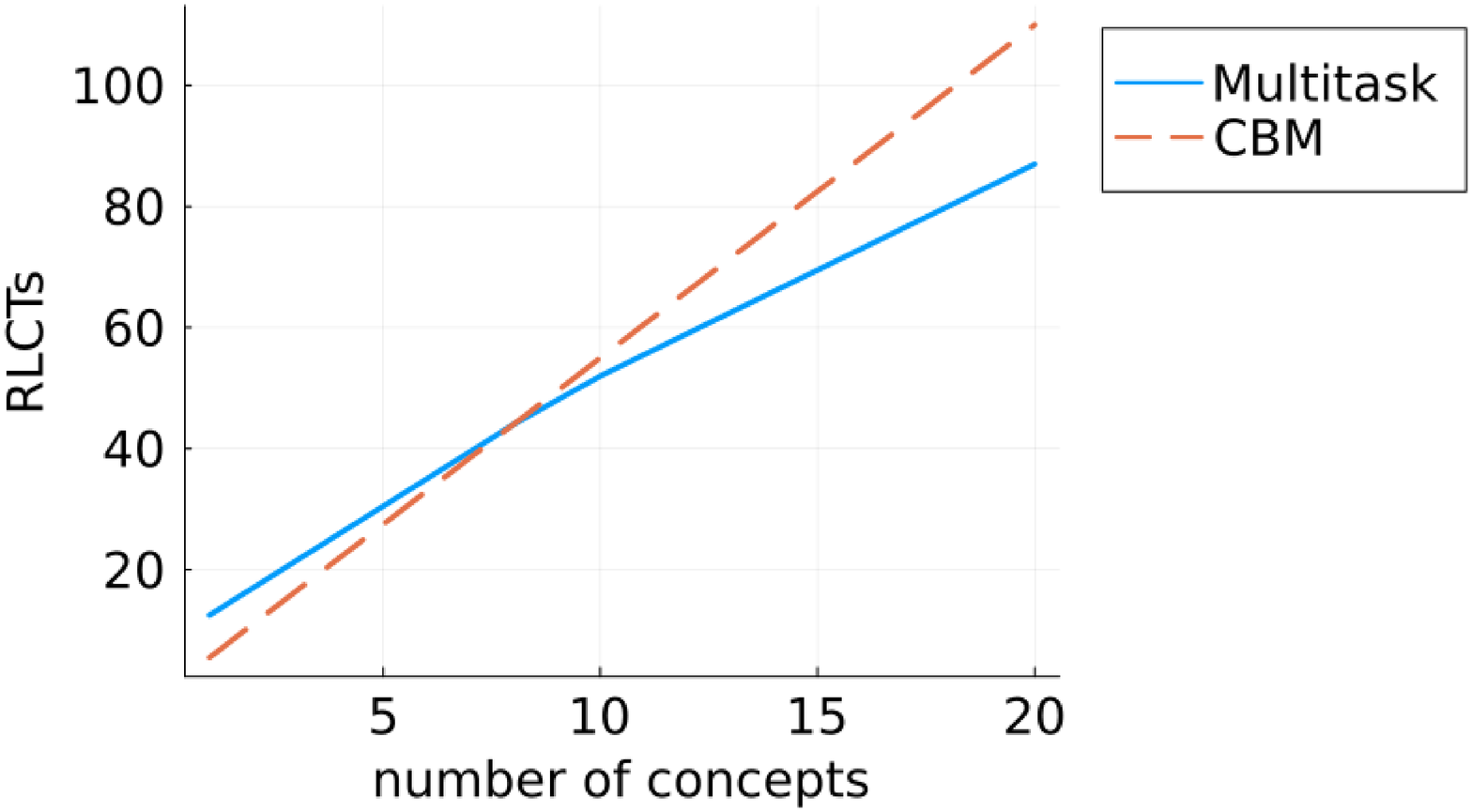}
  \subcaption{$H=9$, $H_0=7$}\label{visrlct6K}
 \end{minipage}
 \caption{Behaviors of RLCTs of CBM and Multitask as controlled by $K$.
The vertical axis represents the value of the RLCT and the horizontal one does the number of concepts $K$.
The RLCT behaviors are visualized as graphs of functions of $K$,
where $M=10$ and $N=1$ are fixed and $H$ and $H_0$ are set as the subcaptions.
The RLCT of CBM is drawn as dashed lines and that of Multitask as solid lines.
They are significantly different since one is linear and the other is non-linear (piecewise linear).
This is because the RLCT of Multitask is dependent of $H$ and $H_0$ but CBM is not.
}\label{visrlctK}
\end{figure}\!\!\!\!
\begin{figure}[H]
 \begin{minipage}[b]{0.32\linewidth}
  \centering
  \includegraphics[width=5.3cm,height=3cm]
  {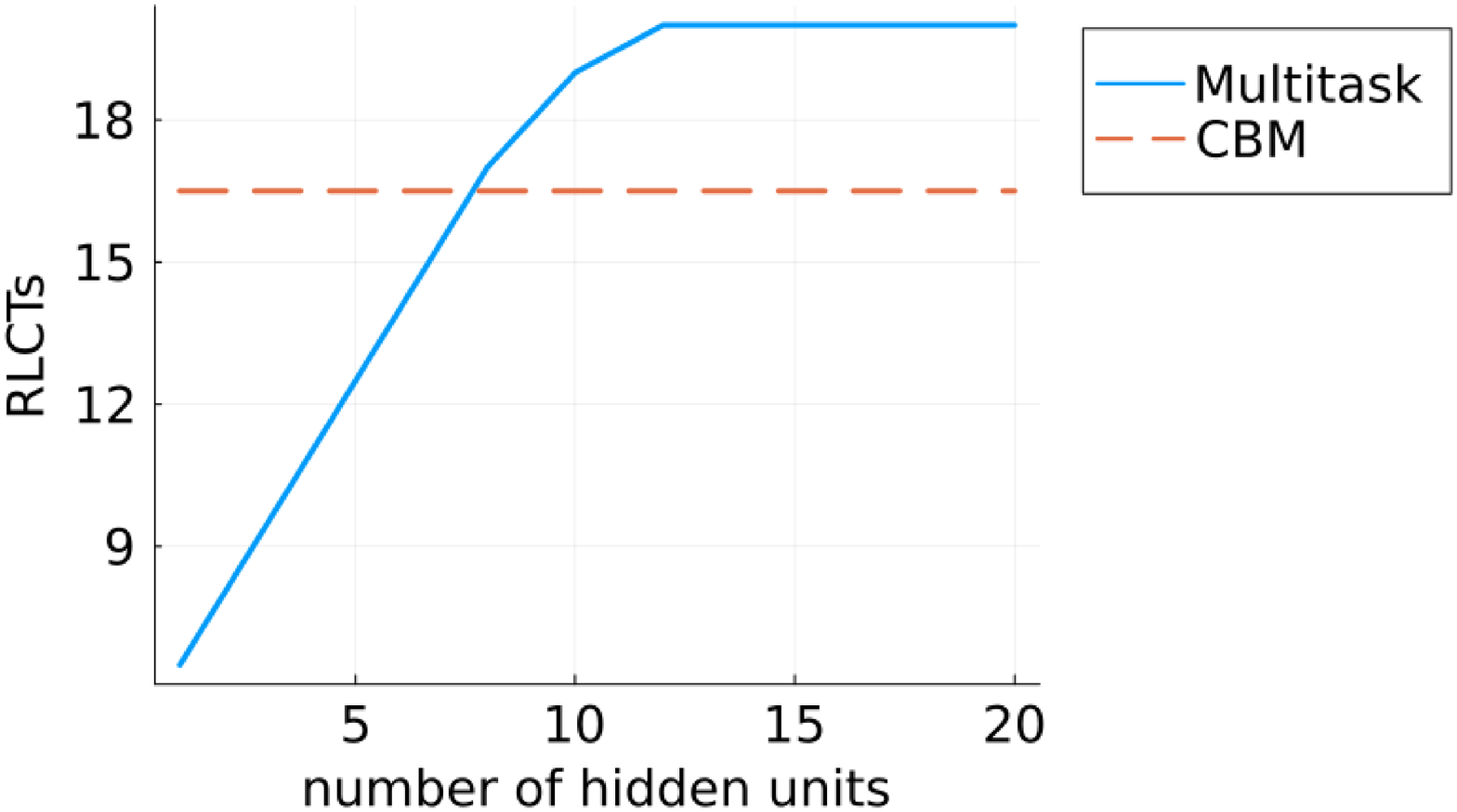}
  \subcaption{$K=3$, $H_0=1$}\label{visrlct1H}
 \end{minipage}
 \begin{minipage}[b]{0.32\linewidth}
  \centering
  \includegraphics[width=5.3cm,height=3cm]
  {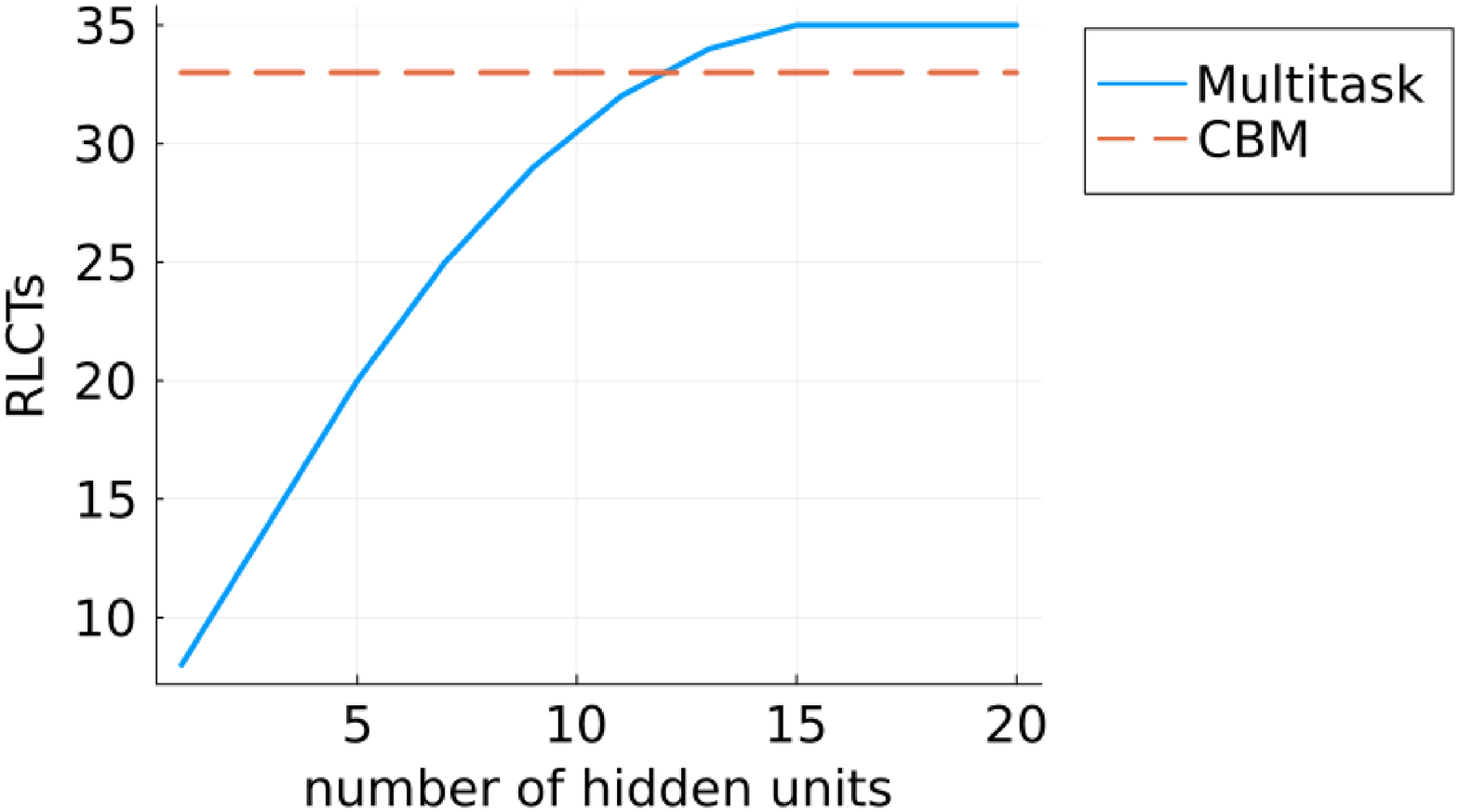}
  \subcaption{$K=6$, $H_0=1$}\label{visrlct2H}
 \end{minipage}
 \begin{minipage}[b]{0.32\linewidth}
  \centering
  \includegraphics[width=5.3cm,height=3cm]
  {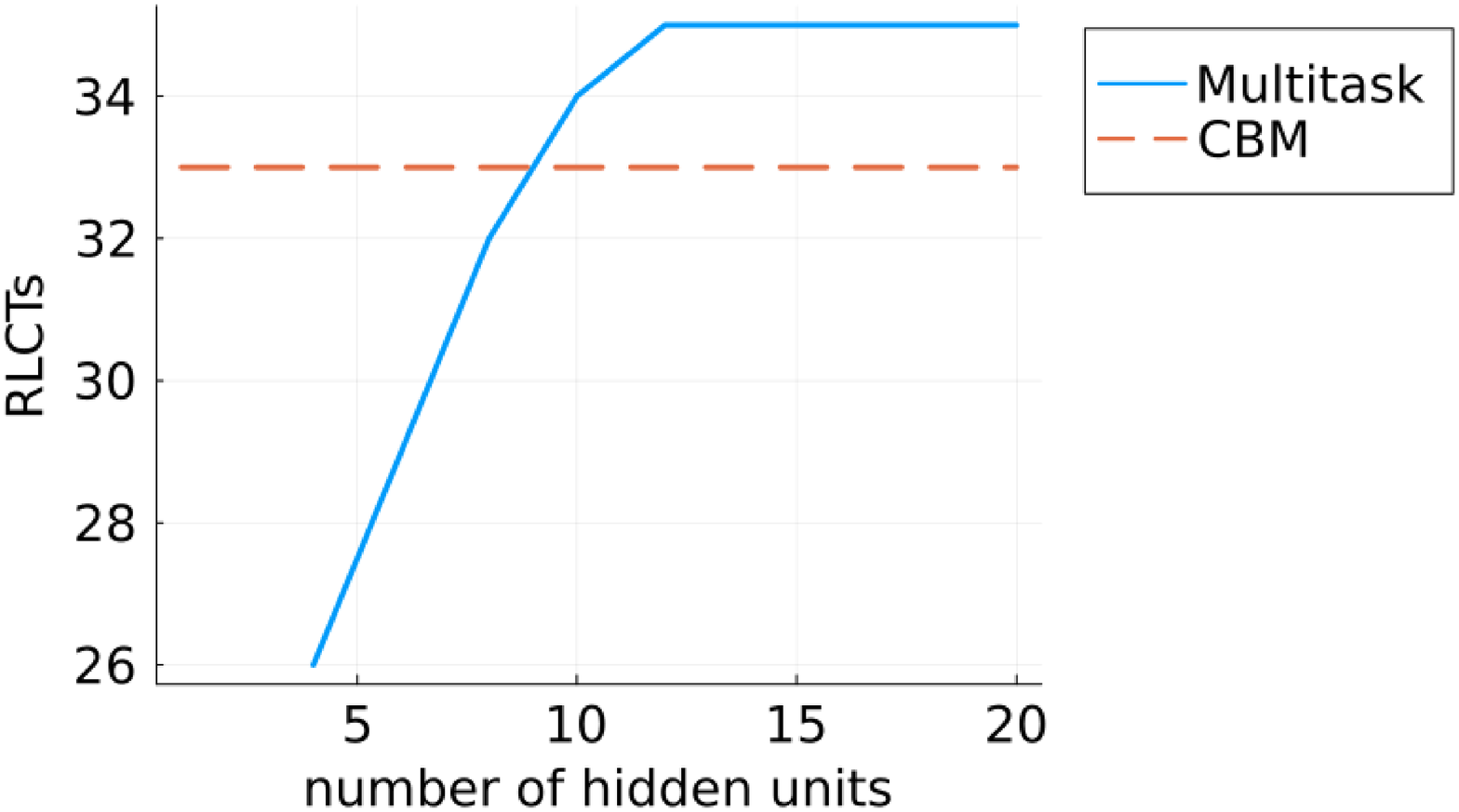}
  \subcaption{$K=6$, $H_0=4$}\label{visrlct3H}
 \end{minipage}\\
 \begin{minipage}[b]{0.32\linewidth}
  \centering
  \includegraphics[width=5.3cm,height=3cm]
  {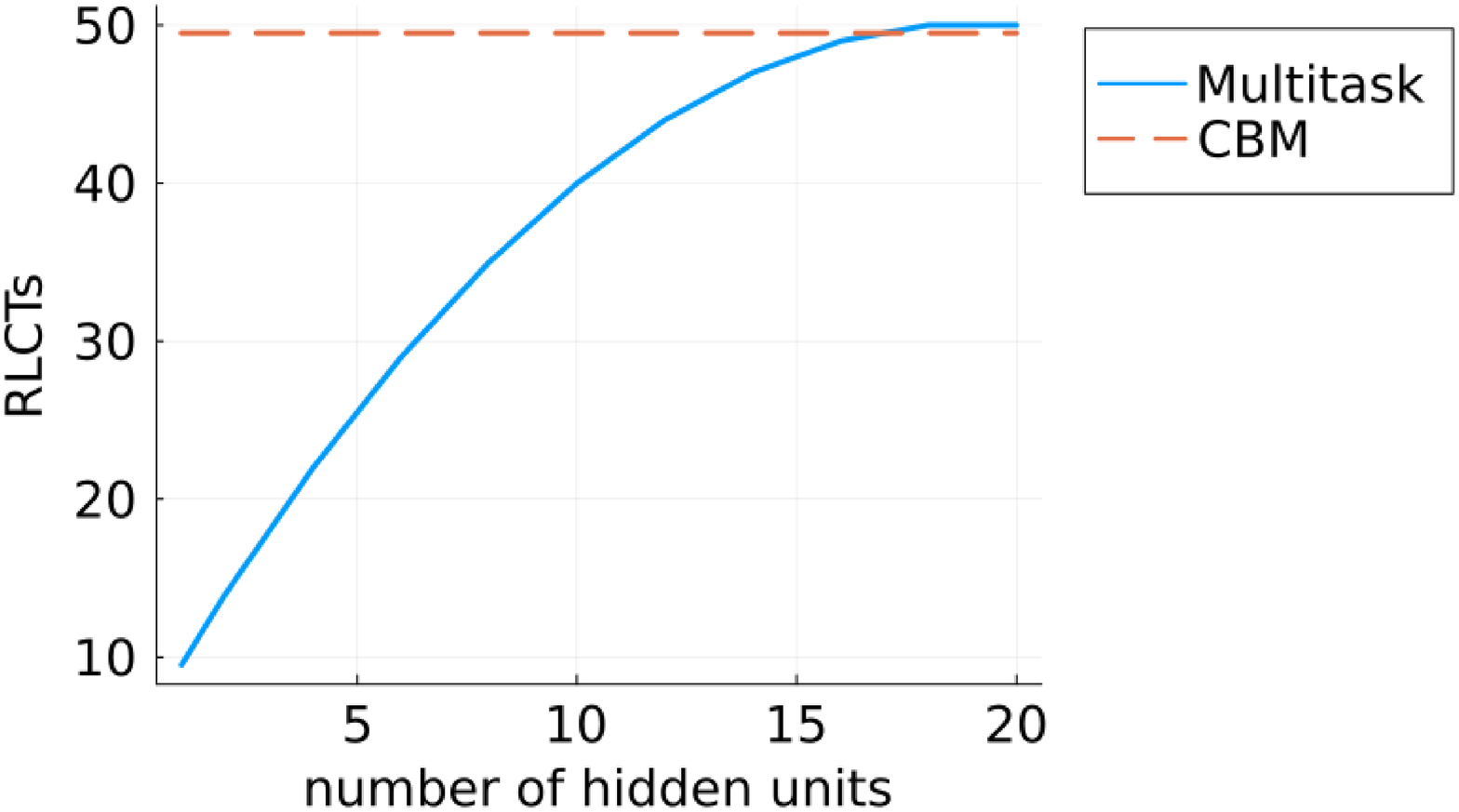}
  \subcaption{$K=9$, $H_0=1$}\label{visrlct4H}
 \end{minipage}
 \begin{minipage}[b]{0.32\linewidth}
  \centering
  \includegraphics[width=5.3cm,height=3cm]
  {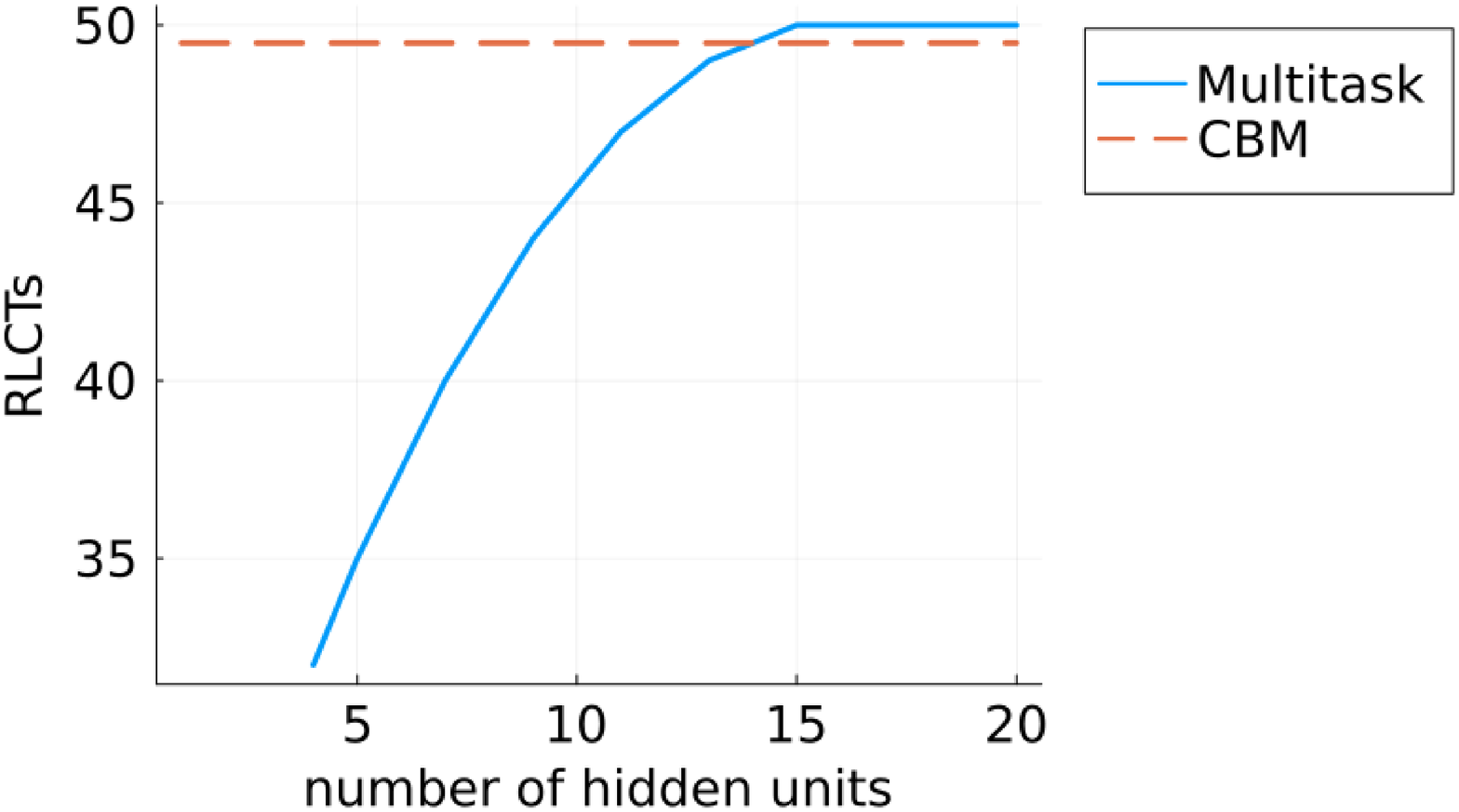}
  \subcaption{$K=9$, $H_0=4$}\label{visrlct5H}
 \end{minipage}
 \begin{minipage}[b]{0.32\linewidth}
  \centering
  \includegraphics[width=5.3cm,height=3cm]
  {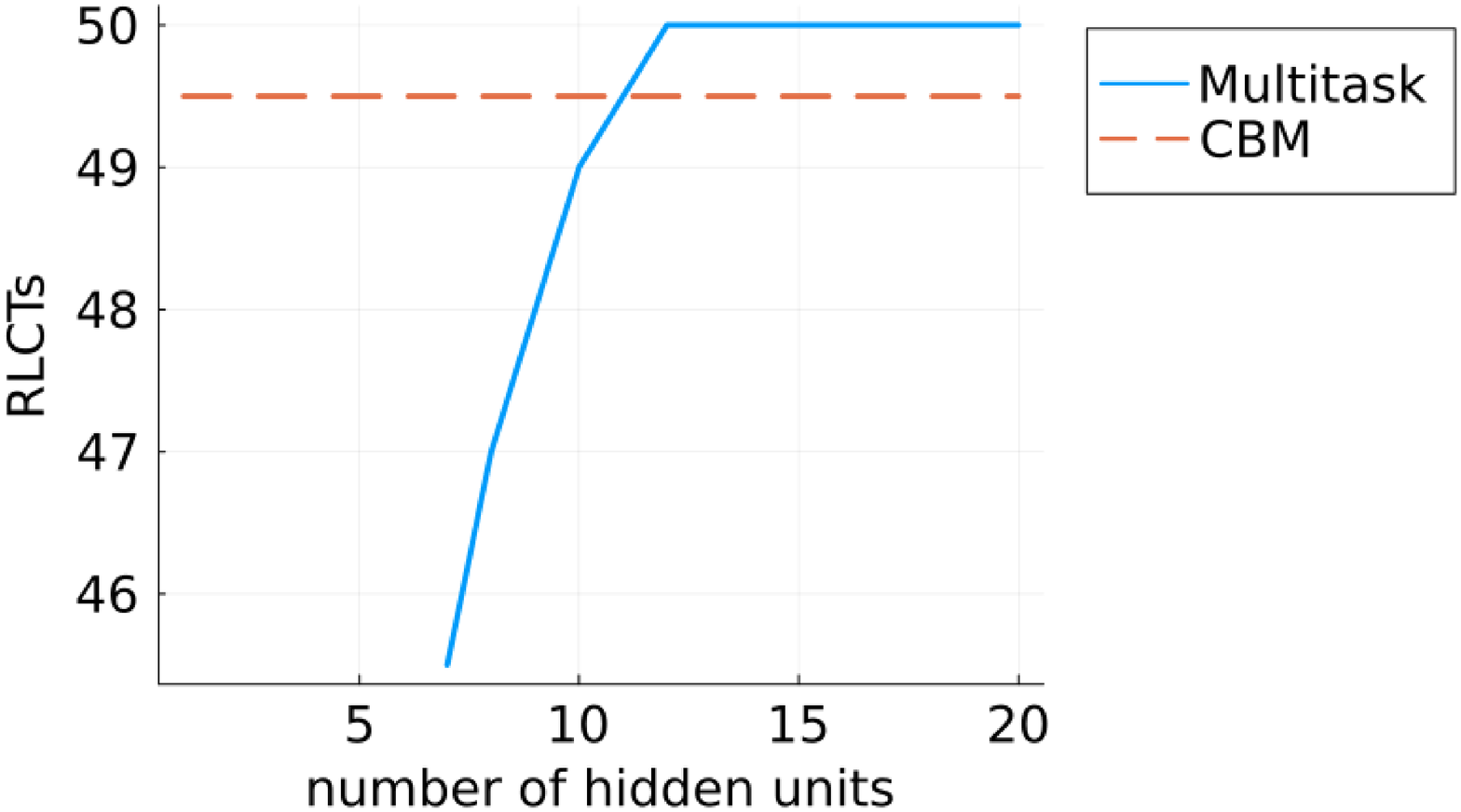}
  \subcaption{$K=9$, $H_0=7$}\label{visrlct6H}
 \end{minipage}
 \caption{Behaviors of RLCTs of CBM and Multitask as controlled by $H$.
The vertical axis represents the value of the RLCT and the horizontal one does the number of intermediate layer units (hidden units) $H$.
The behaviors of the RLCT are visualized as graphs of functions of $H$,
where $M=10$ and $N=1$ are fixed and $K$ and $H_0$ are set as the subcaptions.
The RLCT of CBM is drawn as dashed lines and that of Multitask is done as solid lines.
The RLCT of CBM does not depend on $H$; thus, it is a constant.
The RLCT of Multitask depends on $H$ as well as that of Standard as clarified in \cite{Aoyagi1}.}\label{visrlctH}
\end{figure}\clearpage \noindent
change points for an issue: which has better performance for given $(M,H,N,K)$ and assumed $H_0$.
If $\lambda_1 > \lambda_2$, Multitask has better performance than CBM in the sense of the Bayesian generalization error and the free energy.
If $\lambda_1 \leqq \lambda_2$, the opposite fact holds.
The proof of Proposition \ref{prop-compare} lies in \ref{sec:Proof}.

\begin{prop}[Comparison the RLCTs of CBM and Multitask]
\label{prop-compare}
Along with the conditional branch in Theorem \ref{thm-main-multitask},
the magnitude of the RLCT of CBM $\lambda_1$ and that of Multitask $\lambda_2$ changes as the following.
\begin{enumerate}
\item In the case $M+K+H_0 \leqq N+H$ and $N+H_0 \leqq M+K+H$ and $H+H_0 \leqq N+M+K$,
    \begin{enumerate}
        \item and if $N+M+K+H+H_0$ is even, then
        $$\begin{cases}
        \lambda_1 > \lambda_2 & (K > H+H_0 - (\sqrt{M}-\sqrt{N})^2), \\
        \lambda_1 \leqq \lambda_2 & ({\rm otherwise}).
        \end{cases}$$
        \item and if $N+M+K+H+H_0$ is odd, then
        $$\begin{cases}
        \lambda_1 > \lambda_2 & (K >  H+H_0-M-N + \sqrt{4MN+1}), \\
        \lambda_1 \leqq \lambda_2 & ({\it otherwise}).
        \end{cases}$$
    \end{enumerate}
\item In the case $N+H<M+K+H_0$, then
$$\begin{cases}
        \lambda_1 > \lambda_2 & ((M+N-H_0)K > (N-H_0)H+MH_0), \\
        \lambda_1 \leqq \lambda_2 & ({\it otherwise}).
        \end{cases}$$
\item In the case $M+K+H<N+H_0$, then
$$\begin{cases}
        \lambda_1 > \lambda_2 & ((M+N-H_0)K > (N-H)H_0+MH), \\
        \lambda_1 \leqq \lambda_2 & ({\it otherwise}).
        \end{cases}$$
\item Otherwise (i.e. $N+M+K<H+H_0$), then
$$\begin{cases}
        \lambda_1 > \lambda_2 & (K > N), \\
        \lambda_1 \leqq \lambda_2 & ({\it otherwise}).
        \end{cases}$$
\end{enumerate}
\end{prop}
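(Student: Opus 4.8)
The plan is to prove Proposition \ref{prop-compare} by a purely algebraic, regime-by-regime comparison of the two closed forms already in hand: $\lambda_1=\tfrac12(M+N)K$ from Theorem \ref{thm-main-cbm}, and the four-way piecewise formula for $\lambda_2$ from Theorem \ref{thm-main-multitask}. In each of the four regimes I would substitute both expressions into $\lambda_1-\lambda_2$, clear the common denominator (it is $8$ in the bulk case and $2$ otherwise, so it is convenient to compare $8\lambda_1=4(M+N)K$ with $8\lambda_2$ uniformly), and reduce the sign of $\lambda_1-\lambda_2$ to an elementary inequality in the integers $(N,H,M,K,H_0)$. Since the Proposition is phrased as a threshold on $K$, I would keep $K$ as the distinguished variable throughout.

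First I would dispatch the three degenerate regimes, namely cases 2, 3 and 4 of Theorem \ref{thm-main-multitask}, in which $\lambda_2$ is affine in $K$, hence so is $\lambda_1-\lambda_2$ and its sign changes at most once. In case 4 one finds $8(\lambda_1-\lambda_2)=4M(K-N)$, so $\lambda_1>\lambda_2\iff K>N$ (using $M\ge1$). In case 2 one finds $8(\lambda_1-\lambda_2)=4\bigl[(M+N-H_0)K-(N-H_0)H-MH_0\bigr]$, and since $H_0\le N<M+N$ in this setting the coefficient of $K$ is strictly positive, which rearranges to the stated condition $(M+N-H_0)K>(N-H_0)H+MH_0$. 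Case 3 is the identical computation with the roles of $H$ and $H_0$ exchanged. Each of these is a one-line verification once the substitution is carried out.

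The substance of the proof is the bulk regime (case 1), where $\lambda_2$ is quadratic in $K$. Writing $P=H+H_0$ and expanding $(N-M-K)^2$, the quantity $8(\lambda_1-\lambda_2)$ turns out to be a \emph{monic} quadratic in $K$ in sub-case (a), and that same quadratic plus $1$ in sub-case (b) — the extra unit being precisely the $+1$ appearing inside $\lambda_2$ there, which is also why the multiplicity jumps to $m_2=2$. A monic quadratic is convex, negative strictly between its two real roots and positive outside the interval they bound; the smaller root is negative in the relevant parameter range, hence irrelevant since $K\ge1$, and therefore $\lambda_1>\lambda_2$ exactly when $K$ exceeds the larger root. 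I would then complete the square in $K$, check that the discriminant simplifies so that the larger root is explicit, and rewrite that root in the closed form stated in the Proposition — $H+H_0-(\sqrt M-\sqrt N)^2$ in sub-case (a), and $H+H_0-M-N+\sqrt{4MN+1}$ in sub-case (b), the extra $1$ under the radical being the algebraic trace of the parity correction in $\lambda_2$.

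The main obstacle is exactly the bookkeeping in case 1: coaxing the larger root of the quadratic into the advertised closed form (and keeping the parity term straight between the two sub-cases), and then confirming that this threshold value of $K$ actually lies inside the interval carved out by the three linear inequalities that define case 1, so that the dichotomy is meaningful rather than vacuous. A useful consistency check along the way is that $\lambda_2$ is continuous across the interfaces between the four regimes of Theorem \ref{thm-main-multitask}, so the resulting thresholds on $K$ must agree on those interfaces; this both catches sign slips and fixes the correct branch of the square root. By contrast, the degenerate regimes demand nothing beyond the linear sign analysis indicated above.
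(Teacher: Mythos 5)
Your proposal follows essentially the same route as the paper: substitute the two closed forms, treat cases 2--4 as linear sign comparisons in $K$, and in case 1 reduce $8(\lambda_1-\lambda_2)$ to the monic quadratic $(K+M+N-H-H_0)^2-4MN$ (minus an extra $1$ in the odd sub-case) and take the larger root. One small correction: your reason for discarding the smaller root --- that it is negative --- is not true in general (e.g.\ $H=H_0=10$, $M=N=1$ gives smaller root $16$); the correct justification, which the paper uses, is that the third hypothesis of case 1, $H+H_0\leqq N+M+K$, forces $K\geqq H+H_0-M-N$, which strictly exceeds the smaller root $H+H_0-M-N-2\sqrt{MN}$, so only the larger root is a genuine threshold.
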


\section{Conclusion}
\label{sec:Conclusion}
We obtain the exact asymptotic behaviors of Bayesian generalization and free energy in neural networks with a concept bottleneck model and multitask formulation when the networks are three-layered linear.
The behaviors are derived by finding the real log canonical thresholds of these models.
The results show that concept bottleneck structure makes the neural network regular (identifiable)  in the case of a three-layered and linear one.
On the other hand, multitask formulation for a three-layered linear network only involves the addition of the concepts to the output;
hence, the behaviors of the Bayesian generalization error and free energy are similar to that of the standard model.
A future work would involve the theoretical analysis for multilayer and non-linear activation.
Another would involve formulating Sequential CBM based on the singular learning theory.
Clarifying numerical behaviors of Main Theorems can be yet another future research direction.




\appendix

\section{Singular Learning Theory}
\label{sec:Singular}
We briefly explain the relationship between Bayesian inference and algebraic geometry:
in other words, the reason behind the need for resolution of singularity.
This theory is referred to as the singular learning theory \cite{SWatanabeBookE}.

It is useful for the following analytic form \cite{Atiyah1970resolution} of the singularities resolution theorem \cite{Hironaka} to treat $K(w)$ in Eq. (\ref{def-aveerror}) and its zero points $K^{-1}(0)$ in Eq. (\ref{algvariety}).
\begin{thm}[Hironaka, Atiyah]\label{thm-singular}
Let $K$ be a non-negative analytic function on $\mathcal{W} \subset \mathbb{R}^d$.
Assume that $K^{-1}(0)$ is not an empty set.
Then, there are an open set $\mathcal{W}'$, a $d$-dimensional smooth manifold $\mathcal{M}$, and an analytic map $g:\mathcal{M} \rightarrow \mathcal{W}'$ such that $g: \mathcal{M}\setminus g^{-1}(K^{-1}(0)) \rightarrow \mathcal{W}' \setminus K^{-1}(0)$ is isomorphic and
\begin{gather}
K(g(u))=u_1^{2k_1} \ldots u_d^{2k_d}, \\
|\det g'(u)| =b(u)|u_1^{h_1} \ldots u_d^{h_d}|
\end{gather}
hold for each local chart $U \ni u$ of $\mathcal{M}$,
where $k_j$ and $h_j$ are non-negative integer for $j=1,\ldots,d$, $\det g'(u)$ is the Jacobian of $g$ and $b:\mathcal{M} \rightarrow \mathbb{R}$ is strictly positive analytic: $b(u)>0$. 
\end{thm}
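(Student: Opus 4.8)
The plan is to derive this statement from Hironaka's resolution of singularities, invoked as a black box, followed by a short elementary normalization that puts its output into the monomial form displayed here. In the real-analytic category (or complex-analytically, then restricting to the real locus), Hironaka's embedded resolution / principalization theorem produces, for the analytic function $K$ on $\mathcal{W}$, a proper analytic map $g:\mathcal{M}\to\mathcal{W}'$ — built as a locally finite composition of blow-ups along smooth analytic centres, with $\mathcal{W}'$ an open neighbourhood of $K^{-1}(0)$ — enjoying two features: (i) in suitable local analytic coordinates $u=(u_1,\dots,u_d)$ about each point of $\mathcal{M}$, both $K\circ g$ and the Jacobian $\det g'$ are simultaneously normal crossings, i.e. each equals a nowhere-vanishing analytic function times a monomial in the $u_j$; and (ii) every centre may be chosen inside the successive strict transforms of $K^{-1}(0)$, so that $g$ is an analytic isomorphism over $\mathcal{W}'\setminus K^{-1}(0)$. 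Granting this, I already have $K(g(u))=a(u)\,u_1^{a_1}\cdots u_d^{a_d}$ with $a$ analytic and non-vanishing, and $|\det g'(u)|=b(u)\,|u_1^{h_1}\cdots u_d^{h_d}|$ with $b$ analytic, non-vanishing and $h_j\in\mathbb{Z}_{\ge 0}$; what remains is to force $a>0$, the exponents $a_j$ to be even, and $b>0$.

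First I would exploit non-negativity. Since $K\ge 0$ and $g$ is onto $\mathcal{W}'$, $K\circ g\ge 0$ in every chart. If some exponent $a_i$ were odd, then near a generic point of the hyperplane $\{u_i=0\}$ the monomial $u_1^{a_1}\cdots u_d^{a_d}$ would change sign as $u_i$ crosses $0$ while $a(u)$ keeps constant sign — contradicting $K\circ g\ge 0$. Hence each $a_i=2k_i$ is even. On the dense open set where the monomial is nonzero, $a(u)=K(g(u))/(u_1^{2k_1}\cdots u_d^{2k_d})\ge 0$, and since $a$ is analytic and nowhere zero it is in fact $>0$ throughout the chart; likewise replacing $b$ by $|b|$ gives $b>0$.

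Next I would absorb the positive unit. Near any point of $K^{-1}(0)$ at least one exponent, say $k_1$, is positive, and since $a>0$ is real-analytic, $a^{1/(2k_1)}=\exp\!\big(\tfrac{1}{2k_1}\log a\big)$ is analytic; then $(u_1,\dots,u_d)\mapsto\big(u_1\,a(u)^{1/(2k_1)},u_2,\dots,u_d\big)$ is a local analytic coordinate change (its Jacobian is a positive analytic function near the point), after which $K\circ g=u_1^{2k_1}\cdots u_d^{2k_d}$ exactly. This change multiplies the analytic unit in the Jacobian formula by a positive analytic factor and leaves the exponents $h_j$ untouched, so the form $|\det g'(u)|=b(u)\,|u_1^{h_1}\cdots u_d^{h_d}|$ persists. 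Finally I would record the bookkeeping that is built into "resolution by blow-ups": a single blow-up along a smooth centre is proper and has Jacobian equal to a power of an exceptional coordinate times a unit, so a locally finite composition of such maps is proper with Jacobian of the claimed normal-crossings shape; blow-ups preserve dimension, so $\dim\mathcal{M}=d$; and properness together with (ii) yields that $g$ restricts to an isomorphism $\mathcal{M}\setminus g^{-1}(K^{-1}(0))\to\mathcal{W}'\setminus K^{-1}(0)$. Shrinking $\mathcal{W}'$ to a relatively compact neighbourhood of the part of $K^{-1}(0)$ at hand keeps the composition finite.

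The genuine obstacle is Hironaka's theorem itself: the existence of $g$ simultaneously principalizing the ideal $(K)$ and controlling the Jacobian. I would not attempt that — it is deep, and I would cite Hironaka for the existence (and the modern algorithmic constructions by iterated blow-ups) and Atiyah for the precise local-analytic monomialization used above. On top of the citation, the only points needing care are ensuring the blow-up centres can be taken inside $K^{-1}(0)$ at every stage (so that the isomorphism-away-from-the-zero-set claim holds) and emphasizing that the monomial normal form is chart-local — valid on each member of a cover of $\mathcal{M}$, with $(k_j)$ and $(h_j)$ depending on the chart — which is exactly what the statement asserts.
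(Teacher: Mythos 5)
The paper does not prove this statement at all: it is presented in the appendix as a cited background result, attributed to Hironaka for the resolution of singularities and to Atiyah for the specific local monomial form of $K\circ g$ and of the Jacobian. Your proposal is therefore not "the same proof as the paper" in a literal sense, but it is a faithful reconstruction of what the citations stand for: you invoke Hironaka's principalization as a black box and then supply the elementary normalizations (evenness of the exponents from $K\ge 0$, positivity and absorption of the analytic unit by the coordinate change $u_1\mapsto u_1\,a(u)^{1/(2k_1)}$, normal-crossings form of the Jacobian of a composition of blow-ups, and the isomorphism away from $g^{-1}(K^{-1}(0))$ from the choice of centres inside the successive transforms of the zero locus). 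These steps are all correct and are essentially the content of Atiyah's derivation. Two small points worth making explicit if you wrote this out in full: the monomial normal form with at least one $k_j>0$ is only claimed (and only needed) on charts meeting $g^{-1}(K^{-1}(0))$ — on a chart disjoint from the zero locus $K\circ g$ is just a positive unit and cannot be reduced to a pure monomial; and after the unit-absorbing coordinate change you should note that the Jacobian of that change is itself a positive analytic unit near $\{u_1=0\}$, so it folds into $b(u)$ without disturbing the exponents $h_j$, which you do assert but could justify in one line. Neither point is a gap in substance; the only genuinely deep ingredient is Hironaka's theorem, which both you and the paper correctly treat as a citation.
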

Atiyah has derived this form for analyzing the relationship between a division of distributions (a.k.a. hyperfunctions) and local type zeta functions \cite{Atiyah1970resolution}.
By using Theorem \ref{thm-singular}, the following is proved \cite{Atiyah1970resolution, Bernstein1972, Sato1974zeta}.
\begin{thm}[Atiyah, Bernstein, Sato and Shintani]\label{thm-zeta}
Let $K: \mathbb{R}^d \rightarrow \mathbb{R}$ be an analytic function of a variable $w \in \mathcal{W}$.
$a: \mathcal{W} \rightarrow \mathbb{R}$ is denoted by a $C^{\infty}$-function with compact support $\mathcal{W}$.
The following univariate complex function
\begin{align}
\zeta(z) = \int_\mathcal{W} |K(w)|^z a(w) dw
\end{align}
is a holomorphic function in $\mathrm{Re}(z)>0$.
Moreover, $\zeta(z)$ can be analytically continued to a unique meromorphic function on the entire complex plane $\mathbb{C}$.
Its all poles are negative rational numbers.
\end{thm}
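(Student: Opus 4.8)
The plan is to localize the integral, reduce it via resolution of singularities to a finite sum of elementary monomial integrals, and continue those by Taylor expansion. First I would note that if $K^{-1}(0)$ does not meet $\mathrm{supp}(a)$ the integrand is smooth and $\zeta$ is entire, so the question is local near $K^{-1}(0)$. Holomorphy on $\{\mathrm{Re}(z) > 0\}$ is straightforward: there $|K(w)^z| = |K(w)|^{\mathrm{Re}(z)}$ is bounded on the compact set $\mathrm{supp}(a)$ by continuity of $K$, so the integral converges absolutely and, by Morera's theorem together with Fubini's theorem, defines a holomorphic function; the same uniform bound shows that the part of the integral over $\mathrm{supp}(a)$ away from any neighborhood of $K^{-1}(0)$ is entire, since there $|K|$ is bounded below by a positive constant.

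Next I would cover $\mathrm{supp}(a)$ by finitely many open sets on which Theorem \ref{thm-singular} applies (using compactness of $\mathrm{supp}(a)$), introduce a subordinate partition of unity, and on each piece pull the integral back through the associated resolution map $g$. In the resulting local charts $K(g(u)) = u_1^{2k_1}\cdots u_d^{2k_d}$ and $|\det g'(u)| = b(u)\,|u_1^{h_1}\cdots u_d^{h_d}|$, so after splitting each chart into coordinate orthants (so that each $|u_j|$ becomes $\pm u_j$ on a half-space) the total contribution is a finite sum of integrals of the shape $\int_{[0,\varepsilon]^d} u_1^{2k_1 z + h_1}\cdots u_d^{2k_d z + h_d}\,\psi(u)\,du$, where $\psi$ is smooth with compact support, absorbing $b$, the pulled-back $a$, and the partition of unity.

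I would then continue each model integral using a one-dimensional lemma applied iteratively. For $I(z) = \int_0^\varepsilon u^{\alpha z + \beta}\phi(u)\,du$ with $\alpha$ a positive integer, $\beta \ge 0$ an integer, and $\phi$ smooth, Taylor's formula gives $\phi(u) = \sum_{m=0}^{M} \frac{\phi^{(m)}(0)}{m!}\,u^m + u^{M+1} R_M(u)$; the polynomial part integrates to $\sum_{m=0}^{M} \frac{\phi^{(m)}(0)}{m!}\cdot\frac{\varepsilon^{\alpha z + \beta + m + 1}}{\alpha z + \beta + m + 1}$, meromorphic on $\mathbb{C}$ with at worst simple poles at the negative rationals $z = -(\beta + m + 1)/\alpha$, while the remainder integral is holomorphic for $\mathrm{Re}(z) > -(\beta + M + 2)/\alpha$; letting $M \to \infty$ continues $I$ to all of $\mathbb{C}$ with poles inside $\{-(\beta + 1 + m)/\alpha : m \ge 0\}$. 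Applying this coordinate by coordinate (via Fubini, viewing the integral over the remaining variables as the smooth coefficient function of the active variable, and discarding the variables with $k_j = 0$ whose contribution is already smooth) continues each model integral, hence $\zeta$, to a meromorphic function on $\mathbb{C}$ whose poles lie in the union over $j$ with $k_j \ge 1$ of the arithmetic progressions $\{-(h_j + 1 + m)/(2k_j) : m \ge 0\}$ — all negative rational numbers. Uniqueness of the continuation is immediate from the identity theorem, since any meromorphic extension is determined by $\zeta$ on $\{\mathrm{Re}(z) > 0\}$.

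The \emph{main obstacle} is the bookkeeping in this multivariable continuation: because $\psi$ does not split as a product of one-variable functions, one cannot simply multiply one-dimensional zeta integrals and must instead iterate the Taylor-remainder argument over the coordinates, checking at each stage that the integral over the not-yet-processed variables is a genuinely smooth function of the active coordinate and that the successive half-planes of holomorphy exhaust $\mathbb{C}$. Secondary care is needed with the orthant decomposition and with verifying that the continuations arising from different charts agree — but the latter is automatic, since they all continue the single holomorphic function $\zeta$ on $\{\mathrm{Re}(z) > 0\}$.
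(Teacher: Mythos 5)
Your argument is correct; it is the standard Atiyah--Bernstein--Gel'fand proof, and it matches the route the paper itself takes, to the extent the paper takes one at all: Theorem \ref{thm-zeta} is only cited there (to Atiyah, Bernstein, Sato--Shintani), and the computation sketched immediately afterwards --- resolution, partition of unity, reduction to $\int\prod_j u_j^{2k_jz+h_j}$ --- coincides with the first half of your argument. What you supply beyond the paper is exactly the missing analytic step: the Taylor-plus-remainder continuation of $\int_0^\varepsilon u^{\alpha z+\beta}\phi(u)\,du$ and its iteration over coordinates, which is where meromorphy and the rationality of the poles actually come from (the paper's displayed identity $\int_{[0,1]^d}\prod_j u_j^{2k_jz+h_j}\,du=\prod_j(2k_jz+h_j+1)^{-1}$ silently discards the smooth factor $\varphi(g(u))b(u)\phi_\eta(u)$, so by itself it only locates candidate poles). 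Two small points to tighten. First, the resolution theorem is invoked once, producing a single proper map $g:\mathcal{M}\to\mathcal{W}'$ whose charts carry the monomial normal form; properness is what makes $g^{-1}(\mathrm{supp}(a))$ compact and the atlas finite, so you should not phrase it as covering $\mathrm{supp}(a)$ by sets ``on which the theorem applies.'' Second, in the coordinate-by-coordinate iteration the objects you must continue after processing $u_1$ are integrals whose smooth data are $\partial_{u_1}^{m}\psi(0,u_2,\dots,u_d)$ and the remainder $R_M(u)$; both are smooth in the remaining variables, so the induction closes exactly as you indicate, and the successive half-planes $\mathrm{Re}(z)>-(h_j+M+2)/(2k_j)$ do exhaust $\mathbb{C}$ as $M\to\infty$.
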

Suppose the prior density $\varphi(w)$ has the compact support $\mathcal{W}$ and the open set $\mathcal{W}'$ satisfies $\mathcal{W} \subset \mathcal{W}'$.
By using Theorem \ref{thm-zeta}, we can define a zeta function of learning theory.
\begin{defi}[Zeta Function of Learning Theory]
\label{def-learnzeta}
Let $K(w) \geqq 0$ be the KL divergence mentioned in Eq. (\ref{def-aveerror}) and $\varphi(w)\geqq 0$ be a prior density function which satisfies the above assumption. A zeta function of learning theory is defined by the following univariate complex function
$$\zeta(z) = \int_\mathcal{W} K(w)^z \varphi(w) dw.$$
\end{defi}
\begin{defi}[Real Log Canonical Threshold]
Let $\zeta(z)$ be a zeta function of learning theory represented in Definition \ref{def-learnzeta}.
Consider an analytic continuation of $\zeta(z)$ from Theorem \ref{thm-zeta}.
A real log canonical threshold (RLCT) $\lambda$ is defined by the negative maximum pole of $\zeta(z)$ and its multiplicity $m$ is defined by the order of the maximum pole:
\begin{gather}
\zeta(z) = \frac{C(z)}{(z+\lambda)^m}\frac{C_1(z)}{(z+\lambda_1)^{m_1}}\ldots\frac{C_D(z)}{(z+\lambda_D)^{m_D}}\ldots, \\
\lambda < \lambda_k \ (k=1,\ldots,D,\ldots),
\end{gather}
where $C(z)$ and $C_k(z)$ $(k=1,\ldots,D,\ldots)$ are non-zero-valued complex functions.
\end{defi}
Watanabe constructed the singular learning theory;
he proved that the RLCT $\lambda$ and the multiplicity $m$ determine the asymptotic Bayesian generalization error and free energy \cite{Watanabe1, Watanabe2, SWatanabeBookE}:
\begin{thm}[Watanabe]
$\zeta(z)$ is denoted by the zeta function of learning theory as Definition \ref{def-learnzeta}.
Let $\lambda$ and $m$ be the RLCT and the multiplicity defined by $\zeta(z)$.
The Bayesian generalization error $G_n$ and the free energy $F_n=-\log Z_n$ have the asymptotic forms $(\ref{thm-watanabeG})$ and $(\ref{thm-watanabeF})$ showen in section 1.
\end{thm}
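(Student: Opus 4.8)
The plan is to reduce both expansions to the asymptotics of the \emph{normalized partition function}
\[
Z_n^0 = \int_{\mathcal{W}} \varphi(w)\,e^{-n K_n(w)}\,dw,\qquad K_n(w)=\frac1n\sum_{i=1}^n\log\frac{q(X_i)}{p(X_i|w)},
\]
since $Z_n = \big(\prod_i q(X_i)\big) Z_n^0$ gives $F_n = nS_n - \log Z_n^0$, and the identity $\mathbb{E}_n[G_n]+S=\mathbb{E}_{n+1}[F_{n+1}]-\mathbb{E}_n[F_n]$ from Section~\ref{sec:BayesFrame} turns the generalization-error statement into a statement about $\mathbb{E}_n[-\log Z_n^0]$ for consecutive sample sizes. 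First I would write $nK_n(w)=nK(w)-\sqrt{n}\,\eta_n(w)$ with $\eta_n(w)=\frac1{\sqrt n}\sum_i\big(\log\tfrac{p(X_i|w)}{q(X_i)}+K(w)\big)$; the standing assumptions ($K^{-1}(0)\neq\emptyset$, $K$ analytic, $\mathcal{W}$ compact, $\varphi$ positive and bounded on $K^{-1}(0)$) supply the ``relatively finite variance'' structure under which the log-density ratio factors as $\sqrt{K(w)}$ times a bounded analytic field, so that $\eta_n$ behaves like $\sqrt{K(w)}$ times an empirical process.

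The core step is resolution of singularities. Applying Theorem~\ref{thm-singular} to $K$, on each local chart there is a proper analytic map $w=g(u)$ with $K(g(u))=u_1^{2k_1}\cdots u_d^{2k_d}$ and $|\det g'(u)|=b(u)\,|u_1^{h_1}\cdots u_d^{h_d}|$, $b>0$. Pulling the integral back and partitioning by charts,
\[
Z_n^0=\sum_{\alpha}\int b_\alpha(u)\,|u^{h}|\,\varphi(g_\alpha(u))\,\exp\!\big(-n\,u^{2k}+\sqrt{n}\,u^{k}\,\xi_n^\alpha(u)\big)\,du,
\]
where, after the change of variables, $\xi_n^\alpha$ is a well-defined empirical process converging weakly (uniformly on compacts) to a Gaussian process. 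On the chart and coordinate block that attains $\lambda=\min_j (h_j+1)/(2k_j)$ with multiplicity $m$, I would rescale the ``active'' coordinates $u_j\mapsto n^{-1/(2k_j)}u_j$; the resulting Watanabe-type computation gives $Z_n^0 = n^{-\lambda}(\log n)^{m-1} Z_n^*$ with $Z_n^*$ tight and bounded away from $0$ in probability. Taking $-\log$ yields $F_n = nS_n + \lambda\log n - (m-1)\log\log n + O_p(1)$, which is \eqref{thm-watanabeF}.

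For \eqref{thm-watanabeG} the $O_p(1)$ bound is not enough; I would upgrade it to $\mathbb{E}_n[-\log Z_n^0] = \lambda\log n - (m-1)\log\log n + F_0 + o(1)$ for a constant $F_0$, which needs uniform integrability of $\log Z_n^*$ (Gaussian tail bounds on the $\xi_n^\alpha$ for the upper side, and restriction of the integral to a fixed neighbourhood of $K^{-1}(0)$ for the lower side). Then $\mathbb{E}_n[F_n]=nS+\lambda\log n-(m-1)\log\log n+F_0+o(1)$, and differencing at $n$ and $n+1$,
\[
\mathbb{E}_{n+1}[F_{n+1}]-\mathbb{E}_n[F_n] = S+\lambda\log\tfrac{n+1}{n}-(m-1)\log\tfrac{\log(n+1)}{\log n}+o\!\left(\tfrac{1}{n\log n}\right)= S+\frac{\lambda}{n}-\frac{m-1}{n\log n}+o\!\left(\frac1{n\log n}\right),
\]
so $\mathbb{E}_n[G_n]=\lambda/n-(m-1)/(n\log n)+o(1/(n\log n))$.

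I expect the main obstacle to be the probabilistic half rather than the geometry: proving that each $\xi_n^\alpha$ converges as an empirical process uniformly on its chart and that the rescaled integrals $Z_n^*$ are tight and bounded away from $0$ (equivalently, that the renormalized posterior converges in distribution), and then securing the uniform integrability that lets one replace the $O_p(1)$ term in $F_n$ by the convergent constant $F_0$ required for the generalization-error formula. The resolution-of-singularities input is entirely supplied by Theorem~\ref{thm-singular} once $K$ is known to be analytic, which in the present applications holds because the models are built from Gaussian, sigmoid, and softmax components.
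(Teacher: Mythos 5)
First, note that the paper does not actually prove this theorem: it is quoted from Watanabe's work \cite{Watanabe1, Watanabe2, SWatanabeBookE}, and \ref{sec:Singular} only develops the machinery needed to \emph{compute} $\lambda$ and $m$ (the zeta function, the pullback under Theorem \ref{thm-singular}, Proposition \ref{prop-volume-dim}), not the asymptotic expansions themselves. Measured against the cited proofs, your treatment of the free-energy expansion (\ref{thm-watanabeF}) is the right strategy and essentially Watanabe's: normalize $Z_n$ by $\prod_i q(X_i)$, split $nK_n = nK - \sqrt{n}\,\eta_n$, pull the integral back through the resolution map, rescale the coordinates realizing $\lambda = \min_j (h_j+1)/(2k_j)$ with multiplicity $m$, and control the renormalized partition function via an empirical-process limit. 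One caveat: the factorization of the log-density ratio as $u^k$ times a bounded analytic random field is not ``supplied'' by the assumptions listed in section \ref{sec:BayesFrame}; it is an additional fundamental condition (relatively finite variance together with $L^s(q)$-valued analyticity of $w \mapsto \log(q/p)$) that must be verified for the model class, although it does hold for the Gaussian, Bernoulli, and softmax models used in this paper.

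The genuine gap is in your derivation of (\ref{thm-watanabeG}) from (\ref{thm-watanabeF}) by differencing. Even if you establish $\mathbb{E}_n[F_n] = nS + \lambda\log n - (m-1)\log\log n + F_0 + \epsilon_n$ with $\epsilon_n = o(1)$, the difference $\epsilon_{n+1} - \epsilon_n$ is \emph{not} thereby $o(1/(n\log n))$ --- take $\epsilon_n = (-1)^n/\log n$ for a counterexample --- so the final display in your argument does not follow from what precedes it; an $o(1)$ control of the constant term is far too weak to resolve the second-order term $-(m-1)/(n\log n)$, and indeed too weak even for the leading term $\lambda/n$. This is precisely the point at which Watanabe's early argument was conditional (``if $\mathbb{E}[G_n]$ admits an asymptotic expansion, then it must be of this form''), and the unconditional proof does not difference the free energy: it analyzes $\mathbb{E}_n[G_n]$ directly through posterior expectations of the renormalized log-likelihood-ratio functionals on the deepest charts, establishing convergence in law of the suitably scaled generalization error to a functional of the limiting Gaussian process and then uniform integrability to pass to expectations. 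To close your proof you must either carry out that direct posterior analysis or supply an independent regularity argument showing $\epsilon_{n+1}-\epsilon_n = o(1/(n\log n))$; neither is implied by the steps you have written.
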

This theorem is rooted in Theorem \ref{thm-singular}.
That is why we need resolution of singularity to clarify the behavior of $G_n$ and $F_n$ via determination of the RLCT and its multiplicity.

Here, we describe how to determine the RLCT $\lambda>0$ of the model corresponding to $K(w)$.
We apply Theorem \ref{thm-singular} to the zeta function of learning theory.
Since we assumed the parameter space is compact, the manifold in singularity resolution is also compact.
Thus, the manifold can be covered by a union of $[0,1)^d$ for each local coordinate $U$.
Considering the partision of unity for $[0,1)^d$, we have
\begin{align}
\zeta(z) &= \int_U K(g(u))^z \varphi(g(u)) |\det g'(u)| du \\
&= \sum_{\eta} \int_U K(g(u))^z \varphi(g(u)) |\det g'(u)| \phi_{\eta}(u) du \\
&= \sum_{\eta} \int_{[0,1]^d} u_1^{2k_1z+h_1} \ldots u_d^{2k_dz+h_d}\varphi(g(u))b(u)\phi_{\eta}(u) du,
\end{align}
where $\phi_{\eta}$ is the partision of unity: $\mathrm{supp}(\phi_{\eta})=[0,1]^d$ and $\phi_{\eta}(u)>0$ in $(0,1)^d$.
The functions $\varphi(g(u))$, $b(u)$, and $\phi_{\eta}$ are strictly positive in $(0,1)^d$;
thus, we should consider the maximum pole of
\begin{equation}
\int_{[0,1]^d} u_1^{2k_1z+h_1} \ldots u_d^{2k_dz+h_d} du = \frac{1}{2k_1 z + h_1+1}\ldots\frac{1}{2k_d z + h_d+1}.
\end{equation}
Allowing duplication, the set of the poles can be represented as follows:
\begin{equation}
\left\{ \frac{h_j + 1}{2k_j} \mid j=1,\ldots, d \right\}.
\end{equation}
Thus, we can find the maximum pole $(-\lambda_U)$ in the local chart $U$ as follows
\begin{equation}
\lambda_U = \min_{j=1}^d \left\{ \frac{h_j + 1}{2k_j} \right\}.
\end{equation}
By considering the duplication of indices, we can also find the multiplicity in $U$ denoted as $m_U$.
Therefore, we can determine the RLCT as $\lambda = \min_U \lambda_U$ and the multiplicity $m$ as the order of the pole $(-\lambda)$, i.e. $m=m_{\underline{U}}$ where $\underline{U} = \mathrm{argmin}_U \lambda_U$.

In addition, we explain a geometrical property of the RLCT as follows:
a limit of a volume dimension \cite{YamazakiPhDThesis, SWatanabeBookE}:
\begin{prop}\label{prop-volume-dim}
Let $V: (0, \infty) \to (0, \infty)$, $t \mapsto V(t)$ be a volume of $K^{-1}((0,t))$ measured by $\varphi(w)dw$:
\begin{align}
V(t) = \int_{K(w)<t}\varphi(w)dw.
\end{align}
Then, the RLCT $\lambda$ satisfies the following:
\begin{align}
\lambda = \lim_{t \to +0} \frac{\log V(t)}{\log t}.
\end{align}
\end{prop}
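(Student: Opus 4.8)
The plan is to read off $\lambda$ from the small-$t$ asymptotics of $V(t)$, obtained by the same monomialization of $K$ that was used above for the zeta function. The guiding observation is that $V(t)$ and $\zeta(z)$ are assembled from the same ingredients --- the measure $\varphi(w)\,\d w$ and the function $K$ --- so the resolution of singularities in Theorem \ref{thm-singular} that diagonalized $\zeta(z)$ will also put $V(t)$ into a form that can be estimated term by term.

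First I would discard $K^{-1}(0)$. Since $\lambda$ is finite we have $K\not\equiv 0$, hence $K^{-1}(0)$ is a proper analytic subset of $\mathcal{W}$, hence Lebesgue-null, hence $\varphi(w)\,\d w$-null because $\varphi$ is bounded; therefore $V(t)=\int_{0<K(w)<t}\varphi(w)\,\d w$ and we may work on $\mathcal{W}\setminus K^{-1}(0)$, where the resolution map $g$ of Theorem \ref{thm-singular} is a diffeomorphism onto its image. Pulling back by $w=g(u)$ and inserting a partition of unity $\{\phi_\eta\}$ subordinate to the charts --- exactly as in the computation of $\zeta(z)$ just before this proposition --- turns $V(t)$ into a finite sum
\[
V(t)=\sum_{\eta}\int_{\{u\in[0,1]^d\,:\,u_1^{2k_1}\cdots u_d^{2k_d}<t\}}c_\eta(u)\,u_1^{h_1}\cdots u_d^{h_d}\,\d u,
\]
where $c_\eta(u)=\varphi(g(u))\,b(u)\,\phi_\eta(u)$ and the exponents $k_j,h_j$ depend on the chart. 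Each $c_\eta$ is bounded above on its cell, which handles the upper estimate; for a matching lower bound I would instead restrict, before taking the partition of unity, to a single chart $U_0$ whose local threshold $\lambda_{U_0}$ equals $\lambda$, change variables there, and integrate over a small box around a boundary point of $U_0$ that is mapped into $K^{-1}(0)$ --- on such a box $\varphi\circ g$, and hence the whole integrand, is bounded below by a positive constant, which is precisely where the hypothesis $\varphi>0$ on $K^{-1}(0)$ enters.

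The core is then the purely monomial estimate for
\[
J_\eta(t):=\int_{\{u\in[0,1]^d\,:\,\prod_j u_j^{2k_j}<t\}}\prod_j u_j^{h_j}\,\d u,
\]
namely $\lim_{t\to+0}\frac{\log J_\eta(t)}{\log t}=\lambda_\eta$ with $\lambda_\eta=\min_{j:\,k_j>0}\frac{h_j+1}{2k_j}$ (in fact $J_\eta(t)=\Theta(t^{\lambda_\eta}\lvert\log t\rvert^{m_\eta-1})$ with $m_\eta$ the number of minimizing indices, but only the log-ratio limit is needed here). I would prove this by first integrating out the variables with $k_j=0$ --- a finite positive factor --- and then bounding the constrained integral in both directions by induction on the number of remaining variables: above, by covering $\{\prod_j u_j^{2k_j}<t\}$ with the finitely many sub-regions on which one coordinate is the smallest; below, by restricting to a box $\{u_j<t^{\alpha_j}\}$ that lies inside the sublevel set, with the exponents $\alpha_j$ concentrated on a minimizing index so that the resulting power of $t$ is arbitrarily close to $\lambda_\eta$. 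Feeding this, together with the two-sided control of $c_\eta$ from the previous paragraph, back into the sum for $V(t)$ gives $\lim_{t\to+0}\frac{\log V(t)}{\log t}=\min_\eta\lambda_\eta$; and $\min_\eta\lambda_\eta=\min_U\lambda_U=\lambda$ is exactly the RLCT, as recalled earlier in this appendix, which is the assertion of the proposition.

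The hard part will be the lower bound inside $J_\eta(t)$: one has to verify that the monomial sublevel set genuinely contains a box (or, for the sharp multiplicity, a slab) of volume of the right order of magnitude situated near the corner where $c_\eta$ stays bounded away from $0$, and to keep track of which indices attain the minimum so as not to lose a power of $t$ (or of $\lvert\log t\rvert$). A shortcut bypassing $J_\eta$ entirely is to recognise $\zeta(z)=\int_0^{\infty}t^z\,\d V(t)$ as the Mellin--Stieltjes transform of $V$, so that the largest pole $-\lambda$ of $\zeta$ pins down the rate $\log V(t)/\log t$ by a standard Tauberian argument; I would nevertheless prefer the resolution-based route because it is self-contained given Theorem \ref{thm-singular}.
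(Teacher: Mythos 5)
The paper does not actually prove Proposition \ref{prop-volume-dim}: it is stated in Appendix A as a known geometric characterization of the RLCT and attributed to \cite{YamazakiPhDThesis, SWatanabeBookE}, and the only proposition proved in Appendix B is Proposition \ref{prop-compare}. So there is no in-paper argument to compare against; your proof has to stand on its own, and it essentially does --- it is the standard resolution-of-singularities argument from Watanabe's book. Discarding the null set $K^{-1}(0)$, pulling back by $g$, localizing with the same partition of unity used for $\zeta(z)$, and reducing to the monomial sublevel-set integrals $J_\eta(t)$ is exactly right, and your use of $\varphi>0$ on $K^{-1}(0)$ to get a two-sided bound on $c_\eta$ near a corner of a chart attaining $\min_U\lambda_U$ is the correct place for that hypothesis. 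The one step I would tighten is your upper bound on $J_\eta(t)$: covering the sublevel set by regions ``where one coordinate is the smallest'' does not directly force that coordinate to be smaller than $t^{1/(2k_{j_0})}$ (only than $t^{1/(2k_{j_0}d)}$ in the naive weighted sense), so as sketched it loses a factor of $d$ in the exponent. The clean fix is the Chebyshev-type bound $\mathbf{1}\left[\textstyle\prod_j u_j^{2k_j}<t\right]\leqq t^{s}\prod_j u_j^{-2k_js}$ for any $s<\lambda_\eta$, which integrates to $J_\eta(t)\leqq C_s\,t^{s}$ and yields $\liminf_{t\to+0}\log J_\eta(t)/\log t\geqq\lambda_\eta$; your box construction already gives the matching bound $J_\eta(t)\geqq c\,t^{\lambda_\eta}$. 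With that repair, summing finitely many terms and using $\log t\to-\infty$ gives $\lim\log V(t)/\log t=\min_\eta\lambda_\eta=\lambda$, and the Mellin--Tauberian shortcut you mention is also a legitimate alternative.

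\begin{comment}
Reviewer note: proposal accepted modulo the indicated repair of the upper bound on $J_\eta(t)$.
\end{comment}
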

The RLCT and its multiplicity are birational invariants of an analytic set $K^{-1}(0)$.
Since they are birational invariants, they do not depend on the resolution of singularity.
The above property characterizes that fact.

To determine $\lambda$ and $m$, we should consider the resolution of singularity \cite{Hironaka} for concrete varieties corresponding to the models.
We should calculate theoretical values of RLCTs to a family of functions to clarify a learning coefficient of a singular statistical model; however,
there exists no standard method finding RLCTs to a given collection of functions.
Thus, we need different procedures for RLCT of each statistical model.
In fact, as mentioned in section \ref{sec:Intro}, RLCTs of several models has been analyzed in both statistics and machine learning fields for each cases.
Our work for CBM and Multitask contributes the body of knowledge in learning theory: clarifying RLCT of singular statistical model.
Value of such studies in the practical perspective is introduced in section \ref{sec:Intro}.

\section{Proofs of Claims}
\label{sec:Proof}
Let $\sim$ be a binomial relation whose both hand sides have the same RLCT and multiplicity,
and $\mathrm{M}(M,N)$ be the set of $M \times N$ real matrices.
Then, we define the following utility.
\begin{defi}[Rows Extractor]
Let $(\cdot)_{<d}: \mathrm{M}(I,J) \to \mathrm{M}(d-1,J)$, $2 \leqq d \leqq I$, $1 \leqq J$ be an operator for a matrix to extract the following submatrix:
\begin{align}
(W)_{<d} = \left(
\begin{matrix}
w_{11} & \ldots & w_{1J} \\
\vdots & \ddots & \vdots \\
w_{(d-1)1} & \ldots & w_{(d-1)J}
\end{matrix}
\right), \ W=(w_{ij})_{i=1, j=1}^{I,J}, \ W \in \mathrm{M}(I,J).
\end{align}
\end{defi}
We use this operator for a vector $w \in \mathbb{R}^M$ as $\mathbb{R}^M \cong \mathrm{M}(M,1)$; we refer to this as a column vector.
In the same way as above, we define $(\cdot)_{> d}$, $(\cdot)_{\leqq d}$, and $(\cdot)_{\geqq d}$, where inequalities correspond to the row index.
Also, $(\cdot)_{\ne d}$ and $(\cdot)_{= d}$ are defined as $[(\cdot)_{< d}; (\cdot)_{> d}]$ and $((\cdot)_{\leqq d})_{\geqq d}$, respectively.
We can immediately show $(WV)_{< d}=(W)_{< d}V$ for $W \in \mathrm{M}(I,R)$ and $V \in \mathrm{M}(R,J)$, where $R \geqq 1$.
In addition, other relations in the subscript satisfy the above rule; we can derive
$(WV)_{> d}=(W)_{> d}V$,
$(WV)_{\leqq d}=(W)_{\leqq d}V$,
$(WV)_{\geqq d}=(W)_{\geqq d}V$,
$(WV)_{\ne d}=(W)_{\ne d}V$, and $(WV)_{= d}=(W)_{= d}V$.
Moreover, to prove our theorems, we use the following lemmas \cite{SWatanabeBookE, Aoyagi1, Matsuda1-e}.

\begin{lem}
\label{lem-equivRLCT}
Let $f_1: \mathcal{W} \to \mathbb{R}$ and $f_2: \mathcal{W} \to \mathbb{R}$ be non-negative analytic functions.
If there are positive constants $\alpha_1, \alpha_2 >0$ such that
\begin{align}
\alpha_1f_1(w) \leqq f_2(w) \leqq \alpha_2 f_1(w)
\end{align}
on the neighborhood of $f_2^{-1}(0)$,
then $f_1 \sim f_2$.
\end{lem}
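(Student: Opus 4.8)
The plan is to trade the zeta-function definition of $\lambda$ and $m$ for the geometric characterisation via the volume function of Proposition~\ref{prop-volume-dim}, together with its standard refinement that also exhibits the multiplicity, and then to trap the volume function of $f_2$ between two dilates of that of $f_1$. Throughout, recall that by Theorem~\ref{thm-singular}, applied to \emph{each} $f_i$ separately (followed by the partition-of-unity computation that accompanies it), one has, with suitable $c_i>0$,
\[
V_i(t):=\int_{\{f_i<t\}}\varphi(w)\,dw\ \sim\ c_i\,t^{\lambda_i}(-\log t)^{m_i-1}\qquad(t\to+0),
\]
where $(\lambda_i,m_i)$ are the RLCT and multiplicity attached to $f_i$; the $\lambda_i$-part of this is exactly Proposition~\ref{prop-volume-dim}, and the $(-\log t)^{m_i-1}$-part is the standard refinement coming from the same resolution of singularities.

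First I would fix a bounded open set $\mathcal{W}_0$ on which the two-sided bound $\alpha_1 f_1(w)\le f_2(w)\le \alpha_2 f_1(w)$ holds. This bound forces $f_1^{-1}(0)\cap\mathcal{W}_0=f_2^{-1}(0)\cap\mathcal{W}_0$, so (as is implicit in the statement of the lemma) we may take $\mathcal{W}_0\supseteq f_1^{-1}(0)\cup f_2^{-1}(0)$. On the compact complement $\mathcal{W}\setminus\mathcal{W}_0$ both $f_i$ are bounded below by a positive constant, hence the contribution of that region to the learning zeta function $\int f_i(w)^z\varphi(w)\,dw$ is entire in $z$ and produces no pole; equivalently, $\{f_i<t\}\subseteq\mathcal{W}_0$ for all small $t>0$. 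So it is enough to compare $f_1$ and $f_2$ inside $\mathcal{W}_0$, and for small $t$ the volumes $V_i(t)$ are computed entirely inside $\mathcal{W}_0$.

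Next, the two-sided bound on $\mathcal{W}_0$ gives the inclusions $\{f_1<t/\alpha_2\}\subseteq\{f_2<t\}\subseteq\{f_1<t/\alpha_1\}$ for small $t$, hence
\[
V_1(t/\alpha_2)\ \le\ V_2(t)\ \le\ V_1(t/\alpha_1).
\]
Since $V_1(t/\alpha_j)\sim c_1\alpha_j^{-\lambda_1}\,t^{\lambda_1}(-\log t)^{m_1-1}$, the function $V_2(t)$ is squeezed between two quantities each asymptotic to a positive constant times $t^{\lambda_1}(-\log t)^{m_1-1}$. Comparing this with $V_2(t)\sim c_2\,t^{\lambda_2}(-\log t)^{m_2-1}$ shows that $t^{\lambda_2-\lambda_1}(-\log t)^{m_2-m_1}$ stays between two positive constants as $t\to 0$, which is possible only if $\lambda_1=\lambda_2$ and $m_1=m_2$. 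This yields $f_1\sim f_2$.

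The deep input is already in hand: producing the clean asymptotic $c_i\,t^{\lambda_i}(-\log t)^{m_i-1}$ for an arbitrary non-negative analytic function is Theorem~\ref{thm-singular} applied to each $f_i$, so no new heavy machinery is needed. The only genuine care in the write-up is the neighbourhood bookkeeping around $\mathcal{W}_0$ — checking that the region away from the common zero set is harmless and that the two zero sets coincide there — which is the step I would spell out most carefully. If one prefers to avoid the refined volume asymptotic, the identical squeeze can instead be carried out on the Laplace-type integral $Z_n^{(i)}=\int e^{-n f_i(w)}\varphi(w)\,dw\sim c_i\,n^{-\lambda_i}(\log n)^{m_i-1}$, using $e^{-n\alpha_2 f_1}\le e^{-n f_2}\le e^{-n\alpha_1 f_1}$ and the rescaling $n\mapsto\alpha_j n$.
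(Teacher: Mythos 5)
Your argument is correct, and it is essentially the standard proof of this fact: the paper itself does not prove Lemma \ref{lem-equivRLCT} but cites \cite{SWatanabeBookE}, where the comparison is carried out by exactly this kind of squeeze on the volume function $V(t)$ (equivalently on the Laplace integral $\int e^{-nf}\varphi\,dw$), using the inclusions $\{f_1<t/\alpha_2\}\subseteq\{f_2<t\}\subseteq\{f_1<t/\alpha_1\}$. The only ingredient beyond what the paper explicitly states is the refined asymptotic $V(t)\sim c\,t^{\lambda}(-\log t)^{m-1}$ (Proposition \ref{prop-volume-dim} records only the exponent $\lambda$), but this refinement does follow from the same resolution-of-singularities computation and is proved in the cited reference, so your appeal to it is legitimate.
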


\begin{lem}
\label{lem-KL-sqerr}
Let $K: \mathrm{M}(I,J) \to \mathbb{R}$, $W \mapsto K(W)$ be
\begin{align}
K(W)=\int dx q'(x) \lVert (W-W_0)x \rVert^2,
\end{align}
where $W_0 \in \mathrm{M}(I,J)$.
Put $\Phi: \mathrm{M}(I,J)\to\mathbb{R}$, $W \mapsto \Phi(W)=\lVert W-W_0 \rVert^2$.
A symmetric $J \times J$ matrix whose $(i,j)$-entry is $\int x_i x_j q'(x)dx$ for $1\leqq i \leqq N$ and $1 \leqq j \leqq N$ is denoted by $\mathscr{X}$.
If $\mathscr{X}$ is positive definite, then there exist positive constants $\alpha_1,\alpha_2>0$ such that $\alpha_1 \Phi(W) \leqq K(W) \leqq \alpha_2 \Phi(W)$ holds on a neighborhood of $K^{-1}(0)$.
Hence, $K \sim \Phi$.
\end{lem}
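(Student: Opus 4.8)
The plan is to establish the two-sided bound $\alpha_1\Phi(W)\le K(W)\le\alpha_2\Phi(W)$ \emph{globally} on $\mathrm{M}(I,J)$ — not merely on a neighborhood of $K^{-1}(0)$ — and then invoke Lemma \ref{lem-equivRLCT}. First I would rewrite $K(W)$ as an explicit quadratic form in the entries of $D:=W-W_0$. Writing $d_i\in\mathbb{R}^J$ for the $i$-th row of $D$, regarded as a column vector, one has $\lVert Dx\rVert^2=\sum_{i=1}^I (d_i^\top x)^2=\sum_{i=1}^I\sum_{j,k} d_{ij}d_{ik}x_jx_k$, so integrating against $q'(x)$ and using the definition of $\mathscr{X}$ (whose finiteness is implicit in the hypothesis that it is positive definite) gives
\begin{align}
K(W)=\sum_{i=1}^I d_i^\top \mathscr{X}\, d_i .
\end{align}

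Next I would use positive definiteness of the symmetric matrix $\mathscr{X}$. Let $0<\mu_{\min}\le\mu_{\max}<\infty$ denote its smallest and largest eigenvalues; then $\mu_{\min}\lVert v\rVert^2\le v^\top\mathscr{X} v\le\mu_{\max}\lVert v\rVert^2$ for every $v\in\mathbb{R}^J$. Applying this to each $d_i$ and summing over $i$ yields
\begin{align}
\mu_{\min}\sum_{i=1}^I\lVert d_i\rVert^2 \ \le\ K(W)\ \le\ \mu_{\max}\sum_{i=1}^I\lVert d_i\rVert^2 .
\end{align}
Since $\sum_{i=1}^I\lVert d_i\rVert^2=\lVert D\rVert^2=\lVert W-W_0\rVert^2=\Phi(W)$ (the squared Frobenius norm), this is precisely $\mu_{\min}\Phi(W)\le K(W)\le\mu_{\max}\Phi(W)$, so one may take $\alpha_1=\mu_{\min}$ and $\alpha_2=\mu_{\max}$.

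Finally, both $K$ and $\Phi$ are non-negative polynomial (hence analytic) functions, and the bound just obtained shows $K^{-1}(0)=\Phi^{-1}(0)=\{W_0\}$; in particular the inequality holds on a neighborhood of $K^{-1}(0)$, so Lemma \ref{lem-equivRLCT} applies and gives $K\sim\Phi$. I do not expect a genuine obstacle: the only points requiring care are the index bookkeeping in expanding $\lVert Dx\rVert^2$ into the row-wise quadratic form, so that $\mathscr{X}$ (which contracts the column index $J$) appears correctly, and the observation that $\mu_{\min},\mu_{\max}$ are uniform in $W$ because $\mathscr{X}$ does not depend on $W$ — this is exactly what lets the equivalence hold globally rather than only asymptotically near $W_0$.
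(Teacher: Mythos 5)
Your proposal is correct. Note that the paper does not actually prove this lemma in the text --- it simply cites Aoyagi--Watanabe for it --- so your argument serves as a self-contained replacement, and it is the standard one: rewriting $K(W)=\sum_{i=1}^{I} d_i^{\top}\mathscr{X}\,d_i$ for the rows $d_i$ of $W-W_0$ and sandwiching each summand between $\mu_{\min}\lVert d_i\rVert^2$ and $\mu_{\max}\lVert d_i\rVert^2$ is exactly how this equivalence is obtained in the reduced-rank-regression literature. Your observation that the constants are uniform in $W$ (so the bound is global, not just local near $K^{-1}(0)$) is a genuine strengthening of what the statement asks for, and your identification of $K^{-1}(0)=\{W_0\}$ is consistent with how the lemma is used in the proof of Theorem \ref{thm-main-cbm}. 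The only cosmetic remark is that the statement's indexing ``$1\leqq i\leqq N$, $1\leqq j\leqq N$'' for the $J\times J$ matrix $\mathscr{X}$ is a typo in the paper ($N$ should be $J$), which you implicitly and correctly repaired.
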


\begin{lem}
\label{lem-KL-ce}
Let $K: \Delta_d \to \mathbb{R}$, $w \mapsto K(w)$ be
\begin{align}
K(w) = \sum_{y \in \mathrm{Onehot}(d)} \prod_{j=1}^d (w^0_j)^{y_j} \log \frac{\prod_{j=1}^d (w^0_j)^{y_j}}{\prod_{j=1}^d (w_j)^{y_j}},
\end{align}
where $w_0$ in the interior of $\Delta_d$, $\Delta_d$ is the $d$-dimensional simplex, i.e. $w=(w_j)_{j=1}^d \in \Delta_d \Rightarrow \sum_{j=1}^d w_j=1, w_j \geqq 0)$,
and $\mathrm{Onehot}(d)$ is the set of $d$-dimensional onehot vectors
\begin{align}
\mathrm{Onehot}(d)=\{ y \in \{0,1\}^d \mid y_j=1, y_l=0 \ (l \ne j), {\rm for} \ j=1,\ldots,d \}.
\end{align}
Put $\Phi: \Delta_d \to \mathbb{R}$, $w \mapsto \Phi(w)=\sum_{j=1}^{d-1} (w_j-w^0_j)^2$.
There are positive constants $\alpha_1,\alpha_2>0$ such that $\alpha_1 \Phi(w) \leqq K(w) \leqq \alpha_2 \Phi(w)$ holds on a neighborhood of $K^{-1}(0)$.
Hence, $K \sim \Phi$.
\end{lem}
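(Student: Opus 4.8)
The plan is to recognize $K(w)$ as the Kullback--Leibler divergence between two categorical distributions and to sandwich it locally by the quadratic form $\Phi$, after which Lemma~\ref{lem-equivRLCT} finishes the job. First I would simplify the summand: if $y$ is the $j$-th one-hot vector, then $\prod_{j'=1}^d(w^0_{j'})^{y_{j'}} = w^0_j$ and $\prod_{j'=1}^d(w_{j'})^{y_{j'}} = w_j$, so
\[
K(w) = \sum_{j=1}^d w^0_j \log \frac{w^0_j}{w_j}.
\]
Since $w^0$ lies in the interior of $\Delta_d$, all $w^0_j>0$, hence $K$ is real-analytic on a neighborhood of $w^0$ (one on which every $w_j$ stays positive), and by Gibbs' inequality $K(w)\geqq 0$ with equality iff $w=w^0$. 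Thus $K^{-1}(0)=\{w^0\}$ and "a neighborhood of $K^{-1}(0)$" simply means a small ball around $w^0$ inside the open simplex.

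Next I would parametrize the simplex by the free coordinates $(w_1,\ldots,w_{d-1})$ with $w_d = 1-\sum_{j=1}^{d-1}w_j$ and Taylor-expand $K$ at $w^0$. A direct computation gives $\partial K/\partial w_k = -w^0_k/w_k + w^0_d/w_d$, which vanishes at $w^0$, and the Hessian at $w^0$ equals the categorical Fisher information matrix
\[
H = \mathrm{diag}\!\left(\tfrac{1}{w^0_1},\ldots,\tfrac{1}{w^0_{d-1}}\right) + \tfrac{1}{w^0_d}\,\mathbf{1}\mathbf{1}^{\top},
\]
which is positive definite (a positive diagonal plus a positive-semidefinite rank-one term). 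By Taylor's theorem with remainder, valid since $K$ is analytic near $w^0$, one has $K(w) = \tfrac12\,(w-w^0)_{<d}^{\top} H\,(w-w^0)_{<d} + o\!\left(\lVert (w-w^0)_{<d}\rVert^2\right)$; because $H$ is positive definite there exist constants $0<\alpha_1\leqq\alpha_2$ with
\[
\alpha_1 \Phi(w) \leqq K(w) \leqq \alpha_2 \Phi(w)
\]
on a sufficiently small ball around $w^0$, where $\Phi(w)=\sum_{j=1}^{d-1}(w_j-w^0_j)^2$. Applying Lemma~\ref{lem-equivRLCT} with $f_1=\Phi$, $f_2=K$ (both non-negative and analytic near their common zero $w^0$) yields $K\sim\Phi$.

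An alternative, fully elementary route to the two inequalities avoids differentiation: the upper bound follows from $K(w)\leqq \sum_{j=1}^d (w^0_j-w_j)^2/w_j$ (using $\log t\leqq t-1$), then bounding $1/w_j$ by a constant on a small ball and using $(w_d-w^0_d)^2\leqq (d-1)\Phi(w)$; the lower bound follows from Pinsker's inequality $K(w)\geqq \tfrac12\lVert w-w^0\rVert_1^2$ together with $\lVert w-w^0\rVert_2\leqq\lVert w-w^0\rVert_1$ and $\Phi(w)\leqq\lVert w-w^0\rVert_2^2$. Either way, the only point requiring a little care is the affine constraint $\sum_j w_j=1$: $\Phi$ is built from the first $d-1$ coordinates only, so one must consistently work in the reduced parametrization (equivalently, observe that $\lVert w-w^0\rVert_2^2$ and $\Phi(w)$ are comparable on the simplex since $(w_d-w^0_d)^2\leqq (d-1)\Phi(w)$). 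There is no genuine obstacle here; the substance is merely the positive-definiteness of the categorical Fisher information matrix, which is immediate, and this is precisely why the categorical part contributes a regular (dimension-counting) term to the RLCTs in Theorems~\ref{thm-expand-cbm} and~\ref{thm-expand-multitask}.
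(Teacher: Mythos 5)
Your proof is correct. Note that the paper itself does not prove Lemma~\ref{lem-KL-ce}; it only cites an external reference (Matsuda and Watanabe) for it, so there is no in-paper argument to compare against. Your derivation supplies exactly what that citation stands for: after collapsing the sum over one-hot vectors to $K(w)=\sum_{j=1}^d w^0_j\log(w^0_j/w_j)$, the zero set is the single interior point $w^0$, and in the reduced coordinates $(w_1,\ldots,w_{d-1})$ the Hessian at $w^0$ is the categorical Fisher information matrix $\mathrm{diag}(1/w^0_1,\ldots,1/w^0_{d-1})+(1/w^0_d)\mathbf{1}\mathbf{1}^{\top}$, whose positive definiteness together with Taylor's theorem gives the two-sided quadratic bound; your alternative route via $\log t\leqq t-1$ for the upper bound and Pinsker's inequality for the lower bound is also valid and has the minor advantage of producing explicit constants (e.g.\ $\alpha_1=1/2$ works on the whole simplex, not just locally). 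You correctly handle the only delicate point, namely that $\Phi$ omits the $d$-th coordinate, via $(w_d-w^0_d)^2\leqq(d-1)\Phi(w)$. Either version is a complete and self-contained proof of the lemma as stated.
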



Lemma \ref{lem-equivRLCT} was proved in \cite{SWatanabeBookE}.
Lemma \ref{lem-KL-sqerr} was proved in \cite{Aoyagi1}.
Lemma \ref{lem-KL-ce} was proved in \cite{Matsuda1-e}.
Also, by using Lemma \ref{lem-KL-ce} in the case $d=2$, the following lemma can be derived.
\begin{lem}
\label{lem-KL-bce}
Let $K: [0,1]^R \to \mathbb{R}$, $w \mapsto K(w)$ be
\begin{align}
K(w) = \sum_{c=(c_k)_{k=1}^R \in \{0,1\}^R } \prod_{l=1}^R (w^0_l)^{c_l}(1-w^0_l)^{1-c_l}
\log \frac{\prod_{k=1}^R (w^0_k)^{c_k}(1-w^0_k)^{1-c_k}}{\prod_{k=1}^R (w_k)^{c_k}(1-w_k)^{1-c_k}},
\end{align}
where $w_0$ in the interior of $[0,1]^R$.
Put $\Phi: [0,1]^R \to \mathbb{R}$, $w \mapsto \Phi(w)=\lVert w -w_0 \rVert^2$.
There are positive constants $\alpha_1,\alpha_2>0$ such that $\alpha_1 \Phi(w) \leqq K(w) \leqq \alpha_2 \Phi(w)$ holds on a neighborhood of $K^{-1}(0)$.
Hence, $K \sim \Phi$.
\end{lem}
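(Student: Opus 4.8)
The plan is to reduce the multivariate Bernoulli case to $R$ separate applications of Lemma \ref{lem-KL-ce} with $d=2$, exploiting the fact that $K$ splits as a sum of one–coordinate KL divergences. First I would expand the logarithm of the product in the definition of $K$ into a sum over $k$, and note that the product-of-Bernoulli true distribution factorizes; taking the expectation of a sum of functions each depending on a single coordinate under a product measure yields
\begin{align}
K(w) = \sum_{k=1}^R \kappa_k(w_k), \quad \kappa_k(w_k) = w^0_k \log\frac{w^0_k}{w_k} + (1-w^0_k)\log\frac{1-w^0_k}{1-w_k}.
\end{align}
In particular $K^{-1}(0) = \{w_0\}$ is a single point, since each summand is non-negative and vanishes only at $w_k = w^0_k$.

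Next I would identify each $\kappa_k$ with the $d=2$ instance of the function appearing in Lemma \ref{lem-KL-ce}: under the simplex parametrization $\Delta_2 = \{(w_k, 1-w_k) \mid w_k \in [0,1]\}$ with true point $(w^0_k, 1-w^0_k)$, the categorical KL divergence of that lemma becomes exactly $\kappa_k(w_k)$, while the associated $\Phi$ there, namely $\sum_{j=1}^{d-1}(w_j-w^0_j)^2$, reduces to $(w_k - w^0_k)^2$. Hence Lemma \ref{lem-KL-ce} supplies, for each $k$, a neighborhood $U_k \ni w^0_k$ and constants $\alpha_1^{(k)}, \alpha_2^{(k)} > 0$ with $\alpha_1^{(k)}(w_k - w^0_k)^2 \leqq \kappa_k(w_k) \leqq \alpha_2^{(k)}(w_k - w^0_k)^2$ on $U_k$.

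Finally I would assemble these coordinatewise bounds. On the product neighborhood $U = U_1 \times \cdots \times U_R$, which is a neighborhood of $K^{-1}(0) = \{w_0\}$, summing over $k$ gives
\begin{align}
\Big(\min_k \alpha_1^{(k)}\Big)\lVert w - w_0 \rVert^2 \leqq K(w) \leqq \Big(\max_k \alpha_2^{(k)}\Big)\lVert w - w_0 \rVert^2,
\end{align}
so taking $\alpha_1 = \min_k \alpha_1^{(k)}$ and $\alpha_2 = \max_k \alpha_2^{(k)}$ and invoking Lemma \ref{lem-equivRLCT} (both $K$ and $\Phi$ are non-negative and analytic near the interior point $w_0$) yields $K \sim \Phi$. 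There is no serious obstacle here; the only points deserving attention are the bookkeeping in the first step — verifying that the expectation under the product measure genuinely decouples the log-likelihood ratio into a sum of one-variable terms — and the observation that one may safely pass to a product neighborhood precisely because the zero set is the single point $w_0$.
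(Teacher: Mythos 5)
Your proposal is correct and follows essentially the same route as the paper's proof: decompose $K$ into a sum of one-coordinate Bernoulli KL divergences by exploiting the product structure of the true distribution, identify each summand with the $d=2$ case of Lemma \ref{lem-KL-ce} via the parametrization $(w_k, 1-w_k)\in\Delta_2$, and combine the coordinatewise bounds with $\min_k\alpha_1^{(k)}$ and $\max_k\alpha_2^{(k)}$. No gaps.
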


\begin{proof}[Proof of Lemma \ref{lem-KL-bce}]
In the case of $R=1$, this lemma is equivalent to Lemma \ref{lem-KL-ce}.

In the case of $R \geqq 2$,
developing $K(w)$, we have
\begin{align}
K(w) &= \sum_{c=(c_k)_{k=1}^d \in \{0,1\}^R } \left\{ \prod_{l=1}^R (w^0_l)^{c_l}(1-w^0_l)^{1-c_l} \times \right. \\
&\quad \left. \sum_{k=1}^R \left( \log (w^0_k)^{c_k}(1-w^0_k)^{1-c_k} - \log (w_k)^{c_k}(1-w_k)^{1-c_k} \right) \right\} \\
&= \sum_{c \in \{0,1\}^R } \prod_{l=1}^R (w^0_l)^{c_l}(1-w^0_l)^{1-c_l}
\sum_{k=1}^R \log \frac{(w^0_k)^{c_k}(1-w^0_k)^{1-c_k}}{(w_k)^{c_k}(1-w_k)^{1-c_k}} \\
&= \sum_{k=1}^R \sum_{c \in \{0,1\}^R } \prod_{l=1}^R (w^0_l)^{c_l}(1-w^0_l)^{1-c_l}
\log \frac{(w^0_k)^{c_k}(1-w^0_k)^{1-c_k}}{(w_k)^{c_k}(1-w_k)^{1-c_k}}.
\end{align}
Fix an arbitrary $k \in \{1, \ldots, R \}$.
If $l \ne k$, the expectation by the $l$-th Bernoulli distribution $(w^0_l)^{c_l}(1-w^0_l)^{1-c_l}$ does not affect to the $k$-th log mass ratio:
\begin{align}
\sum_{c_l=0}^1 (w^0_l)^{c_l}(1-w^0_l)^{1-c_l} \log \frac{(w^0_k)^{c_k}(1-w^0_k)^{1-c_k}}{(w_k)^{c_k}(1-w_k)^{1-c_k}} = \log \frac{(w^0_k)^{c_k}(1-w^0_k)^{1-c_k}}{(w_k)^{c_k}(1-w_k)^{1-c_k}}.
\end{align}
This leads to the following:
\begin{align}
K(w) &= \sum_{k=1}^R \sum_{c_k=0}^1 (w^0_k)^{c_k}(1-w^0_k)^{1-c_k}
\log \frac{(w^0_k)^{c_k}(1-w^0_k)^{1-c_k}}{(w_k)^{c_k}(1-w_k)^{1-c_k}}.
\end{align}
Now, let $v(k)=(w_k, 1-w_k) \in \Delta_2$, $v^0(k)=(w^0_k, 1-w^0_k) \in \Delta_2$, and $C(k)=(c_k, 1-c_k) \in \mathrm{Onehot}(2)$.
Then, we have
\begin{align}
(w_k)^{c_k}(1-w_k)^{1-c_k} &= \prod_{j=1}^2 (v(k))_{=j}^{(C(k))_{=j}}, \\
(w^0_k)^{c_k}(1-w^0_k)^{1-c_k} &= \prod_{j=1}^2 (v^0(k))_{=j}^{(C(k))_{=j}}.
\end{align}
For simplicity, we write $(v(k))_{=j}$ and $(C(k))_{=j}$ as $v(k)_j$ and $C(k)_j$, respectively.
Since $C(k) \in \mathrm{Onehot}(2)$ and $v(k), v^0(k) \in \Delta_2$, we obtain
\begin{align}
K(w) &= \sum_{k=1}^R \sum_{c_k=0}^1 \prod_{j=1}^2 v^0(k)_j^{C(k)_j}
\log \frac{\prod_{j=1}^2 v^0(k)_j^{C(k)_j}}{\prod_{j=1}^2 v(k)_j^{C(k)_j}} \\
&= \sum_{k=1}^R \sum_{C(k) \in \mathrm{Onehot}(2)} \prod_{j=1}^2 v^0(k)_j^{C(k)_j}
\log \frac{\prod_{j=1}^2 v^0(k)_j^{C(k)_j}}{\prod_{j=1}^2 v(k)_j^{C(k)_j}}.
\end{align}
Put
\begin{align}
\psi(v(k)) = \sum_{C(k) \in \mathrm{Onehot}(2)} \prod_{j=1}^2 v^0(k)_j^{C(k)_j}
\log \frac{\prod_{j=1}^2 v^0(k)_j^{C(k)_j}}{\prod_{j=1}^2 v(k)_j^{C(k)_j}}.
\end{align}
Applying Lemma \ref{lem-KL-ce},
there exist $R$-dimensional vectors $\alpha^1 = (\alpha^1_k)_{k=1}^R$ and $\alpha^2 = (\alpha^2_k)_{k=1}^R$ whose entries are positive constants such that
\begin{align}
\alpha^1_k (w_k-w^0_k)^2 \leqq \psi(v(k)) \leqq \alpha^2_k (w_k-w^0_k)^2,
\end{align}
on a neighborhood of $\psi^{-1}(0)$ for $k=1, \ldots, R$.
Summarizing them, we get
\begin{align}
\sum_{k=1}^R \alpha^1_k (w_k-w^0_k)^2 \leqq \sum_{k=1}^R \psi(v(k)) \leqq \sum_{k=1}^R \alpha^2_k (w_k-w^0_k)^2.
\end{align}
Because of $\Phi(w) =\sum_{k=1}^R (w_k-w^0_k)^2$, we have
\begin{align}
\min_k \{\alpha^1_k\} \Phi(w) \leqq K(w) \leqq \max_k \{ \alpha^2_k \} \Phi(w).
\end{align}
Therefore, $K \sim \Phi$.
\end{proof}

The above lemmas indicate that equivalent discrepancies have same RLCTs and examples of such discrepancies are the KL divergences between Gaussian, categorical, and Bernoulli distributions.

Here, we prove Theorem \ref{thm-main-cbm}.

\begin{proof}[Proof of Theorem \ref{thm-main-cbm}]
By using
\begin{align}
\lVert y-ABx \rVert^2 &= \langle y-ABx, y-ABx \rangle \\
&= \lVert y \rVert^2 -2\langle y, ABx \rangle + \lVert ABx \rVert^2
\end{align}
and that of $y-A_0B_0x$, $c-Bx$, and $c-B_0x$,
we expand $\log q_1/p_1$ as follows:

\begin{align}
\log \frac{q_1(y,c|x)}{p_1(y,c|A,B,x)}
&= \log \frac{\exp\left(-\frac{1}{2}\lVert y-A_0B_0x \rVert^2\right)\exp\left(-\frac{\gamma}{2}\lVert c-B_0x \rVert^2\right)}{\exp\left(-\frac{1}{2}\lVert y-ABx \rVert^2\right)\exp\left(-\frac{\gamma}{2}\lVert c-Bx \rVert^2\right)} \\
&= -\frac{1}{2}(\lVert y \rVert^2 -2\langle y, A_0B_0x \rangle + \lVert A_0B_0x \rVert^2)
-\frac{\gamma}{2}(\lVert c \rVert^2 -2\langle c, B_0x \rangle + \lVert B_0x \rVert^2) \\
&\quad +\frac{1}{2}(\lVert y \rVert^2 -2\langle y, ABx \rangle + \lVert ABx \rVert^2)
+\frac{\gamma}{2}(\lVert c \rVert^2 -2\langle c, Bx \rangle + \lVert Bx \rVert^2) \\
&=\frac{1}{2}(\lVert ABx \rVert^2 -2\langle y, (AB-A_0B_0)x \rangle - \lVert A_0B_0x \rVert^2) \\
&\quad +\frac{\gamma}{2}(\lVert Bx \rVert^2 -2\langle c, (B-B_0)x \rangle - \lVert B_0x \rVert^2).
\end{align}
Averaging by $q_{1}(y,c|x)$, we have
\begin{align}
\iint dcdy q_1(y,c|x)\log \frac{q_1(y,c|x)}{p_1(y,c|A,B,x)}
&= \frac{1}{2}(\lVert ABx \rVert^2 -2\langle A_0B_0x, (AB-A_0B_0)x \rangle - \lVert A_0B_0x \rVert^2) \\
&\quad +\frac{\gamma}{2}(\lVert Bx \rVert^2 -2\langle B_0x, (B-B_0)x \rangle - \lVert B_0x \rVert^2) \\
&= \frac{1}{2}(\lVert ABx \rVert^2 -2\langle A_0B_0x, ABx \rangle + \lVert A_0B_0x \rVert^2) \\
&\quad +\frac{\gamma}{2}(\lVert Bx \rVert^2 -2\langle B_0x, Bx \rangle + \lVert B_0x \rVert^2) \\
&= \frac{1}{2}( \lVert (AB-A_0B_0)x \rVert^2 + \gamma \lVert (B-B_0)x \rVert^2).
\end{align}
Let $\Psi_1(A,B)=(1/2)\int dx q'(x) \lVert (AB-A_0B_0)x \rVert^2$ and $\Psi_2(B)=(1/2)\int dx q'(x) \lVert (B-B_0)x \rVert^2$.
Because of Lemma \ref{lem-KL-sqerr}, there are positive constants $c_1, c_2, c_3, c_4 >0$ such that
\begin{align}
\label{pf-thm1-ABineq}
c_1 \lVert AB-A_0B_0 \rVert^2 \leqq \Psi_1(A,B) \leqq c_2 \lVert AB-A_0B_0 \rVert^2, \\
\label{pf-thm1-Bineq}
c_3 \lVert B-B_0 \rVert^2 \leqq \gamma \Psi_2(B) \leqq c_4 \lVert B-B_0 \rVert^2.
\end{align}
Thus, by adding Eq. (\ref{pf-thm1-ABineq}) to  Eq. (\ref{pf-thm1-Bineq}),
we have
\begin{align}
c_1 \lVert AB-A_0B_0 \rVert^2 + c_3 \lVert B-B_0 \rVert^2
\leqq \Psi_1(A,B) + \gamma \Psi_2(B)
\leqq c_2 \lVert AB-A_0B_0 \rVert^2 + c_4 \lVert B-B_0 \rVert^2.
\end{align}
Let $C_1$ and $C_2$ be $\min\{c_1,c_3\}$ and $\max\{c_2, c_4 \}$, respectively.
We immediately obtain
\begin{align}
C_1 (\lVert AB-A_0B_0 \rVert^2 + \lVert B-B_0 \rVert^2)
\leqq \Psi_1(A,B) + \gamma \Psi_2(B)
\leqq C_2 (\lVert AB-A_0B_0 \rVert^2 + \lVert B-B_0 \rVert^2).
\end{align}
Applying Lemma \ref{lem-equivRLCT} to the above inequality, we have
\begin{align}
\Psi_1(A,B) + \gamma \Psi_2(B) \sim \lVert AB-A_0B_0 \rVert^2 + \lVert B-B_0 \rVert^2.
\end{align}
Therefore,
\begin{align}
K_1(A,B) &=\iiint dxdcdy q'(x) q_1(y,c|x)\log \frac{q_1(y,c|x)}{p_1(y,c|A,B,x)} \\
&=\Psi_1(A,B) + \gamma \Psi_2(B) \\
&\sim \lVert AB-A_0B_0 \rVert^2 + \lVert B-B_0 \rVert^2.
\end{align}

To determine the RLCT $\lambda_1$ and its multiplicity $m_1$, we should consider the following analytic set
\begin{align}
\mathcal{V}_1 = \{(A,B) \mid \lVert AB-A_0B_0 \rVert^2+ \lVert B-B_0 \rVert^2=0, A \in \mathrm{M}(M,K) \ {\rm and } \ B \in \mathrm{M}(K,N) \}.
\end{align}
We take $\lVert AB-A_0B_0 \rVert^2+ \lVert B-B_0 \rVert^2=0$.
Because $\lVert AB-A_0B_0 \rVert^2 \geqq 0$ and $\lVert B-B_0 \rVert^2 \geqq 0$ hold,
we have
\begin{align}
\lVert AB-A_0B_0 \rVert^2+ \lVert B-B_0 \rVert^2=0 \Leftrightarrow \lVert AB-A_0B_0 \rVert^2=0 \ {\rm and} \ \lVert B-B_0 \rVert^2=0.
\end{align}
Hence, $AB=A_0B_0$ and $B=B_0$, i.e. $(A,B)=(A_0,B_0)$.
Therefore, $\mathcal{V}_1=\{(A_0,B_0) \}$.
This means that there is no singularity, i.e. the model is regular.
Hence, the RLCT is equal to a half of the parameter dimension \cite{SWatanabeBookE}.
Since the parameter dimension equals $(M+N)K$, we have
\begin{align}
\lambda_1 = \frac{1}{2}(M+N)K, \ m_1=1.
\end{align}
\end{proof}

Next, we prove Theorem \ref{thm-main-multitask}.
This is immediately derived using the Aoyagi's theorem \cite{Aoyagi1} as follows.
\begin{thm}[Aoyagi and Watanabe]
\label{thm-aoyagi-rrr}
Suppose $\mathscr{X}$ is positive definite.
Let $\lambda_3$ be the RLCT of Standard of three-layered linear neural network and $m_3$ be its multiplicity.
By using the input dimension $N$, the number of intermediate units $H$, the output dimension $M$, and the true rank $H_0$,
they can be represented as follows:
\begin{enumerate}
\item In the case $M+H_0 \leqq N+H$ and $N+H_0 \leqq M+H$ and $H+H_0 \leqq N+M$,
    \begin{enumerate}
        \item and if $N+M+H+H_0$ is even, then
        $$\lambda_3 = \frac{1}{8}\{ 2(H+H_0)(N+M)-(N-M)^2-(H+H_0)^2 \}, \ m_3 = 1.$$
        \item and if $N+M+K+H+H_0$ is odd, then
        $$\lambda_3 = \frac{1}{8}\{ 2(H+H_0)(N+M)-(N-M)^2-(H+H_0)^2 +1 \}, \ m_3 = 2.$$
    \end{enumerate}
\item In the case $N+H<M+H_0$, then
$$\lambda_3 = \frac{1}{2}\{HN+H_0(M-H)\}, \ m_3 = 1.$$
\item In the case $M+H<N+H_0$, then
$$\lambda_3 = \frac{1}{2}\{HM+H_0(N-H)\}, \ m_3 = 1.$$
\item Otherwise (i.e. $N+M<H+H_0$), then
$$\lambda_3 = \frac{1}{2}NM, \ m_3 = 1.$$
\end{enumerate}
\end{thm}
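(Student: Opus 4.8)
This is the reduced-rank-regression RLCT of \cite{Aoyagi1}, and the plan is to reconstruct its proof by reducing everything to a resolution of singularities for a product of two matrices. First I would put the Kullback--Leibler divergence of Standard into normal form: the conditional density is Gaussian with mean $ABx$, so with $\mathscr{X}$ positive definite Lemma \ref{lem-KL-sqerr} (applied with $W=AB$, $W_0=A_0B_0$) gives $K_3(A,B)\sim\lVert AB-A_0B_0\rVert^2$, and by Lemma \ref{lem-equivRLCT} it suffices to find the largest pole of $\int\lVert AB-A_0B_0\rVert^{2z}\varphi(A,B)\,dA\,dB$ together with its order. Then I would normalize the true matrix: the RLCT and multiplicity are unchanged under $A\mapsto PA$, $B\mapsto BQ$ with $P\in O(M)$, $Q\in O(N)$, and --- as the next step makes clear --- they depend on $A_0B_0$ only through $H_0=\mathrm{rank}(A_0B_0)$, so by the singular value decomposition I may replace $A_0B_0$ by any fixed $M\times N$ matrix of rank $H_0$.

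The third step splits off the contribution of the true rank. Near a point of $\{AB=A_0B_0\}$ at which $A$ and $B$ each have rank $H_0$, I would use the redundancy $A\mapsto Ag$, $B\mapsto g^{-1}B$ for $g\in GL(H)$ together with the implicit function theorem to pass to local coordinates in which $H_0(M+N-H_0)$ directions are regular (the Hessian of $\lVert AB-A_0B_0\rVert^2$ is nondegenerate there) and the remaining directions form a fresh copy of the pure problem $\lVert A'B'\rVert^2$ with $A'\in\mathrm{M}(M-H_0,H-H_0)$ and $B'\in\mathrm{M}(H-H_0,N-H_0)$. If $\mu(m,n,h)$ denotes the RLCT of $\lVert A'B'\rVert^2$ on $\mathrm{M}(m,h)\times\mathrm{M}(h,n)$, then using that RLCTs add over disjoint blocks of variables when one block contributes a nondegenerate quadratic form, the local value is
\[
\tfrac12 H_0(M+N-H_0)+\mu(M-H_0,\,N-H_0,\,H-H_0),
\]
with multiplicity equal to that of $\mu$; here one should confirm that this local computation realizes the minimum over all charts of the resolution, hence the global RLCT and multiplicity.

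The fourth step --- which I expect to be the main obstacle --- computes $\mu(m,n,h)$ by an explicit resolution of singularities of $\lVert A'B'\rVert^2$ through a sequence of blow-ups organized as a recursion in $(m,n,h)$. The base cases are $h=0$, where $\mu=0$, and $h=1$, where $\lVert A'B'\rVert^2=\lVert a\rVert^2\lVert b\rVert^2$ with $a\in\mathbb{R}^m$ and $b\in\mathbb{R}^n$ and a direct computation gives $\mu=\tfrac12\min(m,n)$, with multiplicity $2$ precisely when $m=n$. Each blow-up lowers $h$ (and, in suitable charts, also $m$ and $n$), and tallying the Jacobian exponents and multiplicities over all charts yields three regimes: $\mu=\tfrac12 mn$ when $h\ge m+n$; $\mu=\tfrac12 h\min(m,n)$ when $m\ge n+h$ or $n\ge m+h$; and the balanced closed form $\tfrac18\{2h(m+n)-(m-n)^2-h^2\}$, increased by $\tfrac18$ and with multiplicity $2$ when $m+n+h$ is odd, when all three triangle-type inequalities $m\le n+h$, $n\le m+h$, $h\le m+n$ hold. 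Substituting $\mu(M-H_0,N-H_0,H-H_0)$ into the formula of the third step --- and noting that $(M-H_0)+(N-H_0)+(H-H_0)$ has the same parity as $N+M+H+H_0$ --- recovers the four cases and the even/odd dichotomy of the theorem. The hard technical points throughout are carrying out the iterated blow-ups while bookkeeping every chart, proving that the claimed minimal pole is actually attained, and identifying exactly the charts that produce multiplicity $2$.
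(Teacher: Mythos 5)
You should first note that the paper does not prove Theorem \ref{thm-aoyagi-rrr} at all: it is imported verbatim from Aoyagi and Watanabe \cite{Aoyagi1} and used as a black box (the proof of Theorem \ref{thm-main-multitask} consists of substituting $M+K$ for $M$ in it). So there is no internal proof to compare against; what you have written is an attempted reconstruction of the cited result. Your reduction steps are sound and match the structure of the original argument: Lemma \ref{lem-KL-sqerr} gives $K\sim\lVert AB-A_0B_0\rVert^2$, orthogonal normalization of $A_0B_0$ is legitimate, and the rank-splitting identity
\begin{align}
\lambda_3=\tfrac12 H_0(M+N-H_0)+\mu(M-H_0,\,N-H_0,\,H-H_0)
\end{align}
is arithmetically consistent with all four cases of the theorem (I checked: e.g.\ in case 2 it gives $\tfrac12 H_0(M+N-H_0)+\tfrac12(H-H_0)(N-H_0)=\tfrac12\{HN+H_0(M-H)\}$, and the parity of $(M-H_0)+(N-H_0)+(H-H_0)$ does agree with that of $N+M+H+H_0$), as is your $h=1$ base case $\mu=\tfrac12\min(m,n)$ with multiplicity $2$ iff $m=n$.

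The genuine gap is that the entire mathematical content of the theorem is concentrated in your fourth step, and there you state the answer rather than derive it. The three-regime formula for $\mu(m,n,h)$, the $+\tfrac18$ correction and multiplicity $2$ in the odd case, and the claim that the recursion of blow-ups terminates with exactly these values are precisely what Aoyagi and Watanabe's paper spends its length establishing; ``tallying the Jacobian exponents and multiplicities over all charts yields three regimes'' is a promissory note, not a proof. The same applies to the assertion, which you flag but do not discharge, that the generic-rank locus of the fiber $\{AB=A_0B_0\}$ (where $\mathrm{rank}\,A=\mathrm{rank}\,B=H_0$) realizes the global minimum of the local RLCTs --- a priori the deeper strata where $A$ or $B$ drops rank could dominate, and ruling this out is part of the chart bookkeeping you defer. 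As a roadmap your proposal is correct and faithful to the known proof; as a proof it is incomplete at exactly the step that cannot be waved through. Given that the paper itself treats this as an external citation, the appropriate resolution is either to cite \cite{Aoyagi1} for step 4 explicitly or to carry out the induction in full.
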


\begin{proof}[Proof of Theoem \ref{thm-main-multitask}]
In the case of Multitask, the output dimension is expanded from $M$ to $M+K$ since Multitask makes the output and the concept co-occur to derive the explanation.
Mathematically, this involves simply increasing the dimension.
Therefore, by plug-inning $M+K$ to $M$ in Theorem \ref{thm-aoyagi-rrr}, we obtain Theorem \ref{thm-main-multitask}.
\end{proof}

We expand the main results to the case mentioned in section \ref{sec:Expansion}.

\begin{proof}[Proof of Theorem \ref{thm-expand-cbm}]
Put $u=(u_j)_{j=1}^M=s_{M}(AB x)$, $u_0=(u^0_j)_{j=1}^M=s_{M}(A_0B_0 x)$, $v=(v_k)_{k=1}^K= \sigma_{K}(B x)$, and $v_0=(v^0_k)_{k=1}^K=\sigma_{K}(B_0 x)$.

(1) In the case when $i=1$ and $j=1$, this is Theorem \ref{thm-main-cbm}.

(2) In the case when $i=1$ and $j=2$, we expand $\log q_1^{12}/ p_1^{12}$ as the following:

\begin{align}
\log \frac{q_1^{12}(y,c|x)}{p_1^{12}(y,c|A,B,x)}
&= \log \frac{\exp\left(-\frac{1}{2}\lVert y-A_0B_0x \rVert^2\right)
\left(\prod_{k=1}^K (\sigma_{K}(B_0 x))_k^{c_k} (1-(\sigma_{K}(B_0 x))_k)^{1-c_k} \right)^{\gamma}}
{\exp\left(-\frac{1}{2}\lVert y-ABx \rVert^2\right)
\left( \prod_{k=1}^K (\sigma_{K}(B x))_k^{c_k} (1-(\sigma_{K}(B x))_k)^{1-c_k} \right)^{\gamma}} \\
&= -\frac{1}{2}(\lVert y \rVert^2 -2\langle y, A_0B_0x \rangle + \lVert A_0B_0x \rVert^2)
+\gamma \log \prod_{k=1}^K (v^0_k)^{c_k}(1-v^0_k)^{1-c_k} \\
&\quad +\frac{1}{2}(\lVert y \rVert^2 -2\langle y, ABx \rangle + \lVert ABx \rVert^2)
-\gamma \log \prod_{k=1}^K (v_k)^{c_k}(1-v_k)^{1-c_k} \\
&=\frac{1}{2}(\lVert ABx \rVert^2 -2\langle y, (AB-A_0B_0)x \rangle - \lVert A_0B_0x \rVert^2) \\
&\quad + \gamma\log \frac{\prod_{k=1}^K (v^0_k)^{c_k}(1-v^0_k)^{1-c_k}}{\prod_{k=1}^K (v_k)^{c_k}(1-v_k)^{1-c_k}}
\end{align}
Integrating by $dydc q_1^{12}(y,c|x)$, we have
\begin{align}
&\quad \iint dydc q_1^{12}(y,c|x) \log \frac{q_1^{12}(y,c|x)}{p_1^{12}(y,c|A,B,x)} \\
&= \frac{1}{2}\lVert (AB-A_0B_0)x \rVert^2 +
\gamma \sum_{c=(c_k)_{k=1}^K \in \{0,1\}^K } \prod_{l=1}^K (v^0_l)^{c_l}(1-v^0_l)^{1-c_l} \log \frac{\prod_{k=1}^K (v^0_k)^{c_k}(1-v^0_k)^{1-c_k}}{\prod_{k=1}^K (v_k)^{c_k}(1-v_k)^{1-c_k}}.
\end{align}
According to Lemma \ref{lem-KL-bce}, the second term $\psi(v)$ is evaluated by $\lVert v-v_0 \rVert^2$ as shown below; there are positive constants $\alpha_1, \alpha_2>0$ such that
\begin{align}
\alpha_1 \lVert v-v_0 \rVert^2 \leqq
\psi(v)
\leqq \alpha_2 \lVert v-v_0 \rVert^2,
\end{align}
where
\begin{align}
\psi(v) = \gamma \sum_{c=(c_k)_{k=1}^K \in \{0,1\}^K } \prod_{k=1}^K (v^0_k)^{c_k}(1-v^0_k)^{1-c_k} \log \frac{\prod_{k=1}^K (v^0_k)^{c_k}(1-v^0_k)^{1-c_k}}{\prod_{k=1}^K (v_k)^{c_k}(1-v_k)^{1-c_k}}.
\end{align}
Let $\beta_1 = \min \{1, \alpha_1 \}$ and $\beta_2 = \max \{ 1, \alpha_2 \}$.
Because $\lVert (AB-A_0B_0)x \rVert^2/2 \geqq 0$ holds, adding it to the both sides, we have
\begin{align}
\beta_1 \left(\frac{1}{2}\lVert (AB-A_0B_0)x \rVert^2 + \lVert v-v_0 \rVert^2 \right)
&\leqq \frac{1}{2}\lVert (AB-A_0B_0)x \rVert^2 + \psi(v) \\
&\leqq \beta_2 \left(\frac{1}{2}\lVert (AB-A_0B_0)x \rVert^2 + \lVert v-v_0 \rVert^2 \right),
\end{align}
Thus, we should consider $\lVert (AB-A_0B_0)x \rVert^2/2 + \lVert v-v_0 \rVert^2$.
With $v=\sigma_K(Bx)$ and $v_0=\sigma_K(B_0x)$, we have
\begin{align}
\frac{1}{2}\lVert (AB-A_0B_0)x \rVert^2 + \lVert v-v_0 \rVert^2
= \frac{1}{2}\lVert (AB-A_0B_0)x \rVert^2 + \lVert \sigma_K(Bx)-\sigma_K(B_0x) \rVert^2.
\end{align}
On account of Lemma \ref{lem-KL-sqerr}, the average of the first term by $dxq'(x)$ is equivalent to $\lVert AB-A_0B_0 \rVert^2$.
On the other hand, since $\sigma_K$ is analytic isomorphic onto its image and does not have parameters, the averaged second term has the same RLCT of linear regression $Bx-B_0x$ (both models are regular), i.e.
\begin{align}
\int dxq'(x)\lVert \sigma_K(Bx)-\sigma_K(B_0x) \rVert^2
&\sim \int dxq'(x)\lVert (B-B_0)x \rVert^2 \\
&\sim \lVert B-B_0 \rVert^2.
\end{align}
Hence,
\begin{align}
\iiint dxdydc q'(x)q_1^{12}(y,c|x) \log \frac{q_1^{12}(y,c|x)}{p_1^{12}(y,c|A,B,x)}
\sim \lVert AB-A_0B_0 \rVert^2 + \lVert B-B_0 \rVert^2
\end{align}
holds and results in Theorem \ref{thm-main-cbm}.

(3) In the case when $i=2$ and $j=1$,
similar to the case of $(i,j)=(1,2)$,
because of
\begin{align}
\log \frac{q_1^{21}(y,c|x)}{p_1^{21}(y,c|A,B,x)}
&= \log \frac{\prod_{j=1}^{M} (s_{M}(A_0B_0x))_j^{y_j} \exp\left(-\frac{\gamma}{2}\lVert c-B_0x \rVert^2\right)}{\prod_{j=1}^{M} (s_{M}(ABx))_j^{y_j} \exp\left(-\frac{\gamma}{2}\lVert c-Bx \rVert^2\right)} \\
&= \log\prod_{j=1}^{M} (s_{M}(A_0B_0x))_j^{y_j}
-\frac{\gamma}{2}(\lVert c \rVert^2 -2\langle c, B_0x \rangle + \lVert B_0x \rVert^2) \\
&\quad -\log\prod_{j=1}^{M} (s_{M}(ABx))_j^{y_j}
+\frac{\gamma}{2}(\lVert c \rVert^2 -2\langle c, Bx \rangle + \lVert B_0x \rVert^2) \\
&=\log\frac{\prod_{j=1}^{M} (s_{M}(A_0B_0x))_j^{y_j}}{\prod_{j=1}^{M} (s_{M}(ABx))_j^{y_j}} +\frac{\gamma}{2}(\lVert Bx \rVert^2 -2\langle c, (B-B_0)x \rangle - \lVert B_0x \rVert^2),
\end{align}
we have
\begin{align}
&\quad \iint dydc q_1^{12}(y,c|x) \log \frac{q_1^{12}(y,c|x)}{p_1^{12}(y,c|A,B,x)} \\
&= \sum_{y \in \mathrm{Onehot}(M)}\prod_{j=1}^{M} (u^0_j)^{y_j}\log\frac{\prod_{j=1}^{M} (u^0_j)^{y_j}}{\prod_{j=1}^{M} (u_j)^{y_j}}
+\frac{\gamma}{2}(\lVert (B-B_0)x \rVert^2,
\end{align}
by using $u=s_M(ABx)$ and $u_0=s_M(A_0B_0x)$.
Owing to Lemma \ref{lem-KL-ce} and \ref{lem-KL-sqerr},
the first and second terms averaged by $dxq'(x)$ have same RLCTs of the average of $\sum_{j=1}^{M-1}(u_j - u^0_j)^2$ and $\lVert B-B_0 \rVert^2$, respectively.
Since a map $(w)_{<M} \mapsto (s_{M}(w))_{<M}$ is analytic and isomorphic onto its image,
we obtain
\begin{align}
\int dxq'(x)\sum_{j=1}^{M-1}(u_j - u^0_j)^2
&=\int dxq'(x) \lVert (u)_{<M} - (u_0)_{<M} \rVert^2 \\
&\sim \int dxq'(x)\lVert (ABx)_{<M}-(A_0B_0x)_{<M} \rVert^2 \\
&= \int dxq'(x)\lVert ((A)_{<M}B-(A_0)_{<M}B_0)x \rVert^2 \\
&\sim \lVert (A)_{<M}B-(A_0)_{<M}B_0 \rVert^2.
\end{align}
Therefore, we have
\begin{align}
\iiint dxdydc q'(x)q_1^{12}(y,c|x) \log \frac{q_1^{12}(y,c|x)}{p_1^{12}(y,c|A,B,x)}
\sim \lVert  (A)_{<M}B-(A_0)_{<M}B_0 \rVert^2 + \lVert B-B_0 \rVert^2.
\end{align}
Similar to the proof of Theorem \ref{thm-main-cbm},
the zero point set of the above function is $((A)_{<M},B)=((A_0)_{<M},B_0)$.
This leads to the following:
\begin{align}
\lambda_1^{12}=\frac{1}{2}(M+N-1)K, \\
m_1^{12} = 1.
\end{align}

(4) In the case when $i=2$ and $j=2$, the KL divergence can be developed in the same way as that in the case of $(i,j)=(1,2)$ and $(2,1)$.
Therefore, we have
\begin{align}
K_1^{22}(A,B) &= \int dx q'(x) \left( \sum_{y \in \mathrm{Onehot}(M)}\prod_{j=1}^{M} (u^0_j)^{y_j}\log\frac{\prod_{j=1}^{M} (u^0_j)^{y_j}}{\prod_{j=1}^{M} (u_j)^{y_j}} \right. \\
&\quad \left. + \gamma \sum_{c=(c_k)_{k=1}^K \in \{0,1\}^K } \prod_{k=1}^K (v^0_k)^{c_k}(1-v^0_k)^{1-c_k} \log \frac{\prod_{k=1}^K (v^0_k)^{c_k}(1-v^0_k)^{1-c_k}}{\prod_{k=1}^K (v_k)^{c_k}(1-v_k)^{1-c_k}} \right), \\
&\quad u=s_M(ABx), \ u_0=s_M(A_0B_0x), \ v=\sigma_K(Bx), \ v_0=\sigma_K(B_0x).
\end{align}
Similar to
\begin{align}
\sum_{y \in \mathrm{Onehot}(M)}\prod_{j=1}^{M} (u^0_j)^{y_j}\log\frac{\prod_{j=1}^{M} (u^0_j)^{y_j}}{\prod_{j=1}^{M} (u_j)^{y_j}}
\sim \lVert (A)_{<M}B-(A_0)_{<M}B_0 \rVert^2
\end{align}
when $i=2$ and $j=1$ and
\begin{align}
\gamma \sum_{c=(c_k)_{k=1}^K \in \{0,1\}^K } \prod_{k=1}^K (v^0_k)^{c_k}(1-v^0_k)^{1-c_k} \log \frac{\prod_{k=1}^K (v^0_k)^{c_k}(1-v^0_k)^{1-c_k}}{\prod_{k=1}^K (v_k)^{c_k}(1-v_k)^{1-c_k}}
\sim \lVert B-B_0 \rVert^2
\end{align}
when $i=1$ and $j=2$, we obtain
\begin{align}
K_1^{22}(A,B) \sim \lVert (A)_{<M}B-(A_0)_{<M}B_0 \rVert^2 + \lVert B-B_0 \rVert^2.
\end{align}
This is the same when $i=2$ and $j=1$ and
\begin{align}
\lambda_1^{22}=\frac{1}{2}(M+N-1)K, \\
m_1^{22} = 1.
\end{align}

Based on (1), (2), (3), and (4) noted above, the theorem is therefore proved.
\end{proof}

\begin{proof}[Proof of Theorem \ref{thm-expand-multitask}]
Put $u=(u_j)_{j=1}^M=s_{M}((UV x)^\mathrm{y})$, $u_0=(u^0_j)_{j=1}^M=s_{M}((U_0V_0 x)^\mathrm{y})$, $v=(v_k)_{k=1}^K=\sigma_{K}((UVx)^\mathrm{c})$ and $v_0=(v^0_k)_{k=1}^K=\sigma_{K}((U_0V_0x)^\mathrm{c})$.
Also, set $\overline{U}=(U)_{\leqq M}$, $\overline{U_0}=(U_0)_{\leqq M}$, $\underline{U}=(U)_{>M}$, and $\underline{U}=(U_0)_{> M}$.

(1) In the case when $i=1$ and $j=1$, this is same as Theorem \ref{thm-main-multitask}.

(2) In the case when $i=1$ and $j=2$.
Using the same expansion method as that of $K_1^{12}(A,B)$ in the proof of Theorem \ref{thm-expand-cbm}, we have
\begin{align}
\log \frac{q_2^{12}(y,c|x)}{p_2^{12}(y,c|U,V,x)}
&=\frac{1}{2}(\lVert (UVx)^\mathrm{y} \rVert^2 -2\langle y, (UVx)^\mathrm{y}-(U_0V_0x)^\mathrm{y} \rangle - \lVert (U_0V_0x)^\mathrm{y} \rVert^2) \\
&\quad + \gamma\log \frac{\prod_{k=1}^H (v^0_k)^{c_k}(1-v^0_k)^{1-c_k}}{\prod_{k=1}^H (v_k)^{c_k}(1-v_k)^{1-c_k}} \\
&=\frac{1}{2}(\lVert (UVx)_{\leqq M} \rVert^2 -2\langle y, (UVx)_{\leqq M}-(U_0V_0x)_{\leqq M} \rangle - \lVert (U_0V_0x)_{\leqq M} \rVert^2) \\
&\quad + \gamma\log \frac{\prod_{k=1}^H (v^0_k)^{c_k}(1-v^0_k)^{1-c_k}}{\prod_{k=1}^H (v_k)^{c_k}(1-v_k)^{1-c_k}} \\
&=\frac{1}{2}(\lVert \overline{U}Vx \rVert^2 -2\langle y, (\overline{U}V-\overline{U_0}V_0)x \rangle - \lVert \overline{U_0}V_0x \rVert^2) \\
&\quad + \gamma\log \frac{\prod_{k=1}^H (v^0_k)^{c_k}(1-v^0_k)^{1-c_k}}{\prod_{k=1}^H (v_k)^{c_k}(1-v_k)^{1-c_k}}.
\end{align}
Equally averaging similar to the proof of Theorem\ref{thm-expand-cbm} when $(i,j)=(1,2)$ and applying Lemma \ref{lem-KL-bce},
we obtain
\begin{align}
K_2^{12}(U,V) &= \int dx q'(x) \left(
\lVert (\overline{U}V-\overline{U_0}V_0)x \rVert^2 \right. \\
&\quad \left. + \gamma \sum_{c=(c_k)_{k=1}^d \in \{0,1\}^H } \prod_{k=1}^H (v^0_k)^{c_k}(1-v^0_k)^{1-c_k} \log \frac{\prod_{k=1}^H (v^0_k)^{c_k}(1-v^0_k)^{1-c_k}}{\prod_{k=1}^H (v_k)^{c_k}(1-v_k)^{1-c_k}}
\right) \\
&\sim \int dx q'(x) \left( \lVert (\overline{U}V-\overline{U_0}V_0)x \rVert^2 + \lVert \sigma_{K}(\underline{U}Vx) - \sigma_{K}(\underline{U_0}V_0x) \rVert^2 \right).
\end{align}
$\sigma_K$ is analytic isomorphic onto its image and that leads the following:
\begin{align}
K_2^{12}(U,V) &\sim \int dx q'(x) \left( \lVert (\overline{U}V-\overline{U_0}V_0)x \rVert^2 + \lVert \sigma_{K}(\underline{U}Vx) - \sigma_{K}(\underline{U_0}V_0x) \rVert^2 \right) \\
&\sim  \lVert \overline{U}V-\overline{U_0}V_0 \rVert^2 + \lVert \underline{U}V-\underline{U_0}V_0 \rVert^2 \\
&= \lVert UV-U_0V_0 \rVert^2.
\end{align}
Thus, all we have to compute is the RLCT of $\lVert UV-U_0V_0 \rVert^2$.
This is the same as Theorem \ref{thm-main-multitask}.

(3) In the case when $i=2$ and $j=1$.
Since $(w)_{<M} \to s_M(w)$, $w \in \mathbb{R}^{M-1}$ is analytic isomorphic onto its image,
using the same method as in the case of $i=1$ and $j=2$, we have
\begin{align}
K_2^{21}(U,V)
&\sim \int dx q'(x) \left( \lVert s_M(\overline{U}Vx)-s_M(\overline{U_0}V_0x) \rVert^2 + \lVert (\underline{U}V - \underline{U_0}V_0)x \rVert^2 \right) \\
&\sim \lVert (U)_{<M}V-(U_0)_{<M}V_0 \rVert^2 + \lVert \underline{U}V-\underline{U_0}V_0 \rVert^2 \\
&= \lVert (U)_{<M}V-(U_0)_{<M}V_0 \rVert^2 + \lVert (U)_{>M}V-(U_0)_{>M}V_0 \rVert^2 \\
&= \lVert (U)_{\ne M}V-(U_0)_{\ne M}V_0 \rVert^2.
\end{align}
$(U)_{\ne M} \in \mathrm{M}(M+K-1,H)$ and $(U_0)_{\ne M} \in \mathrm{M}(M+K-1,H_0)$ hold.
Therefore, the RLCT of $\lVert (U)_{\ne M}V-(U_0)_{\ne M}V_0 \rVert^2$ is calculated by assigning $M-1$ to $M$ in Theorem \ref{thm-main-multitask}.
This leads to the following:
\begin{align}
\lambda_2^{21} &= \lambda_2(N,H,M-1,K,H_0), \\
m_2^{21} &= m_2(N,H,M-1,K,H_0).
\end{align}

(4) In the case when $i=2$ and $j=2$, combining the above methods, we have
\begin{align}
K_2^{22}(U,V) &\sim \int dx q'(x) \left( \lVert s_M(\overline{U}Vx)-s_M(\overline{U_0}V_0x) \rVert^2 + \lVert \sigma_{K}(\underline{U}Vx) - \sigma_{K}(\underline{U_0}V_0x) \rVert^2 \right) \\
&\sim \lVert (U)_{<M}V-(U_0)_{<M}V_0 \rVert^2 + \lVert \underline{U}V-\underline{U_0}V_0 \rVert^2 \\
&= \lVert (U)_{\ne M}V-(U_0)_{\ne M}V_0 \rVert^2.
\end{align}
This results in the case when $i=2$ and $j=1$.
Therefore, we obtain
\begin{align}
\lambda_2^{22} &= \lambda_2(N,H,M-1,K,H_0), \\
m_2^{22} &= m_2(N,H,M-1,K,H_0).
\end{align}

Therefore, based on (1), (2), (3), and (4), the theorem is proved.
\end{proof}

Lastly, we prove Proposition \ref{prop-compare}.

\begin{proof}[Proof of Proposition \ref{prop-compare}]
Develop $\lambda_1 - \lambda_2$ and solve $\lambda_1>\lambda_2$ for each cases.
If they are resolved, the opposite case $\lambda_1 \leqq \lambda_2$ can immediately be derived. 

(1) In the case of $M+K+H_0 \leqq N+H$ and $N+H_0 \leqq M+K+H$ and $H+H_0 \leqq N+M+K$. When $N+M+K+H+H_0$ is even, we have
\begin{align}
\lambda_1 - \lambda_2 &= \frac{1}{8}\{4MK+4NK-2(H+H_0)(N+M+K)+(N-M-K)^2+(H+H_0)^2\} \\
&=\frac{1}{8}\{ K^2 +2(M+N-H-H_0)K +N^2+M^2+H^2+H_0^2 \\
&\quad -2HN-2H_0N-2MN-2HM-2H_0M+2HH_0\} \\
&=\frac{1}{8}\{ K^2 +2(M+N-H-H_0)K + (M+N-H-H_0)^2-4MN \} \\
&=\frac{1}{8}\{ (K+M+N-H-H_0)^2 -4MN \}.
\end{align}
The most right hand side of the above equation is a quadratic function of $K$ and its coefficient of the greatest order term is positive.
From this assumption, $H+H_0 \leqq M+N+K$, i.e. $K \geqq H+H_0-M-N$ holds.
Thus, solving $\lambda_1 > \lambda_2$, we obtain
\begin{align}
K &> H+H_0-M-N+2\sqrt{MN} \\
&= H+H_0-(\sqrt{M}-\sqrt{N})^2.
\end{align}
The converse can also be verified as follows:
\begin{align}
\lambda_1 > \lambda_2 \Leftrightarrow K>H+H_0-M-N+(\sqrt{M}-\sqrt{N})^2.
\end{align}
When $N+M+K+H+H_0$ is even, by following the same procedure as shown above, we have
\begin{align}
\lambda_1 - \lambda_2 &=\frac{1}{8}\{ (K+M+N-H-H_0)^2 -4MN-1 \}.
\end{align}
and
\begin{align}
\lambda_1 > \lambda_2 \Leftrightarrow K>H+H_0-M-N+\sqrt{4MN+1}.
\end{align}

(2) In the case of $N+H<M+K+H_0$, we have
\begin{align}
\lambda_1-\lambda_2 &= \frac{1}{2}\{MK+NK-HN-H_0(M+K-H)\} \\
&= \frac{1}{2}\{(M+N-H_0)K-H(N-H_0)-MH_0\}.
\end{align}
Hence, we obtain
\begin{align}
\lambda_1 > \lambda_2 \Leftrightarrow (M+N-H_0)K > H(N-H_0)+MH_0.
\end{align}

(3) In the case of $M+K+H<N+H_0$, we have
\begin{align}
\lambda_1-\lambda_2 &= \frac{1}{2}\{MK+NK-H(M+K)-H_0(N-H)\} \\
&= \frac{1}{2}\{(M+N-H)K-H_0(N-H)-MH\}.
\end{align}
Hence, we obtain
\begin{align}
\lambda_1 > \lambda_2 \Leftrightarrow (M+N-H)K > H_0(N-H)+MH.
\end{align}

(4) In the case of $N+M+K<H+H_0$, we have
\begin{align}
\lambda_1-\lambda_2 &= \frac{1}{2}(MK+NK-NM-NK) \\
&= \frac{1}{2}M(K-N).
\end{align}
Hence, we obtain
\begin{align}
\lambda_1 > \lambda_2 \Leftrightarrow K>N.
\end{align}

Therefore, based on (1), (2), (3), and (4), it can be said that this theorem holds.
\end{proof}

\bibliography{library}
\bibliographystyle{plain}

\end{document}